\pgfplotsset{compat=1.17} 
\def \extended {1}
\DeclareMathOperator*{\argmin}{arg\,min}
\theoremstyle{plain} 
\newtheorem{theorem}{Theorem}
\newtheorem{corollary}{Corollary}
\newtheorem{prop}{Proposition}
\theoremstyle{definition} \newtheorem{remark}{Remark}
\theoremstyle{definition} 
\crefname{assumption}{Assumption}{Assumptions}
\crefname{remark}{Remark}{Remarks}
\crefname{prop}{Proposition}{Propositions}
\crefname{theorem}{Theorem}{Theorems}
\crefname{corollary}{Corollary}{Corollaries}
\crefname{section}{Section}{Sections}
\crefname{appendix}{Appendix}{Appendices}
\crefname{equation}{Equation}{Equations}
\crefname{figure}{Figure}{Figures}
\crefname{lemma}{Lemma}{Lemmas}
\crefname{table}{Table}{Tables}
\begin{document}

\title{Lower Bounds on the MMSE of Adversarially Inferring Sensitive Features} 



\author{%
  \IEEEauthorblockN{Monica Welfert\IEEEauthorrefmark{1}, Nathan Stromberg\IEEEauthorrefmark{1}, Mario Diaz\IEEEauthorrefmark{2}, and Lalitha Sankar\IEEEauthorrefmark{1}}
  \\\IEEEauthorblockA{\IEEEauthorrefmark{1}Arizona State University, USA, \IEEEauthorrefmark{2}IIMAS, Mexico\\
    Email: \{mwelfert, nstrombe, lsankar\}@asu.edu, mario.diaz@sigma.iimas.unam.mx}
    \thanks{Part of this paper has been accepted for presentation at ISIT 2025.}
\thanks{Manuscript received October 26, 2023; revised December 8, 2023.}
}

\markboth{Journal of \LaTeX\ Class Files,~Vol.~1, No.~2, April~2025}%
{Shell \MakeLowercase{\textit{et al.}}: A Sample Article Using IEEEtran.cls for IEEE Journals}

\IEEEpubid{0000--0000~\copyright~2025 IEEE}

\maketitle

\begin{abstract} 
We propose an adversarial evaluation framework for sensitive feature inference based on minimum mean-squared error (MMSE) estimation with a finite sample size and linear predictive models. Our approach establishes theoretical lower bounds on the true MMSE of inferring sensitive features from noisy observations of other correlated features. These bounds are expressed in terms of the empirical MMSE under a restricted hypothesis class and a non-negative error term. The error term captures both the estimation error due to finite number of samples and the approximation error from using a restricted hypothesis class. For linear predictive models, we derive closed-form bounds, which are order optimal in terms of the noise variance, on the approximation error for several classes of relationships between the sensitive and non-sensitive features, including linear mappings, binary symmetric channels, and class-conditional multi-variate Gaussian distributions. We also present a new lower bound that relies on the MSE computed on a hold-out validation dataset of the MMSE estimator learned on finite-samples and a restricted hypothesis class. Through empirical evaluation, we demonstrate that our framework serves as an effective tool for MMSE-based adversarial evaluation of sensitive feature inference that balances theoretical guarantees with practical efficiency.
\end{abstract}

\begin{IEEEkeywords}
Adversarial evaluation, sensitive feature inference, minimum mean-squared error, lower bounds on MMSE.
\end{IEEEkeywords}

\section{Introduction}
\IEEEPARstart{E}{stimating} the minimum mean-squared error (MMSE) of one random variable given another is a fundamental problem in information theory, signal processing, communications, and machine learning. In adversarial settings, the MMSE quantifies the extent to which an adversary can infer a sensitive variable $S$ from an observed feature vector $X$. Understanding these inference capabilities is crucial in assessing the vulnerabilities of shared data representations.

Consider a healthcare company that collects patient data (see \cref{fig:adversarial-evaluation}), including demographics, medical history, and genetic information. Due to regulatory and ethical considerations, the company wants to release a dataset containing only a subset of features while ensuring that a sensitive attribute—such as a genetic predisposition to a disease ($S$)—cannot be easily inferred from the released data. To achieve this, they introduce noise into the raw feature set $X \in \mathbb{R}^d$, yielding $X^\sigma \in \mathbb{R}^d$, before making it available for research or machine learning applications. However, an adversary with access to $X^\sigma$ may still attempt to infer $S$, raising the question: how effectively can an adversary reconstruct the sensitive feature despite the noise? More generally, $X$ can be statistics (e.g., from the US Census) or low-dimensional (e.g., last layer of a deep model) representations of user data; on the other hand, $S$ can be legally protected features, or the presence or absence of individuals in a dataset.  

To address this concern, the company assembles an internal red team, tasked with 
evaluating if the release $X^\sigma$ is informative about $S$.
Such a red team employs a predictive learning model trained on the available $(X^\sigma,S)$ pairs provided to it to estimate the extent to which $S$ can still be inferred. 
Their goal is to establish guarantees on how well an adversary can infer $S$ from $X^\sigma$, providing insight into whether the current noise mechanism effectively reduces adversarial inference.

In general, one cannot and should not make assumptions on adversarial capabilities. Thus, the aforementioned red team should not be viewed as an adversary, rather as a mechanism to provide guarantees on the performance of the strongest adversary. 
To this end, we consider MMSE as a metric for adversarial inference and propose a framework to obtain non-vacuous lower bounds on the inference possible by the strongest adversary, where the latter is capable of implementing any function and knows the joint distribution of the sensitive and disclosed data. 
Such lower bounds are obtained by the red team using a limited number of 
$(X^\sigma,S)$ example pairs (provided by the healthcare company) and using a restricted hypothesis class for predicting $S$ from $X^\sigma$. In fact, there are two possible ways the red team can obtain these lower bounds: (i) using training MSE loss, and (ii) using validation MSE. We explore both in this paper. Our bounds explicitly depend on both the strength of the hypothesis class and the number of samples.

Our primary contribution is establishing lower bounds on the true MMSE of $S$ given the noisy observed $X^\sigma$ when the red team has access to a restricted hypothesis class $\mathcal{H}$, a finite number of training and validation samples $n$, and $m$, respectively. The resulting bounds take the following form:
\begin{equation}    \mathrm{MMSE}(S \vert X^{\sigma}) \geq \mathrm{MSE}(\hat{h}) - \epsilon^\mathcal{H}_{n,m},
\label{eq:general-mmse-lower-bound}
\end{equation}
where $\mathrm{MSE}(\hat{h})$ is the empirical mean-squared error (MSE) computed using samples from either a training set or validation set and an empirically trained model $\hat{h}$ from a hypothesis class $\mathcal{H}$, and
$\epsilon^\mathcal{H}_{n,m}$ is a positive number depending on the size of the training set and/or validation set and the parameters of  $\mathcal{H}$. This work substantially extends our prior results presented in \cite{ISITextenver}.
We list below our contributions while highlighting novelty
relative to \cite{ISITextenver}:

\begin{enumerate}[leftmargin=*]
    \item We introduce a lower bound of the form in \eqref{eq:general-mmse-lower-bound} using the training MSE and decompose $\epsilon^\mathcal{H}_n$ ($m=0$) into two components: (i) a finite sample term dependent on $n$ and (ii) an approximation error term dependent on the parameters of the chosen hypothesis class $\mathcal{H}$ (\cref{thm:mmse-bound-general}). While a version of this result appeared in \cite{ISITextenver}, we strengthen the bound on the finite sample term using large deviation tools, including a new Bernstein-style bound (\cref{prop:eps-c-bounds}) that improves upon the Hoeffding-style bound used in \cite{ISITextenver}.
    \label{contribution-training-bound}
    \item Given that a large model may overfit on the training data, we introduce a novel alternative lower bound of the form in \eqref{eq:general-mmse-lower-bound} using the validation MSE rather than the training MSE and decompose $\epsilon^\mathcal{H}_{n,m}$ into two components: (i) a finite sample and generalization term dependent on $n,m$ and the complexity of $\mathcal{H}$ and (ii) the same approximation error term dependent on the parameters of $\mathcal{H}$ as in the training bound (\cref{thm:mmse-bound-general-val}). We bound the generalization component of the finite sample and generalization term using state-of-the-art (SOTA) generalization bounds \cite{wilson2025deep} (\cref{prop:eps-g-bound}). 
    \item When $\mathcal{H}$ is taken to be the hypothesis class of the sigmoid function composed with a linear function, we obtain a closed-form bound on the approximation error in terms of the statistics of $X^\sigma$ and the optimizer of the true $\mathrm{MMSE}(S|X^\sigma)$ (\cref{prop:linear-bound-epsilon-a}). While this result was presented in \cite{ISITextenver}, we include it here for completeness and now provide a full proof, which was omitted in the prior work.
    \item Under Gaussian noise and binary $S$, we explicitly compute the bound on the approximation error to highlight its dependence on the noise parameter for the following relationships between $S$ and $X$: (i) linear relationship (\cref{prop:delta-a-linear-optimal-example}), (ii)  class-conditional Gaussians with identical covariances (\cref{prop:delta-a-linear-optimal-example}), (iii) binary symmetric channel (\cref{thm:delta-a-bsc-example}), and (iv) class-conditional Gaussians with different covariances (\cref{thm:CCG-vector-general} and  \cref{cor:CCG-vector-diag-cov}). These results were originally presented in \cite{ISITextenver} and are included here for completeness. In contrast to the prior version, we now provide full proofs. Additionally, we extend \cref{cor:CCG-vector-diag-cov} to make explicit the dependence of the  approximation error bound on the dimension of the data.
    \item  {Our final contribution significantly expands upon the empirical illustrations presented in \cite{ISITextenver} by incorporating two additional experimental settings:

(i) Class-conditional Gaussian setting, where we examine the effect of the feature dimension and  sample size on the tightness of our bounds, and also evaluate the performance of neural networks as estimators; and

(ii) Class-conditional Gaussian mixture setting, where we compare the MMSE lower bounds obtained from linear models and neural networks as mixture complexity increases, measured by the number of Gaussian modes per class. We also evaluate the performance of bounds derived from training MSE (or empirical MMSE) versus those based on validation MSE in this setting.


} 
\end{enumerate}

Following each result, we provide either a proof sketch or a complete proof, as appropriate; detailed proofs and additional experimental results can be found in the appendices.

\begin{figure}
    \centering
    \includegraphics[width=0.5\linewidth]{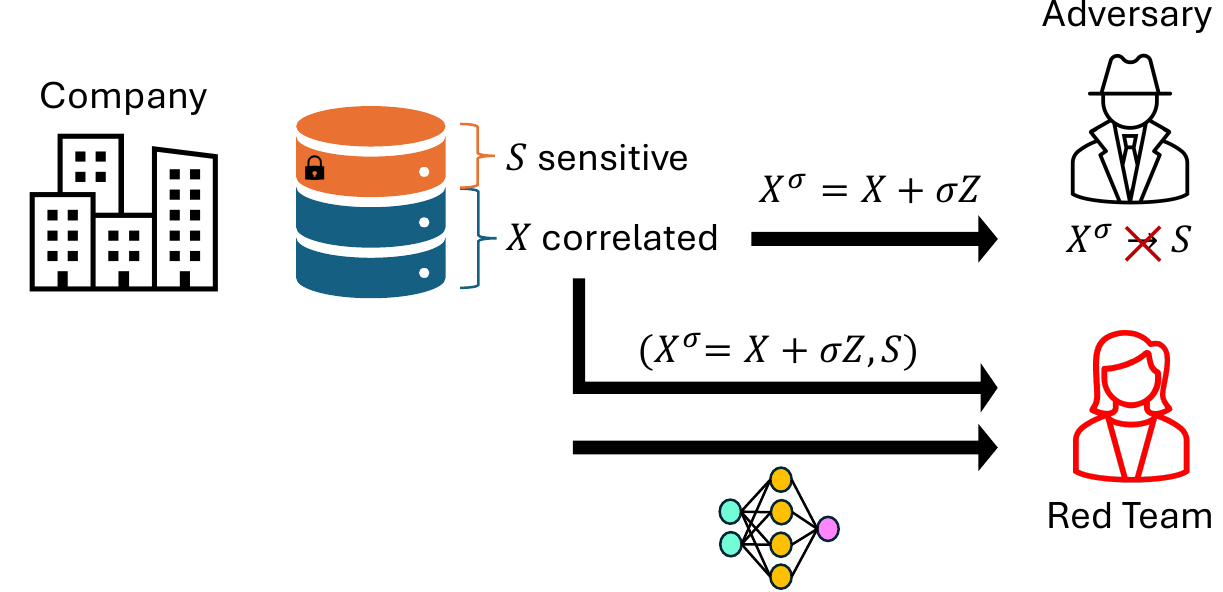}
    \caption{A company aims to release a sanitized version of user data by adding noise to the original features $X$, yielding $X^\sigma$, while protecting a sensitive attribute $S$. To evaluate the risk of adversarial inference, an internal red team is tasked with estimating how well $S$ can be predicted from $X^\sigma$ using provided samples of $(X^\sigma,S)$ pairs and a learning model, providing guarantees on the effectiveness of the noise mechanism.}
    \label{fig:adversarial-evaluation}
\end{figure}

\subsection{Related Work} 

The MMSE achievable by optimal estimation of a random variable given another one plays a key role in statistics and communications \cite{scharf1991statistical,biglieri2007mimo}, and it is closely related to fundamental information-theoretic concepts \cite{guo2005mutual}. As such, the problem addressed in this paper is related to other fundamental problems in information theory and statistics. In the special case when $S = X$, i.e., estimating $X$ from $X^\sigma$, our setting is closely related to the problem of estimation in Gaussian channels as studied in \cite{guo2005mutual,guo2011estimation,wu2011functional}. Zieder \emph{et al.} \cite{ziederpoincare2022} developed lower bounds on the MMSE for a class of additive noise channels from $X$ to $X^\sigma$, including Gaussian, in terms of Poincaré constants. Klartag and Ordentlich \cite{Ordentlich2025} later developed a simpler bound for the Gaussian additive channel also in terms of Poincaré constants. In \cite{jeong2025}, Jeong \emph{et al.} extend and unify different versions of the classical Ziv-Zakai bound. In constrast to these approaches, we problem we consider here differs in two ways: (i) by considering finite samples to lower bounds, and (ii) requires estimating another variable $S \neq X$.

Recent results have focused on methods to estimate MMSE, including using linear \cite{scharf1991statistical}, kernel-based \cite{peinado2017statistical} and polynomial approximations of the conditional expectation~\cite{alghamdi2019mutual,alghamdi2021polynomial,Alghamdi2024-measuring-information}. These works are closely related to ours; 
in fact, a key component of our lower bound includes a term that we refer to as the \textit{approximation error} for any chosen hypothesis class relative to the true estimator in an MMSE sense. The abovementioned papers focus on approximating the true estimator or obtaining bounds on our approximation error. In this paper, we first present upper bounds on this error for specific $(X,S)$ distributions and later consider a general empirical methodology. 




Our work is motivated by the problem of obfuscating sensitive features by releasing noisy non-sensitive features. It is worth mentioning that while we use this obfuscation as a motivation, our results are broad and apply to a variety of communications and signal processing settings where MMSE of an unknown feature needs to be estimated from observed data. That said, in the context of obfuscation or privacy, there are several notions of data privacy designed to capture the risks posed by a variety of adversaries. For example, differential privacy quantifies the membership inference capabilities of an adversary when making queries on a dataset \cite{kairouz2017composition}, maximal $\alpha$-leakage quantifies the capacity of an adversary to infer any (randomized) function of the private attribute \cite{issa2017operational,liao2019tunable}, probability of correctly guessing quantifies the probability of an adversary to guess the private attribute \cite{asoodeh2018estimation}, to name a few notions. In these settings, our work is broadly related to the problem of data-driven estimation of information (leakage) measures, see, e.g., \cite{shamir2010learning,jiao2015minimax,wu2016minimax,diaz2019robustness,belghazi2018mutual,chan2019neural,sreekumar2021non} and references therein. There is also a recent research effort dedicated to understanding the performance of an adversary with practical computational capabilities \cite{jagielski2020auditing,nasr2021adversary}. From this perspective, our results compare the MMSE performance of a finite-capacity predictive model using a restricted hypothesis class and finitely many samples to that of the strongest adversary capable of implementing any function to predict $S$ from $X$ and with knowledge of the joint distribution of the private and disclosed data.



Finally, for the problem of estimating MMSE, existing approaches rely on expressive but computationally intensive models, such as neural networks \cite{diaz2021neural}. However, precise quantification of the bounds using neural networks requires bounding a quantity called the Barron's constant \cite{barron1993universal}, which is generally infeasible to compute. Our approach using simpler computable models for adversarial evaluation (e.g., linear models) circumvents this issue. 


\subsection{Outline}
The remainder of the paper is organized as follows. In \cref{sec:setup}, we set up the problem including relevant definitions. In  \cref{section:mainresults}, we present general lower bounds on $\mathrm{MMSE}(S\vert X^\sigma)$ 
following which, in  \cref{sec:linear-eps-a-bounds}, we present upper bounds on the $\epsilon^\mathcal{H}_{n,m}$ term in \eqref{eq:general-mmse-lower-bound} for specific $(X,S)$ distribution models and linear hypothesis class. Finally, in  \cref{sec:empirical}, we illustrate our results and conclude in  \cref{sec:conclusion}.

\section{Problem Setup}
\label{sec:setup}
Given jointly distributed random variables $X\in\mathbb{R}^{d}$ and $S\in\mathbb{R}$, the minimum mean square error in estimating $S$ given $X$ is defined as
\begin{equation}
\label{eq:DefMMSE}
    \mathrm{MMSE}(S \vert X) \coloneqq \inf_{h \textnormal{ meas.}} \mathbb{E}\left[(S-h(X))^{2}\right],
\end{equation}
where the infimum is taken over all (Borel) measurable functions $h:\mathbb{R}^{d}\to\mathbb{R}$. The infimum in \eqref{eq:DefMMSE} is attained by the conditional expectation of $S$ given $X$, i.e.,
\begin{equation}
\label{eq:MMSEMinimizer}
    \mathrm{MMSE}(S \vert X) = \mathbb{E}\left[(S-\eta(X))^{2}\right],
\end{equation}
where $\eta(X) \stackrel{\textnormal{a.s.}}{=} \mathbb{E}\left[S \vert X\right]$. Note that if $S \stackrel{\textnormal{a.s.}}{=} h_{0}(X)$ for some function $h_{0}:\mathbb{R}^{d}\to\mathbb{R}$, then $\mathrm{MMSE}(S \vert X) = 0$. Also, note that if $X$ and $S$ are independent, then the MMSE is maximal and $\mathrm{MMSE}(S \vert X) = \mathbb{E}\left[(S-\mathbb{E}[S])^{2}\right]$.

There are many metrics for adversarial evaluation. We motivate the use of MMSE in this adversarial context by the following simple example. Consider a binary sensitive feature $S\in\{0,1\}$ and let $h: \mathbb{R}^d \rightarrow \{0,1\}$ be a predictor of $S$ given $X$; for this case, we can show that the MMSE serves as a lower bound on the probability of error in estimating $S$ from $X$ as follows:
\begin{align*}
    P_{\textnormal{error}}(S \vert X) &= \inf_{h:\mathbb{R}^{d}\to\{0,1\}} \mathbb{E}\left[\mathbbm{1}_{S \neq h(X)}\right] \nonumber\\
    &= \inf_{h:\mathbb{R}^{d}\to\{0,1\}} \mathbb{E}\left[(S-h(X))^{2}\right] \nonumber\\
    &\geq \inf_{h \textnormal{ measurable}} \mathbb{E}\left[(S-h(X))^{2}\right] \nonumber\\
    \label{eq:PerrorMMSE} &= \mathrm{MMSE}(S \vert X).
\end{align*}
Thus, for binary $S$, any lower bound on $\mathrm{MMSE}(S \vert X)$ gives rise to a lower bound on $P_{\textnormal{error}}(S \vert X)$. This observation further illustrates the importance of studying lower bounds for the MMSE since accuracy ($1-P_{\textnormal{error}}$) is an oft-used measure in machine learning (ML), including adversarial ML. 

Assume that $X\in\mathbb{R}^{d}$ are usable features/statistics that are correlated with a sensitive attribute $S\in[0,1]$. For example, in a healthcare company, $X$ could represent patient biomarkers while $S$ corresponds to a protected health status (e.g., presence of a genetic disorder). 
To limit the risk of sensitive feature inference, the company may choose to release only a \emph{sanitized} version of $X$. In precise terms, we have a Markov chain $S- X - X^\sigma$ where $S$ represents sensitive information, $X$ represents non-sensitive information, and $X^\sigma$ is a noisy version of $X$. In this work, we focus on the additive Gaussian mechanism, a popular sanitization method in privacy-aware data sharing, frequently studied in the information-theoretic and differential privacy literature, e.g., \cite{dwork2014algorithmic,asoodeh2016privacy,asoodeh2016information}. Given $\sigma>0$, we define
\begin{equation}
    X^{\sigma} \coloneqq X + \sigma Z,
\end{equation}
where $Z \sim \mathcal{N}(0,I)$ is independent of $X$ and $S$.

We define $\eta^\sigma$ as the conditional expectation of $S$ given $X^\sigma$, i.e.,
\begin{equation}
\label{eq:DefEta-general}
    \eta^\sigma(x) \coloneqq \mathbb{E}[S\vert X^\sigma=x], \quad x \in \mathbb R^d.
\end{equation} 
In the case when $S$ is binary, i.e., $S \in \{0,1\}$, let ${f}_{i}$ and  ${f}_{i}^{\sigma}$ be the (conditional) density of $X|S=i$ and $X^\sigma|S=i$, respectively, for $i \in \{0,1\}$. Note that  ${f}_{i}^{\sigma}(x) = (f_{i} \ast K_{\sigma})(x)$, where $\ast$ is the convolution operator and $K_{\sigma}$ is the density of $\sigma Z$. Then the conditional expectation becomes
\begin{equation}
\label{eq:DefEta}
    \eta^\sigma(x) = \frac{pf^\sigma_{1}(x) }{pf^\sigma_{1}(x) + \bar{p}f^\sigma_{0}(x)} = s(\theta^\sigma(x)), \quad x \in \mathbb R^d,
\end{equation}
where $p = \mathbb{P}(S = 1)$, $\bar{p} \coloneqq 1- p$, $s(z) = 1/(1+e^{-z})$ is the sigmoid function, and
\begin{equation}
\label{eq:DefTheta}
    \theta^\sigma(x) = \log\left(\frac{pf^\sigma_{1}(x)}{\bar{p}f^\sigma_{0}(x)} \right).
\end{equation}

To evaluate the effectiveness of this sanitization, a red team within the company is tasked with simulating an adversary attempting to infer $S$ from the noisy $X^\sigma$ using a class of learning models and a dataset of finite size.
Let $\mathcal{H}$ be the hypothesis class associated with a collection of functions $h_\mathcal{H}:\mathbb{R}^{d}\to[0,1]$. We define the MMSE restricted to the hypothesis class $\mathcal{H}$ as follows:
\begin{equation}
    \mathrm{MMSE}^\mathcal{H}(S\vert X^\sigma) \coloneqq \min_{h\in\mathcal{H}} \mathbb E[(S-h(X^\sigma))^2],
    \label{eq:mmse-restricted-h}
\end{equation}
and denote $h^*_\mathcal{H}$ as the attainer of the $\mathrm{MMSE}^\mathcal{H}(S\vert X^\sigma)$, i.e.,
\begin{equation}
    h^*_\mathcal{H} \coloneqq \argmin_{h\in\mathcal{H}} \mathbb E[(S-h(X^\sigma))^2] .
    \label{eq:opt-h}
\end{equation}
Given a dataset $\mathcal{D}=\mathcal{D}_\text{train} \cup \mathcal{D}_\text{val}$ composed of a training set $\mathcal{D}_\text{train} = \{(X^{\sigma}_{i},S_{i})\}_{i=1}^{n}$ and a validation set $\mathcal{D}_\text{val} = \{(X^{\sigma}_{i},S_{i})\}_{i=1}^{m}$ of i.i.d. samples, the empirical mean-squared error (MSE) of a certain model $h \in \mathcal{H}$ is given by:
\begin{equation}
\label{eq:emp-mse}
    \mathrm{MSE}_{y}(h) \coloneqq  \frac{1}{|\mathcal{D}_y|} \sum_{(X^\sigma,S) \in \mathcal{D}_y} (S - h(X^{\sigma}))^{2},
\end{equation}
where $y \in \{\text{train}, \text{val}\}$ and $|\mathcal{D}|$ denotes the cardinality of the set $\mathcal{D}$. We denote a trained model from a hypothesis class $\mathcal{H}$ as 
\begin{equation}
    \hat{h}_\mathcal{H} \coloneqq \argmin_{h\in\mathcal{H}} \, \mathrm{MSE}_\textup{train}(h),
    \label{eq:trained-model}
\end{equation}
i.e., $\hat{h}_\mathcal{H}$ attains the minimum MSE over the training dataset $\mathcal{D}_\text{train}$. As we wish to give guarantees on the obfuscation provided by adding Gaussian noise to $X$, our goal is therefore to establish a (probabilistic) lower bound of the form
\begin{equation}
\label{eq:MainResultPrototype}
   \mathrm{MMSE}(S \vert X^{\sigma}) \geq \mathrm{MSE}_y(\hat{h}_\mathcal{H}) - \epsilon^\mathcal{H}_{y} ,
\end{equation}
where $y \in \{\text{train},\text{val}\}$ and $\epsilon^\mathcal{H}_y$ is a positive number depending on the size of the dataset $\mathcal{D}_y$ and the parameters of the hypothesis class $\mathcal{H}$.

\noindent\textbf{Notation:} For functions, we use $\lVert \cdot \rVert_{2}$ to denote the $2$-norm with respect to the distribution of $X^\sigma$, i.e.,
$\lVert h \rVert_{2}^{2} = \mathbb{E}[ \lvert h(X^\sigma) \rvert^{2}]$. 

\section{General Lower Bounds}\label{section:mainresults}

By definition, $\mathrm{MMSE}(S \vert X^{\sigma})$ represents the smallest expected square-loss achievable by \emph{any} measurable function. This distinguishes our bound in \eqref{eq:MainResultPrototype} from classical statistical learning results, such as Rademacher complexity bounds, where the expected loss is minimized only over a specific hypothesis class $\mathcal{H}$. This distinction necessitates explicit consideration of the approximation error, i.e., how well functions in a hypothesis class $\mathcal{H}$ can approximate the true conditional expectation. The following result provides a lower bound for $\mathrm{MMSE}(S \vert X^{\sigma})$ that depends on three components: (i) the empirical training MSE $\mathrm{MSE}_\textup{train}(\hat{h}_\mathcal{H})$ of the trained model $\hat{h}_\mathcal{H}$ as defined in \eqref{eq:emp-mse} and \eqref{eq:trained-model}, respectively; (ii)
the approximation error incurred when estimating $\eta^\sigma$ in \eqref{eq:DefEta-general} using functions from the chosen hypothesis class $\mathcal{H}$; and (iii)
the finite sample error, which accounts for the uncertainty in learning $h^*_\mathcal{H}$, i.e., the optimal function in $\mathcal{H}$ defined in \eqref{eq:opt-h}, using only the dataset $\mathcal{D}_\text{train}$ of finite size $n$.

\begin{theorem}
Let $n = |\mathcal{D}_\textup{train}|$, and $\mathcal{H} \subseteq \{f:\mathbb R^d \to [0,1] \}$. If $S\in [0,1]$, then
\label{thm:mmse-bound-general}
\begin{equation}
\label{eq:2LNNMainResultRepresentation}
    \mathrm{MMSE}(S \vert X^{\sigma}) \ge   \mathrm{MSE}_\textup{train}(\hat{h}_\mathcal{H}) - \epsilon_C - \epsilon_A,
\end{equation}
where 
\begin{equation}
    \epsilon_C \coloneqq \mathrm{MSE}_\textup{train}({h}^*_\mathcal{H}) - \mathrm{MMSE}^\mathcal{H}(S \vert X^\sigma).
    \label{eq:eps-c}
\end{equation}
denotes the deviation of  $\mathrm{MSE}_\textup{train}({h}^*_\mathcal{H})$ from its mean, and
\begin{equation}
    \epsilon_A \coloneqq \lVert \eta^\sigma - h^*_\mathcal{H} \rVert^2_{2}
    \label{eq:eps-a}
\end{equation}
denotes the error resulting from approximating $\eta^\sigma$ by $h^*_\mathcal{H}$.
\end{theorem}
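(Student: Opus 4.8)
The plan is to prove \eqref{eq:2LNNMainResultRepresentation} by combining the orthogonality principle for conditional expectation, which cleanly separates the true MMSE from the approximation error, with the empirical optimality of $\hat{h}_\mathcal{H}$. The crucial structural fact is that, because $\eta^\sigma(X^\sigma) = \mathbb{E}[S \vert X^\sigma]$, every competing predictor that is a function of $X^\sigma$ is orthogonal to the residual $S - \eta^\sigma(X^\sigma)$, so the restricted MMSE splits additively into the unrestricted MMSE plus the squared approximation distance.

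First I would establish this Pythagorean decomposition for the fixed (non-random) minimizer $h^*_\mathcal{H}$. Writing $S - h^*_\mathcal{H}(X^\sigma) = (S - \eta^\sigma(X^\sigma)) + (\eta^\sigma(X^\sigma) - h^*_\mathcal{H}(X^\sigma))$ and expanding the square, the cross term vanishes,
\begin{equation*}
\mathbb{E}\!\left[(S - \eta^\sigma(X^\sigma))\,(\eta^\sigma(X^\sigma) - h^*_\mathcal{H}(X^\sigma))\right] = 0,
\end{equation*}
since conditioning on $X^\sigma$ makes the second factor deterministic while $\mathbb{E}[S - \eta^\sigma(X^\sigma) \vert X^\sigma] = 0$ by \eqref{eq:DefEta-general}. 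Using \eqref{eq:MMSEMinimizer} and the definition of $\epsilon_A$ in \eqref{eq:eps-a}, and recalling that $h^*_\mathcal{H}$ attains the restricted MMSE, this yields
\begin{equation*}
\mathrm{MMSE}^\mathcal{H}(S \vert X^\sigma) = \mathbb{E}\!\left[(S - h^*_\mathcal{H}(X^\sigma))^2\right] = \mathrm{MMSE}(S \vert X^\sigma) + \epsilon_A .
\end{equation*}
Substituting the definition of $\epsilon_C$ from \eqref{eq:eps-c}, i.e.\ $\mathrm{MMSE}^\mathcal{H}(S \vert X^\sigma) = \mathrm{MSE}_\textup{train}(h^*_\mathcal{H}) - \epsilon_C$, rewrites the true MMSE entirely in terms of the empirical error of the population-optimal hypothesis,
\begin{equation*}
\mathrm{MMSE}(S \vert X^\sigma) = \mathrm{MSE}_\textup{train}(h^*_\mathcal{H}) - \epsilon_C - \epsilon_A .
\end{equation*}
Finally, since $\hat{h}_\mathcal{H}$ is by \eqref{eq:trained-model} the empirical risk minimizer over $\mathcal{H}$, we have $\mathrm{MSE}_\textup{train}(\hat{h}_\mathcal{H}) \le \mathrm{MSE}_\textup{train}(h^*_\mathcal{H})$, so replacing $h^*_\mathcal{H}$ by $\hat{h}_\mathcal{H}$ on the right-hand side only decreases it, delivering \eqref{eq:2LNNMainResultRepresentation}.

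The decomposition itself is elementary; the only regularity needed is square-integrability of $S$ and of the members of $\mathcal{H}$, which holds since $S \in [0,1]$ and $\mathcal{H} \subseteq \{f : \mathbb{R}^d \to [0,1]\}$. The main point requiring care is conceptual rather than computational: the identification of $\epsilon_C$ as a \emph{deviation from the mean} hinges on $\mathbb{E}[\mathrm{MSE}_\textup{train}(h^*_\mathcal{H})] = \mathrm{MMSE}^\mathcal{H}(S \vert X^\sigma)$, which is valid precisely because $h^*_\mathcal{H}$ is fixed before $\mathcal{D}_\textup{train}$ is drawn and the samples are i.i.d., so linearity of expectation applies term-by-term. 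One cannot run the same expectation identity with $\hat{h}_\mathcal{H}$ in place of $h^*_\mathcal{H}$, as $\hat{h}_\mathcal{H}$ depends on the data and its empirical MSE is biased downward; this is exactly why the final step uses the deterministic empirical-optimality inequality rather than an expectation. Note also that, as stated, the bound holds pointwise for every realization of $\mathcal{D}_\textup{train}$, with all randomness absorbed into $\epsilon_C$ and $\mathrm{MSE}_\textup{train}(\hat{h}_\mathcal{H})$; the probabilistic control of $\epsilon_C$ is deferred to the subsequent concentration results (e.g.\ \cref{prop:eps-c-bounds}).
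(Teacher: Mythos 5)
Your proof is correct and takes essentially the same route as the paper's: the identical orthogonality (Pythagorean) argument establishing $\mathrm{MMSE}^\mathcal{H}(S\vert X^\sigma) = \mathrm{MMSE}(S\vert X^\sigma) + \epsilon_A$, combined with the ERM inequality $\mathrm{MSE}_\textup{train}(\hat{h}_\mathcal{H}) \le \mathrm{MSE}_\textup{train}(h^*_\mathcal{H})$ and the definition of $\epsilon_C$; you merely reorder the steps as an exact identity followed by a single inequality, whereas the paper decomposes $\mathrm{MSE}_\textup{train}(\hat{h}_\mathcal{H}) - \mathrm{MMSE}(S\vert X^\sigma)$ into finite-sample and approximation terms and bounds each. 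Your side remarks---that the bound holds pointwise in the realization of $\mathcal{D}_\textup{train}$ and that the mean identity behind $\epsilon_C$ requires $h^*_\mathcal{H}$ to be data-independent---are accurate and consistent with the paper's treatment.
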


\begin{proof}[Proof sketch]\let\qed\relax 
We decompose the difference of the empirical training error $\mathrm{MSE}_\textup{train}(\hat{h}_\mathcal{H})$ and the true $\mathrm{MMSE}(S|X^\sigma)$ into a finite sample error term and an approximation error term. We do so by adding and subtracting the MMSE resulting from using a restricted hypothesis class but with an infinite number of samples as follows:
\begin{equation}
\label{eq:train-decomp}
     \underset{\text{finite sample error}}{\underbrace{\mathrm{MSE}_\textup{train}(\hat{h}_\mathcal{H}) - \mathrm{MMSE}^\mathcal{H}(S \vert X^\sigma)}} + \underset{\text{approximation error}}{\underbrace{\mathrm{MMSE}^\mathcal{H}(S \vert X^\sigma) - \mathrm{MMSE}(S \vert X^\sigma)}},
\end{equation}
where $\mathrm{MMSE}_\mathcal{H}(S|X^\sigma)$ is defined in \eqref{eq:mmse-restricted-h}.
We then bound the finite sample error by $\epsilon_C$, bounding $\mathrm{MSE}_\textup{train}(\hat{h}_\mathcal{H})$ by $\mathrm{MSE}_\textup{train}({h}^*_\mathcal{H})$. This  eliminates the need for a generalization-style bound that involve the size of the hypothesis class. Proof details are in Appendix  \ref{appendix:thm1-proof}.
\end{proof}

\begin{remark}
    A meaningful lower bound on the true MMSE requires non-zero values for $\mathrm{MSE}_\text{train}(\hat{h}_\mathcal{H})$ while requiring  small values for both $\epsilon_C$ and $\epsilon_A$. We explore this dynamic in the sequel. In particular, in the following proposition, we bound $\epsilon_C$ defined in \eqref{eq:eps-c} using large deviations results. 
\end{remark}



\begin{prop}
    Let $n = |\mathcal{D}_\textup{train}|$, $\mathcal{H} \subseteq \{f:\mathbb R^d \to [0,1] \}$, and $\delta \in (0,1)$. If $S \in [0,1]$, then with probability at least $1-\delta$, the following concentration bounds hold for $\epsilon_C$. 
\begin{enumerate}
    \item[(i)] A Hoeffding-style bound gives:
\begin{equation}
    \epsilon_C \le \sqrt{\frac{\log(1/\delta)}{2n}}.
    \label{eq:eps-c-bound-hoeffdings}
\end{equation}
\item[(ii)] Alternatively, a tighter Bernstein-style bound that depends on the sample variance of $W = (S-h^*_\mathcal{H}(X^\sigma))^2$ yields
\begin{equation}
    \epsilon_C \le \sqrt{\frac{2\textup{Var}_n(W)\log(2/\delta)}{n}}+\frac{7\log(2/\delta)}{3(n-1)},
    \label{eq:eps-c-bound-bernstein}
\end{equation}
where
\begin{equation}
    \textup{Var}_n(W) \coloneqq \frac{1}{n(n-1)}\sum_{1\le i < j \le n} (W_i - W_j)^2
\end{equation}
is the sample variance of $W$. 
\end{enumerate}
\label{prop:eps-c-bounds}
\end{prop}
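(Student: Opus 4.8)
The plan is to observe that $\epsilon_C$ is simply the deviation of an empirical mean from its expectation for a single \emph{fixed} bounded random variable, so that both bounds reduce to off-the-shelf concentration inequalities. First I would set $W_i \coloneqq (S_i - h^*_\mathcal{H}(X^\sigma_i))^2$ for $i = 1,\dots,n$. Since $h^*_\mathcal{H}$ in \eqref{eq:opt-h} is the \emph{population} minimizer, it is a fixed function that does not depend on $\mathcal{D}_\textup{train}$; hence the $W_i$ are i.i.d. copies of $W = (S - h^*_\mathcal{H}(X^\sigma))^2$. Comparing \eqref{eq:emp-mse} and \eqref{eq:mmse-restricted-h}, we have $\mathrm{MSE}_\textup{train}(h^*_\mathcal{H}) = \frac{1}{n}\sum_{i=1}^n W_i$ and $\mathrm{MMSE}^\mathcal{H}(S\vert X^\sigma) = \mathbb{E}[W]$, so that $\epsilon_C = \frac{1}{n}\sum_{i=1}^n W_i - \mathbb{E}[W]$. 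The key structural fact I would record up front is that $S \in [0,1]$ and $h^*_\mathcal{H}$ mapping into $[0,1]$ force $|S - h^*_\mathcal{H}(X^\sigma)| \le 1$, hence $W \in [0,1]$ almost surely; this boundedness drives both parts.

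For part (i), I would apply the one-sided Hoeffding inequality to the i.i.d. variables $W_i \in [0,1]$: for every $t > 0$, $\mathbb{P}(\frac{1}{n}\sum_i W_i - \mathbb{E}[W] \ge t) \le e^{-2nt^2}$. Equating the right-hand side with $\delta$ and solving gives $t = \sqrt{\log(1/\delta)/(2n)}$, which is exactly \eqref{eq:eps-c-bound-hoeffdings}. Only the upper tail is needed, since the goal is to upper bound $\epsilon_C$ rather than to control both deviations.

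For part (ii), I would invoke the empirical (Maurer--Pontil) Bernstein inequality for i.i.d. variables in $[0,1]$, whose penalty term is precisely the unbiased pairwise sample variance $\textup{Var}_n(W)$. The one step that needs care, and which I regard as the main obstacle, is the \emph{direction} of the inequality: the standard statement controls $\mathbb{E}[W] - \frac{1}{n}\sum_i W_i$, whereas $\epsilon_C$ is the opposite deviation $\frac{1}{n}\sum_i W_i - \mathbb{E}[W]$. I would resolve this by applying the inequality to the reflected variables $W_i' \coloneqq 1 - W_i$, which again lie in $[0,1]$ and, crucially, have the same pairwise sample variance since $(W_i' - W_j')^2 = (W_i - W_j)^2$, so that $\textup{Var}_n(W') = \textup{Var}_n(W)$. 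The resulting bound on $\mathbb{E}[W'] - \frac{1}{n}\sum_i W_i'$ translates, via $W' = 1 - W$, into a bound on $\frac{1}{n}\sum_i W_i - \mathbb{E}[W] = \epsilon_C$ with identical constants, yielding \eqref{eq:eps-c-bound-bernstein}. The only remaining bookkeeping is to confirm that the variance penalty in the cited inequality matches $\textup{Var}_n(W)$ as defined in the proposition, which holds by construction.
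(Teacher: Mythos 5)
Your proposal is correct and takes essentially the same route as the paper's proof, which simply invokes Hoeffding's inequality for part (i) and the empirical Bernstein inequality of \cite[Thm.~4]{maurer-bernstein-2009} for part (ii), both justified by $W \in [0,1]$. Your additional reflection step $W' = 1 - W$ validly supplies the deviation-direction detail that the paper leaves implicit (the cited inequality controls $\mathbb{E}[W] - \frac{1}{n}\sum_i W_i$ while $\epsilon_C$ is the opposite deviation), and your observation that $\textup{Var}_n(W') = \textup{Var}_n(W)$ makes the translation exact.
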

\begin{proof}\let\qed\relax 
The bound in \eqref{eq:eps-c-bound-hoeffdings} follows directly from applying Hoeffding's inequality \cite[Sec.~4.2]{shalev2014understanding} since $W \in [0,1]$. The bound in \eqref{eq:eps-c-bound-bernstein} follows from an application of an empirical Bernstein's inequality \cite[Thm. 4]{maurer-bernstein-2009} again since $W \in [0,1]$.
\end{proof}

\begin{remark}
The bound on $\epsilon_C$ in \eqref{eq:eps-c-bound-bernstein} is generally tighter than that in \eqref{eq:eps-c-bound-hoeffdings}, especially for small $n$. However, it depends on knowledge of $h^*_\mathcal{H}$, which may limit its practical applicability. 
\end{remark}

It is useful to note that the approximation error term $\epsilon_A$ is a statistical quantity; thus, it requires either a large number of examples to estimate with high fidelity or a sufficiently rich hypothesis class $\mathcal{H}$ to make it small. 
In our adversarial evaluation setting, the red team evaluator gets to choose the hypothesis class $\mathcal{H}$. This choice determines how large or small one can make the approximation error.  Choosing a large hypothesis class, e.g., neural networks of a chosen width and depth, can significantly reduce $\epsilon_A$; however,  such a larger hypothesis class has the potential to overfit the data, thereby driving $\mathrm{MSE}_\text{train}(\hat{h}_\mathcal{H})$ 
to 0, and leading to vacuous lower bounds. 
To address this, we consider using the validation loss $\mathrm{MSE}_\text{val}(\hat{h}_\mathcal{H})$, defined in  \eqref{eq:emp-mse} with $y=\text{val}$, which generally remains non-zero even when the training loss is minimized. Establishing a lower bound using the validation MSE, however, requires generalization bounds. The following theorem provides such a lower bound.

\begin{theorem}
Let $n = |\mathcal{D}_\textup{train}|$, $m = |\mathcal{D}_\textup{val}|$, and $\mathcal{H} \subseteq \{f:\mathbb R^d \to [0,1] \}$. If $S\in [0,1]$, then
\label{thm:mmse-bound-general-val}
\begin{equation}
\label{eq:mmse-lower-bound-val-error}
    \mathrm{MMSE}(S \vert X^{\sigma})\geq \mathrm{MSE}_\textup{val}(\hat{h}_\mathcal{H}) - \Tilde{\epsilon}_C - \epsilon_G - \epsilon_C- \epsilon_A,
\end{equation}
where
\begin{equation}
    \Tilde{\epsilon}_C \coloneqq \mathrm{MSE}_\textup{val}(\hat{h}_\mathcal{H}) - \mathbb E[(S-\hat{h}_\mathcal{H}(X^\sigma))^2]
    \label{eq:eps-c-tilde}
\end{equation}
denotes the deviation of $\mathrm{MSE}_\textup{val}(\hat{h}_\mathcal{H})$ from its mean,
\begin{equation}
    \epsilon_G \coloneqq \mathrm{MSE}_\textup{train}(\hat{h}_\mathcal{H}) - \mathbb{E}[(S-\hat{h}_\mathcal{H}(X^\sigma))^2].
    \label{eq:eps-g}
\end{equation}
denotes the generalization error of the trained model $\hat{h}_\mathcal{H}$, $\epsilon_C$ is defined in \eqref{eq:eps-c} and $\epsilon_A$ is defined in \eqref{eq:eps-a}.
\end{theorem}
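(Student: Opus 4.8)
The plan is to reproduce the telescoping behind \cref{thm:mmse-bound-general}, but now routing the comparison through the \emph{validation} error and through the population risk of the trained model. Abbreviate the population risk of the data-dependent model as $R(\hat h_\mathcal{H}) \coloneqq \mathbb E[(S-\hat h_\mathcal{H}(X^\sigma))^2]$. First I would write the exact identity obtained by inserting and cancelling $R(\hat h_\mathcal{H})$ and $\mathrm{MSE}_\textup{train}(\hat h_\mathcal{H})$:
\begin{align*}
\mathrm{MSE}_\textup{val}(\hat h_\mathcal{H}) - \mathrm{MMSE}(S\vert X^\sigma)
&= \big[\mathrm{MSE}_\textup{val}(\hat h_\mathcal{H}) - R(\hat h_\mathcal{H})\big] \\
&\quad + \big[R(\hat h_\mathcal{H}) - \mathrm{MSE}_\textup{train}(\hat h_\mathcal{H})\big] \\
&\quad + \big[\mathrm{MSE}_\textup{train}(\hat h_\mathcal{H}) - \mathrm{MMSE}(S\vert X^\sigma)\big].
\end{align*}
The first bracket is exactly $\Tilde\epsilon_C$ by \eqref{eq:eps-c-tilde}; the second bracket is the generalization gap $R(\hat h_\mathcal{H}) - \mathrm{MSE}_\textup{train}(\hat h_\mathcal{H})$, i.e.\ $-\epsilon_G$ by \eqref{eq:eps-g}. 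These two terms account, respectively, for the fluctuation of the validation average about the population risk and for the gap between the training error and the population risk of the trained model.

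For the third bracket I would invoke verbatim the argument already used for \cref{thm:mmse-bound-general}: since $\hat h_\mathcal{H}$ minimizes the empirical training loss over $\mathcal H$, we have $\mathrm{MSE}_\textup{train}(\hat h_\mathcal{H}) \le \mathrm{MSE}_\textup{train}(h^*_\mathcal{H})$, and then adding and subtracting $\mathrm{MMSE}^\mathcal{H}(S\vert X^\sigma)$ gives
\[
\mathrm{MSE}_\textup{train}(\hat h_\mathcal{H}) - \mathrm{MMSE}(S\vert X^\sigma) \le \underbrace{\big[\mathrm{MSE}_\textup{train}(h^*_\mathcal{H}) - \mathrm{MMSE}^\mathcal{H}(S\vert X^\sigma)\big]}_{\epsilon_C} + \underbrace{\big[\mathrm{MMSE}^\mathcal{H}(S\vert X^\sigma) - \mathrm{MMSE}(S\vert X^\sigma)\big]}_{\epsilon_A},
\]
where the last identification uses the Pythagorean/orthogonality property of the conditional expectation, $\mathrm{MMSE}^\mathcal{H}(S\vert X^\sigma) - \mathrm{MMSE}(S\vert X^\sigma) = \lVert \eta^\sigma - h^*_\mathcal{H}\rVert_2^2 = \epsilon_A$. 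Substituting the three brackets into the identity and bounding the generalization gap then yields the lower bound stated in the theorem, with $\Tilde\epsilon_C,\epsilon_G,\epsilon_C,\epsilon_A$ forming the subtracted error budget.

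The genuinely new ingredient, and the step I expect to be the main obstacle, is controlling the generalization term coming from the second bracket. In \cref{thm:mmse-bound-general} we deliberately avoided any dependence on the richness of $\mathcal H$ by never estimating the risk of $\hat h_\mathcal{H}$ directly — we only compared it, on the same training sample, against the fixed $h^*_\mathcal{H}$. Here the validation average estimates the population risk of the \emph{data-dependent} minimizer $\hat h_\mathcal{H}$, so we must genuinely bound $R(\hat h_\mathcal{H}) - \mathrm{MSE}_\textup{train}(\hat h_\mathcal{H})$, a quantity that depends on the complexity of $\mathcal H$ and on $n$; for this I would invoke the SOTA generalization bound of \cref{prop:eps-g-bound}. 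The two remaining deviation terms are routine: $\Tilde\epsilon_C$ is a concentration of an average of $m$ bounded validation losses handled by the Hoeffding/Bernstein estimates of \cref{prop:eps-c-bounds}, and $\epsilon_C$ is controlled exactly as in \cref{thm:mmse-bound-general}. Two points require care: the sign bookkeeping of $\epsilon_G$ in the final rearrangement, and the independence structure needed for $\Tilde\epsilon_C$ — the validation losses are i.i.d.\ and bounded only conditionally on $\hat h_\mathcal{H}$, so the concentration step should be applied to the validation set treating the (training-determined) $\hat h_\mathcal{H}$ as fixed.
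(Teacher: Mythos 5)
Your proof is correct and takes essentially the same route as the paper's: the same telescoping through the population risk $\mathbb E[(S-\hat h_\mathcal{H}(X^\sigma))^2]$ and $\mathrm{MSE}_\textup{train}(\hat h_\mathcal{H})$ (the paper merely groups the terms via $\mathrm{MMSE}^\mathcal{H}(S\vert X^\sigma)$ first and then splits), the same use of $\mathrm{MSE}_\textup{train}(\hat h_\mathcal{H})\le \mathrm{MSE}_\textup{train}(h^*_\mathcal{H})$ to avoid a uniform-convergence argument for $\epsilon_C$, the same orthogonality identification of the approximation error with $\epsilon_A$, and the same observation that $\Tilde\epsilon_C$ concentrates because the validation set is independent of the training-determined $\hat h_\mathcal{H}$. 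The sign issue you flag is a typo in the paper rather than a flaw in your argument: the paper's appendix proof (and \cref{prop:eps-g-bound}, which controls the population-minus-training direction) actually uses $\epsilon_G = \mathbb E[(S-\hat h_\mathcal{H}(X^\sigma))^2] - \mathrm{MSE}_\textup{train}(\hat h_\mathcal{H})$, i.e., the negative of \eqref{eq:eps-g} as printed, and under that intended convention your second bracket is exactly $+\epsilon_G$ and your final inequality coincides with \eqref{eq:mmse-lower-bound-val-error}.
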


\begin{proof}[Proof sketch]\let\qed\relax 
We decompose the difference of the empirical validation error $\mathrm{MSE}_\textup{val}(\hat{h}_\mathcal{H})$ and the true $\mathrm{MMSE}(S|X^\sigma)$ into a finite sample and generalization error term and an approximation error term. We do so by adding and subtracting the MMSE that results from using a restricted hypothesis class but with an infinite number of samples as follows:
\begin{equation}
\label{eq:val-decomp}
     \underset{\text{finite sample + generalization error}}{\underbrace{\mathrm{MSE}_\textup{val}(\hat{h}_\mathcal{H}) - \mathrm{MMSE}^\mathcal{H}(S \vert X^\sigma)}} + \underset{\text{approximation error}}{\underbrace{\mathrm{MMSE}^\mathcal{H}(S \vert X^\sigma) - \mathrm{MMSE}(S \vert X^\sigma)}},
\end{equation}
where $\mathrm{MMSE}_\mathcal{H}(S|X^\sigma)$ is defined in \eqref{eq:mmse-restricted-h}. We then further decompose the finite sample + generalization term into the following terms by adding and subtracting the mean of $\mathrm{MSE}_\textup{val}(\hat{h}_\mathcal{H})$: (i) the concentration term $\epsilon_C$ defined in \eqref{eq:eps-c}, and (ii) a second term that we bound by the sum of another concentration term $\Tilde{\epsilon}_C$ defined in \eqref{eq:eps-c-tilde} and the generalization error term $\epsilon_G$ defined in \eqref{eq:eps-g}. The proof details are in Appendix \ref{appendix:mmse-bound-general-val-proof}.
\end{proof}

\begin{remark}
Both the training loss bound in  \cref{thm:mmse-bound-general} and the validation loss bound in  \cref{thm:mmse-bound-general-val} share the same approximation error term, $\epsilon_A$, defined in \eqref{eq:eps-a}. However, unlike the bound in  \cref{thm:mmse-bound-general}, which depends solely on the training loss, the bound in  \cref{thm:mmse-bound-general-val} is based on the validation loss of the \emph{trained} model. As a result, it depends on: (i) the size of the training set $\mathcal{D}_\text{train}$ through the generalization error term $\epsilon_G$ defined in \eqref{eq:eps-g} and the large deviation term ${\epsilon_C}$ defined in \eqref{eq:eps-c},  and (ii) the size of the validation set $\mathcal{D}_\text{val}$  through the large deviation term $\Tilde{\epsilon}_C$. 
Leveraging the validation loss rather than the training loss enables the use of richer hypothesis classes. However, this flexibility comes at the cost of the additional generalization term ${\epsilon}_G$, which captures the complexity of the model $\hat{h}_\mathcal{H}$. Consequently, there exists a fundamental tradeoff between the training loss bound in  \cref{thm:mmse-bound-general} and the validation loss bound in  \cref{thm:mmse-bound-general-val}. While the former offers a simpler bound that depends only on the training data, the latter accommodates more expressive hypothesis classes at the expense of an additional complexity-dependent generalization term.
\end{remark}

There are various generalization bounds that can be used to control the generalization error term 
$\epsilon_G$ in \eqref{eq:eps-g}. Traditional generalization bounds, based on measures such as Rademacher complexity \cite{bartlett2002rademacher} or VC dimension \cite{vapnik1998adaptive}, often yield loose or vacuous estimates, particularly for large models like neural networks. More recently, tighter bounds have been derived using Kolmogorov complexity \cite{wilson2025deep}, which quantifies a model’s complexity by the length (in bits) of the shortest program that represents it under a given coding scheme. We present one such bound in the following proposition.

\begin{prop}
Let $n = |\mathcal{D}_\textup{train}|$, $\mathcal{H} \subseteq \{f:\mathbb R^d \to [0,1] \}$, and $\delta \in (0,1)$. If $S\in [0,1]$, then with probability at least $1-\delta$
\begin{equation}
    \epsilon_G \le \sqrt{\frac{C( \hat{h}_\mathcal{H})\log(2) +2\log(C( \hat{h}_\mathcal{H}))+\log(1/\delta)}{{2n}}},
    \label{eq:eps-g-bound}
\end{equation}
where $C(h)$ denotes the number of bits required to represent hypothesis $h$ using some pre-specified coding.
\label{prop:eps-g-bound}
\end{prop}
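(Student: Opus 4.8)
The plan is to prove a uniform (over the hypothesis class, weighted by description length) one-sided concentration inequality and then instantiate it at the data-dependent minimizer $\hat h_\mathcal{H}$; this is the classical Occam's-razor / minimum-description-length argument underlying the generalization bound of \cite{wilson2025deep}. The starting observation is that because $S\in[0,1]$ and every $h\in\mathcal{H}$ maps into $[0,1]$, the per-sample loss $(S-h(X^\sigma))^2$ lies in $[0,1]$. Hence, for any \emph{fixed} $h$, $\mathrm{MSE}_\textup{train}(h)$ is an average of $n$ i.i.d.\ bounded random variables with mean $\mathbb{E}[(S-h(X^\sigma))^2]$, and Hoeffding's inequality \cite[Sec.~4.2]{shalev2014understanding} gives, for each $t>0$, $\mathbb{P}\big(\mathrm{MSE}_\textup{train}(h)-\mathbb{E}[(S-h(X^\sigma))^2]\ge t\big)\le e^{-2nt^2}$.

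The difficulty is that $\hat h_\mathcal{H}$ is selected using the data, so Hoeffding cannot be applied to it directly; I would handle this through a prior-weighted union bound over all hypotheses the coding scheme can represent. Concretely, I would convert the code lengths $C(\cdot)$ into a sub-probability distribution $P$ over the (countable) set of representable hypotheses. The key step, and the main obstacle, is verifying Kraft's inequality in the form $\sum_{h} 2^{-(C(h)+2\log_2 C(h))}\le 1$: writing the code in a self-delimiting (prefix-free) manner requires prepending the description length, which costs an additional $\approx 2\log_2 C(h)$ bits, and only after this overhead does the codeword assignment satisfy Kraft. Equivalently, one may take $P(h)\propto 2^{-C(h)}/C(h)^2$ and check $\sum_h 2^{-C(h)}/C(h)^2 \le \sum_{\ell\ge 1} 2^{\ell}\,2^{-\ell}/\ell^2 = \pi^2/6$, using that at most $2^\ell$ hypotheses admit a length-$\ell$ code. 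Getting this accounting right is precisely what produces the $2\log C(\hat h_\mathcal{H})$ term in \eqref{eq:eps-g-bound}.

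With $P$ in hand, I would allocate failure probability $\delta_h=\delta\,P(h)$ to each hypothesis and set the deviation threshold $t_h=\sqrt{(\log(1/P(h))+\log(1/\delta))/(2n)}$, so that $e^{-2n t_h^2}=\delta_h$. Summing the per-hypothesis failure events and using $\sum_h P(h)\le 1$, a union bound yields that with probability at least $1-\delta$, \emph{simultaneously for all representable} $h$, $\mathrm{MSE}_\textup{train}(h)-\mathbb{E}[(S-h(X^\sigma))^2]\le t_h$. Converting $\log(1/P(h))$ from bits to nats gives $\log(1/P(h))=C(h)\log 2 + 2\log C(h)$ (since the $2\log_2 C(h)$-bit self-delimiting overhead equals $2\log C(h)$ nats), which matches the numerator of \eqref{eq:eps-g-bound}.

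Finally, since $\hat h_\mathcal{H}$ is one of the representable hypotheses, I would instantiate the uniform bound at $h=\hat h_\mathcal{H}$ to obtain $\epsilon_G=\mathrm{MSE}_\textup{train}(\hat h_\mathcal{H})-\mathbb{E}[(S-\hat h_\mathcal{H}(X^\sigma))^2]\le t_{\hat h_\mathcal{H}}=\sqrt{(C(\hat h_\mathcal{H})\log 2 + 2\log C(\hat h_\mathcal{H})+\log(1/\delta))/(2n)}$, which is exactly \eqref{eq:eps-g-bound}. The only nonroutine ingredient is the description-length/Kraft bookkeeping described above; the remaining steps (Hoeffding per hypothesis, the weighted union bound, and the final substitution) are standard.
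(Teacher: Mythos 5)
Your proposal is correct in substance, but it is worth noting that the paper does not actually prove this proposition itself: its entire proof is a citation of the generalization bound in \cite[Thm.~3.1]{wilson2025deep} together with the Kolmogorov-complexity estimate in \cite[eq.~(4)]{wilson2025deep}. You instead re-derive both ingredients from scratch via the standard Occam/MDL route --- per-hypothesis one-sided Hoeffding, a description-length-weighted union bound over the countable set of representable hypotheses, and instantiation at the data-dependent $\hat h_\mathcal{H}$ --- which is exactly the mechanism behind the cited theorem. Your write-up is therefore best read as a self-contained proof of the result the paper outsources, and it has the merit of making explicit two hypotheses the paper leaves implicit: that the representable hypotheses form a countable set, and that $\hat h_\mathcal{H}$ is itself representable under the fixed coding scheme. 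Note also that the direction of your Hoeffding application ($\mathrm{MSE}_\textup{train}$ exceeding its mean) matches the definition of $\epsilon_G$ in \eqref{eq:eps-g} as literally stated; since Hoeffding is symmetric, the same argument covers the opposite sign convention that the appendix proof of Theorem~2 actually uses, at the same confidence level.

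One constant-level slip should be fixed. Your ``key step'' $\sum_h 2^{-(C(h)+2\log_2 C(h))}\le 1$ is not established by your own computation: counting at most $2^{\ell}$ hypotheses per code length $\ell$ gives $\sum_{\ell\ge 1}\ell^{-2}=\pi^2/6>1$, so the weights you define form a measure of total mass up to $\pi^2/6$, and the union bound then only yields failure probability $\delta\,\pi^2/6$ rather than $\delta$. Either normalize, taking $P(h)\propto 2^{-C(h)}/C(h)^2$ with constant $6/\pi^2$, which adds $\log(\pi^2/6)\approx 0.5$ nats to the numerator of \eqref{eq:eps-g-bound}, or use an honest prefix-free length header (e.g., Elias gamma, costing $2\lfloor\log_2 C(h)\rfloor+1$ bits), for which Kraft's inequality holds exactly but which adds $\log 2$ nats. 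Both repairs perturb \eqref{eq:eps-g-bound} by an additive $O(1)$ inside the square root; since the complexity estimate cited by the paper likewise holds only up to an additive constant absorbed into the coding scheme, this is a bookkeeping caveat rather than a flaw in the approach --- but as written, your argument proves \eqref{eq:eps-g-bound} with that extra additive constant in the numerator, not the inequality verbatim.
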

\begin{proof}\let\qed\relax 
    The result follows by applying the generalization bound from \cite[Thm. 3.1]{wilson2025deep}, together with the Kolmogorov complexity bound given in \cite[eq. (4)]{wilson2025deep}.
\end{proof}

Leveraging \cref{prop:eps-c-bounds,prop:eps-g-bound}, we can now clarify the $\epsilon_{(\cdot)}$ terms in \eqref{eq:mmse-lower-bound-val-error} of  \cref{thm:mmse-bound-general-val}. The concentration term $\epsilon_C$ in \eqref{eq:mmse-lower-bound-val-error} can be bounded using  \cref{prop:eps-c-bounds}. The same proposition can also be used to bound $\Tilde{\epsilon}_C$ by substituting $n$ with $m$ and replacing 
$h^*_\mathcal{H}$ with $\hat{h}_\mathcal{H}$. Importantly, the Bernstein-style bound in \eqref{eq:eps-c-bound-bernstein} therefore applies to $\Tilde{\epsilon}_C$ without requiring knowledge of $h^*_\mathcal{H}$, since it depends only on the trained model 
$\hat{h}_\mathcal{H}$. Therefore, the Bernstein bound is preferable and should be used whenever bounding $\Tilde{\epsilon}_C$. Finally, when combining bounds on $\epsilon_C$, $\Tilde{\epsilon}_C$ and $\epsilon_G$, applying the union bound where each individual bound is assured with the confidence parameter $\delta/3$ ensures that the total failure probability remains bounded by $\delta$.

\begin{remark}
If $\mathcal{D}_\text{train}$ is sufficiently large and the relationship between the $S$ and $X$ is complex enough that the trained model  $\hat{h}_\mathcal{H}$ does not overfit—i.e., $\mathrm{MSE}_\text{train}(\hat{h}_\mathcal{H})$ remains nonzero even for a large hypothesis class $\mathcal{H}$—then the training loss bound in  \cref{thm:mmse-bound-general} is likely tighter than the validation loss bound in  \cref{thm:mmse-bound-general-val}, as the latter incurs an additional generalization penalty. Conversely, if $\hat{h}_\mathcal{H}$ overfits the training data and reducing the hypothesis class $\mathcal{H}$ would make 
$\epsilon_A \coloneqq \|\eta^\sigma - h^*_\mathcal{H}\|$ excessively large, the validation loss bound provides an alternative. However, since most generalization bounds tend to be vacuous unless the dataset size ($n,m$) is large enough to offset the complexity term $C(\hat{h}_\mathcal{H})$ in \eqref{eq:eps-g-bound}, the validation bound may not yet offer a tighter alternative to the training bound. 
\end{remark}

\section{Lower Bounds for Linear Hypothesis Class}
\label{sec:linear-eps-a-bounds}

We now focus our attention on the approximation error term $\epsilon_A$ defined in \eqref{eq:eps-a}. In order to bound this term, we consider the simple setting where $\mathcal{H}$ is the hypothesis class of the sigmoid function composed with linear functions. In the following result, we show that we can obtain a closed-form bound on $\epsilon_A$ in terms of the statistics of $\theta^\sigma(X^\sigma)$ and $X^\sigma$.
\begin{prop}
    Let $\mathcal{H}_{L}$ denote the hypothesis class associated with linear functions $\theta_L:\mathbb{R}^{d}\to\mathbb{R}$ of the form
\begin{equation}
\label{eq:DefHk}
    \theta_L(x) = a^\mathrm{T}x + b,
\end{equation}
where $a\in\mathbb{R}^{d}$ and $b\in\mathbb{R}$. Then there exists $\theta_L^* = \argmin_{\theta_L \in \mathcal{H}_L} \| \theta^\sigma - \theta_L \|^2_2$ with parameters
\begin{align}
a^* &= \textup{Var}(X^\sigma)^{-1}\textup{Cov}(X^\sigma,\theta^\sigma(X^\sigma)), \label{eq:theta-l-weight} 
 \\ 
b^* &= \mathbb E[\theta^\sigma(X^\sigma)] - (a^*)^T\mathbb E[X^\sigma]. \label{eq:theta-l-bias}
\end{align}
such that
\begin{align}
    \epsilon_A
    &\le \frac{1}{4} \left[\textup{Var}(\theta^\sigma(X^\sigma)) - \textup{Cov}(\theta^\sigma(X^\sigma),X^\sigma)\textup{Var}(X^\sigma)^{-1}\textup{Cov}(X^\sigma,\theta^\sigma(X^\sigma)) \right].
    \label{eq:delta-a-decomp}
\end{align}
\label{prop:linear-bound-epsilon-a}
\end{prop}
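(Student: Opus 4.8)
The plan is to reduce $\epsilon_A$ to a best-linear-approximation problem on the logit scale and then solve that problem in closed form. Three ingredients drive the argument: the orthogonality of the conditional expectation, the Lipschitz continuity of the sigmoid $s$, and the normal equations for multivariate least squares.

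First I would exploit that $\eta^\sigma(X^\sigma)=\mathbb E[S\vert X^\sigma]$ is the $L^2$ projection of $S$ onto (measurable) functions of $X^\sigma$. Writing $S-h(X^\sigma)=\big(S-\eta^\sigma(X^\sigma)\big)+\big(\eta^\sigma(X^\sigma)-h(X^\sigma)\big)$ and noting that $\eta^\sigma-h$ is itself a function of $X^\sigma$, the cross term vanishes and yields the Pythagorean identity
\begin{equation*}
\mathbb E\big[(S-h(X^\sigma))^2\big]=\mathrm{MMSE}(S\vert X^\sigma)+\lVert \eta^\sigma-h\rVert_2^2
\end{equation*}
for every $h\in\mathcal H$. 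Consequently $h^*_{\mathcal H}$ minimizes $\lVert \eta^\sigma-h\rVert_2^2$, so that, using $\eta^\sigma=s\circ\theta^\sigma$ and $h=s\circ\theta_L$, we obtain $\epsilon_A=\min_{\theta_L\in\mathcal H_L}\lVert s(\theta^\sigma(X^\sigma))-s(\theta_L(X^\sigma))\rVert_2^2$.

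Next, because optimizing over the composed map $s\circ\theta_L$ is not convex in $(a,b)$ and has no closed form, I would upper-bound $\epsilon_A$ by evaluating at the explicit candidate $\theta_L^*$ defined as the linear least-squares fit of $\theta^\sigma(X^\sigma)$ by $X^\sigma$, and then transfer back to the probability scale using the bounded slope of the sigmoid. Since $s'(z)=s(z)(1-s(z))\le\tfrac14$, the mean value theorem gives $(s(u)-s(v))^2\le\tfrac{1}{16}(u-v)^2$, hence $\epsilon_A\le\tfrac{1}{16}\lVert\theta^\sigma-\theta_L^*\rVert_2^2$, which in particular is at most the $\tfrac14\lVert\theta^\sigma-\theta_L^*\rVert_2^2$ claimed in \eqref{eq:delta-a-decomp}. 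It then remains to evaluate $\lVert\theta^\sigma-\theta_L^*\rVert_2^2=\min_{a,b}\mathbb E[(\theta^\sigma(X^\sigma)-a^{\mathrm T}X^\sigma-b)^2]$: this is an ordinary multivariate least-squares problem, so setting the gradients in $a$ and $b$ to zero produces the normal equations whose solution is exactly $a^*,b^*$ in \eqref{eq:theta-l-weight}--\eqref{eq:theta-l-bias}, and substituting back gives the residual $\mathrm{Var}(\theta^\sigma(X^\sigma))-\mathrm{Cov}(\theta^\sigma(X^\sigma),X^\sigma)\mathrm{Var}(X^\sigma)^{-1}\mathrm{Cov}(X^\sigma,\theta^\sigma(X^\sigma))$. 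Here I would record that $\mathrm{Var}(X^\sigma)=\mathrm{Var}(X)+\sigma^2 I\succeq\sigma^2 I\succ0$ since $X^\sigma=X+\sigma Z$ with $\sigma>0$ and $Z$ independent of $X$, so the inverse is well defined.

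The main obstacle is conceptual rather than computational: the attainer $h^*_{\mathcal H}$ of $\epsilon_A$ is the best approximation of $\eta^\sigma$ among maps of the form $s\circ\theta_L$, whereas a closed form is available only for the best \emph{linear} fit $\theta_L^*$ of $\theta^\sigma$ on the logit scale, and these two optimizers generally differ because the sigmoid saturates and reweights errors. The result is therefore necessarily an inequality, and the key design choice is to substitute the suboptimal-but-explicit $\theta_L^*$ so that the bound collapses onto the second-order statistics of $(\theta^\sigma(X^\sigma),X^\sigma)$. The Lipschitz passage from the logit scale to the probability scale is precisely where tightness and the leading constant are determined; I would expect any improvement of the constant to come from a more careful, distribution-dependent control of $s$ rather than the worst-case slope $\tfrac14$.
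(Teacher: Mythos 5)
Your proposal is correct and follows essentially the same route as the paper's proof: upper-bound $\epsilon_A$ by substituting the explicit linear least-squares fit $\theta_L^*$ of $\theta^\sigma$ on the logit scale, pass through the sigmoid via its Lipschitz constant, and read off $a^*$, $b^*$ and the residual $\textup{Var}(\theta^\sigma(X^\sigma)) - \textup{Cov}(\theta^\sigma(X^\sigma),X^\sigma)\textup{Var}(X^\sigma)^{-1}\textup{Cov}(X^\sigma,\theta^\sigma(X^\sigma))$ from the normal equations. You are in fact slightly more careful than the paper, which asserts $h^*_{\mathcal H}=s\circ\theta_L^*$ (an identification that need not hold, though only the inequality $\epsilon_A\le\lVert s\circ\theta^\sigma-s\circ\theta_L^*\rVert_2^2$ is needed, which you justify via the Pythagorean identity), and your constant $\tfrac{1}{16}$ from squaring the $\tfrac14$-Lipschitz bound is tighter than the stated $\tfrac14$, so the claimed bound follows a fortiori.
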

\begin{proof}\let\qed\relax 
Using the fact that $\eta^\sigma(x) = s(\theta^\sigma(x))$,
 we can write $h_\mathcal{H}^*(x) = s(\theta_L^*(x))$ for 
 \begin{equation}
     \theta_L^* = \argmin_{\theta_L \in \mathcal{H}_L} \mathbb E\left[ (\theta^\sigma(X^\sigma) - \theta_L(X^\sigma))^2\right].
     \label{eq:opt_theta-L}
 \end{equation}
 We can then bound $\epsilon_A$ as follows:
\begin{align}
    \epsilon_A = \|s \circ \theta^\sigma - s \circ \theta_L^* \|^2_2 \le \frac{1}{4}\| \theta^\sigma - \theta_L^* \|^2_2,
    \label{eq:approx-error-bound}
\end{align}
where the inequality follows from the fact that $s$ is ${1}/{4}$-Lipschitz. Solving \eqref{eq:opt_theta-L},  $\theta_L^*$ has the form
\begin{align}
    \theta_L^*(x) = (a^*)^\mathrm{T} x + b^*,
    \label{eq:opt-theta-l-form}
\end{align}
where $a^*$ and $b^*$ are given in \eqref{eq:theta-l-weight} and \eqref{eq:theta-l-bias}, respectively.
Substituting \eqref{eq:opt-theta-l-form} into \eqref{eq:approx-error-bound} yields
\begin{align}
    \| \theta^\sigma - \theta_L^* \|^2_2 &= \mathbb{E}\left[(\theta^\sigma(X^\sigma) - \theta_L^*(X^\sigma))^2\right] \nonumber \\
    &= \text{Var}(\theta^\sigma(X^\sigma))  - \text{Cov}(\theta^\sigma(X^\sigma),X^\sigma)\text{Var}(X^\sigma)^{-1}\text{Cov}(X^\sigma,\theta^\sigma(X^\sigma)).
\end{align} 
\end{proof}

We can therefore bound $\epsilon_A$ by approximating $\theta^\sigma$ using the linear function $\theta^*_L$. We now present several examples for various increasingly complex relationships between $S$ and $X$ to gain some intuition about the behavior of $\epsilon_A$ as a function of $\sigma$. 
\begin{prop}
If $S \sim \mathrm{Ber}(p)$ and 
\\(i) $X = aS+b$ for some $a,b \in \mathbb{R}^d$ or \\
(ii) $(X\vert S=s) \sim N(\mu_s,\Sigma)$ for mean $\mu_s \in \mathbb{R}^d$ and positive definite covariance matrix $\Sigma \in \mathbb{R}^{d\times d}$, \\
then $\epsilon_A=0$. 
\label{prop:delta-a-linear-optimal-example}
\end{prop}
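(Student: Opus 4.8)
The plan is to reduce both cases to a single structural fact: that the log-likelihood ratio $\theta^\sigma$ defined in \eqref{eq:DefTheta} is itself an affine function of $x$. By \cref{prop:linear-bound-epsilon-a} we have $\epsilon_A \le \tfrac{1}{4}\lVert \theta^\sigma - \theta_L^* \rVert_2^2$, where $\theta_L^*$ is the best affine approximation of $\theta^\sigma$ in the $L^2$ sense. Hence if $\theta^\sigma$ is already affine, then $\theta^\sigma \in \mathcal{H}_L$, so the minimizer satisfies $\theta_L^* = \theta^\sigma$ and $\lVert \theta^\sigma - \theta_L^* \rVert_2^2 = 0$; combined with $\epsilon_A \ge 0$ this forces $\epsilon_A = 0$. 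The entire argument therefore collapses to verifying that $\theta^\sigma$ is affine in each of the two settings.

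First I would observe that both hypotheses reduce, at the level of the noisy observation $X^\sigma$, to class-conditional Gaussians sharing a common covariance. In case (ii), since $X^\sigma = X + \sigma Z$ with $Z \sim N(0,I)$ independent of $(X,S)$, convolving $N(\mu_s,\Sigma)$ with $N(0,\sigma^2 I)$ gives $X^\sigma \vert S=s \sim N(\mu_s, \Sigma + \sigma^2 I)$, and the covariance $\Sigma + \sigma^2 I$ does not depend on $s$. In case (i), $X \vert S=s$ is the point mass at $as+b$, so $X^\sigma \vert S=s \sim N(as+b, \sigma^2 I)$, again with a common covariance $\sigma^2 I$. Writing $\mu_s^\sigma$ for the class means and $M$ for the shared covariance, the densities $f_0^\sigma, f_1^\sigma$ carry the same normalizing constant.

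Next I would substitute these Gaussian densities into $\theta^\sigma(x) = \log\!\big(p f_1^\sigma(x)/\bar p f_0^\sigma(x)\big)$. Because $M$ is common to both classes, the normalizing constants and the purely quadratic terms $-\tfrac{1}{2} x^{\mathrm T} M^{-1} x$ cancel in the log-ratio, leaving
\begin{equation*}
\theta^\sigma(x) = (\mu_1^\sigma - \mu_0^\sigma)^{\mathrm T} M^{-1} x + \log\frac{p}{\bar p} - \frac{1}{2}\left((\mu_1^\sigma)^{\mathrm T} M^{-1}\mu_1^\sigma - (\mu_0^\sigma)^{\mathrm T} M^{-1}\mu_0^\sigma\right),
\end{equation*}
which is exactly the affine form \eqref{eq:DefHk} with $a = M^{-1}(\mu_1^\sigma - \mu_0^\sigma)$ and $b$ the displayed constant. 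Thus $\theta^\sigma \in \mathcal{H}_L$, and the conclusion $\epsilon_A = 0$ follows as above.

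The computation is routine once the reduction is in place; the one point requiring care is recognizing that the vanishing of $\epsilon_A$ hinges entirely on the equality of the class-conditional covariances \emph{after} adding noise. It is precisely this equal-covariance structure that annihilates the quadratic-in-$x$ term and renders $\theta^\sigma$ affine. Were the covariances distinct (as in the harder class-conditional Gaussian setting treated later in \cref{thm:CCG-vector-general}), the quadratic term would survive, $\theta^\sigma$ would leave $\mathcal{H}_L$, and $\epsilon_A$ would be strictly positive; so this equal-covariance crux is exactly what separates this proposition from the more involved cases.
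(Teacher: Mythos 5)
Your proposal is correct and follows essentially the same route as the paper: the paper's own proof likewise reduces case (i) to case (ii) after noise addition (with $\mu_1 = a+b$, $\mu_0 = b$, $\tilde{\Sigma} = \sigma^2 I$), computes $\theta^\sigma$ from \eqref{eq:DefTheta} to find it affine thanks to the common conditional covariance $\Sigma + \sigma^2 I$, and concludes via \cref{prop:linear-bound-epsilon-a} that $\epsilon_A = 0$. The only cosmetic difference is that the paper's appendix verifies the vanishing of the bound by explicitly computing the variance and covariance terms in \eqref{eq:delta-a-decomp} and showing they cancel, whereas you shortcut this by noting $\theta^\sigma \in \mathcal{H}_L$ forces $\theta_L^* = \theta^\sigma$ — a valid and slightly cleaner observation that matches the paper's own proof sketch.
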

\begin{proof}[Proof sketch]\let\qed\relax 
For setting (ii), computing $\theta^\sigma$ in \eqref{eq:DefTheta} yields
\[\theta^\sigma(x) = (\mu_1-\mu_0)^\mathrm{T}\Tilde{\Sigma}^{-1}x + \frac{1}{2}\left(\mu_0^\mathrm{T}\Tilde{\Sigma}^{-1}\mu_0-\mu_1^\mathrm{T}\Tilde{\Sigma}^{-1}\mu_1\right) + \log\left(\frac{p}{\bar{p}} \right),\]
where $\Tilde{\Sigma} \coloneqq \Sigma + \sigma^2 I$. Since $\theta^\sigma$ linear in $x$, it can therefore be recovered exactly by $\theta^*_L$. Moreover, since setting (i) is a special case of setting (ii), the same conclusion holds for setting (i) as well. See 
\if \extended 0%
    \cite[Appendix~C]{ISITextenver}
    \fi%
    \if \extended 1%
    Appendix \ref{appendix:proof-delta-a-linear-optimal}
    \fi for a complete proof.
\end{proof}
\begin{remark}
    Once Gaussian noise is added to $X$, the setting (i) in  \cref{prop:delta-a-linear-optimal-example} reduces to a special case of setting (ii), for which it is known that a linear separator is optimal.
    Consequently, in the training bound of \cref{thm:mmse-bound-general}, the only gap between $\mathrm{MSE}_\textup{train}(\hat{h}_\mathcal{H})$ and  $\mathrm{MMSE}(S \vert X^{\sigma})$  arises from the concentration term $\epsilon_C$, which can be made arbitrarily small with a sufficiently large training set. In the validation bound of  \cref{thm:mmse-bound-general-val}, the gap between $\mathrm{MSE}_\textup{val}(\hat{h}_\mathcal{H})$ and  $\mathrm{MMSE}(S \vert X^{\sigma})$ is due to the remaining concentration terms 
$\epsilon_C$ and $\Tilde{\epsilon}_C$, as well as the generalization error term 
$\epsilon_G$. All of these terms can be made arbitrarily small with sufficiently large training and validation sets, provided the number of training samples is sufficient to offset the complexity of the learned model $\hat{h}_\mathcal{H}$.
\end{remark}

It is of more interest to consider settings in which the optimal predictor is nonlinear, in order to better understand how the approximation error $\epsilon_A$ depends on the noise parameter $\sigma$, and how this, in turn, influences the lower bound on the MMSE. The simplest example we consider is a binary symmetric channel, which is very relevant in communication systems.

\begin{theorem}[Binary Symmetric Channel]
If $S \sim \text{Ber}(p)$, $N \sim \text{Ber}(p_N)$,  and $X = S \oplus N$, then
\begin{align}
   \epsilon_A \le \epsilon^\text{BSC}_A \coloneqq &\frac{3(1-2p_N)^2}{4\sigma^2} - \frac{(1-2p_N)^2}{4(q\overline{q}+\sigma^2)}  + \frac{(1-2p_N)^2\left(5-4p_N+4p_N^2+14q-2q^2 \right)}{8\sigma^4} \nonumber \\
    & \quad + \frac{2(1-2p_N)^2\left(p_N\overline{p_N}-q\overline{q} \right)}{4\sigma^2\left(q\overline{q}+\sigma^2 \right)} + \mathcal{O}\left(\frac{C_1(p,p_N)(1-2p_N)^2}{\sigma^6} \right) +\mathcal{O}\left(\frac{C_2(p,p_N)(1-2p_N)^2}{\sigma^4\left(q\overline{q}+\sigma^2 \right)} \right)
    \label{eq:epsilon-a-bsc}
\end{align}
for some constants $C_1(p,p_N)$ and $C_2(p,p_N)$, where $q \coloneqq p_N\overline{p}+\overline{p_N}p$.
\label{thm:delta-a-bsc-example}
\end{theorem}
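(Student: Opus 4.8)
The plan is to apply \cref{prop:linear-bound-epsilon-a}, which reduces bounding $\epsilon_A$ to controlling the residual of the best affine approximation of the log-likelihood ratio $\theta^\sigma$, and then to evaluate the three statistics appearing in \eqref{eq:delta-a-decomp}: $\mathrm{Var}(\theta^\sigma(X^\sigma))$, $\mathrm{Cov}(\theta^\sigma(X^\sigma),X^\sigma)$, and $\mathrm{Var}(X^\sigma)$. First I would identify the relevant marginals: since $X = S\oplus N$ with $S\sim\mathrm{Ber}(p)$ and $N\sim\mathrm{Ber}(p_N)$ independent, $X\sim\mathrm{Ber}(q)$ with $q = p_N\overline{p}+\overline{p_N}p$, so $X^\sigma = X+\sigma Z$ is a two-component Gaussian mixture and $\mathrm{Var}(X^\sigma)=q\overline{q}+\sigma^2$. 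This already explains the denominator $q\overline{q}+\sigma^2$ in \eqref{eq:epsilon-a-bsc}.

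Next I would write $\theta^\sigma$ explicitly. Using \eqref{eq:DefTheta} with the smoothed conditional densities $f_i^\sigma = f_i\ast K_\sigma$ and dividing numerator and denominator by $K_\sigma(x)$, one obtains
\[
\theta^\sigma(x) = \log\frac{p}{\overline{p}} + \log\frac{p_N + \overline{p_N}\,e^{u}}{\overline{p_N}+p_N\,e^{u}}, \qquad u \coloneqq \frac{x-\tfrac12}{\sigma^2}.
\]
Because $X^\sigma$ puts its mass on scales of order $\sigma$, the argument $u$ is of order $1/\sigma$ on the bulk, so the natural move is to Taylor-expand the second logarithm $g(u)$ around $u=0$. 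The leading slope is $g'(0)=1-2p_N$, which accounts for the overall $(1-2p_N)^2$ prefactor throughout \eqref{eq:epsilon-a-bsc}; I would carry the expansion to the order required to produce the explicit $1/\sigma^2$ and $1/\sigma^4$ terms, lumping the rest into the stated $\mathcal{O}(\cdot)$ remainders.

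The core computation is to evaluate $\mathrm{Var}(\theta^\sigma(X^\sigma))$ and $\mathrm{Cov}(\theta^\sigma(X^\sigma),X^\sigma)$ against the mixture law of $X^\sigma$. The key simplification I would exploit is that, writing $Y \coloneqq X^\sigma-\tfrac12 = (X-\tfrac12)+\sigma Z$, the Bernoulli part satisfies $(X-\tfrac12)^2 = \tfrac14$ deterministically, so every moment $\mathbb{E}[Y^k]$ collapses to an elementary combination of $\mu \coloneqq q-\tfrac12$, powers of $\sigma^2$, and Gaussian moments of $Z$. Substituting the resulting moment expressions into the residual $\tfrac14\!\left[\mathrm{Var}(\theta^\sigma) - \mathrm{Cov}(\theta^\sigma,X^\sigma)^2/\mathrm{Var}(X^\sigma)\right]$ and collecting by powers of $\sigma$ reproduces the explicit terms in \eqref{eq:epsilon-a-bsc}; a valid upper bound of exactly the stated shape is obtained by over-estimating $\mathrm{Var}(\theta^\sigma)$ and under-estimating the subtracted $\mathrm{Cov}^2/\mathrm{Var}(X^\sigma)$ term, so the higher-order part collapses into the two $\mathcal{O}(\cdot)$ expressions, with the mixed coefficient $p_N\overline{p_N}-q\overline{q}$ emerging from the $1/\sigma^2$ correction to the covariance.

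The main obstacle is that $\theta^\sigma$ is transcendental — a logarithm of a sum of exponentials — so none of these moments has a closed form directly, and everything hinges on the asymptotic expansion in $1/\sigma$. The delicate point is remainder control: $\theta^\sigma$ is globally bounded and saturates to $\pm\log(\overline{p_N}/p_N)$ in the tails, whereas its Taylor polynomial in $u$ diverges there, so the truncation error cannot be bounded uniformly by the next monomial. I would handle this by splitting the integration domain into a bulk region $|x|\lesssim\sigma^{1+\epsilon}$, where a Lagrange remainder estimate on the bounded derivatives of $g(u)=\log\frac{p_N+\overline{p_N}e^u}{\overline{p_N}+p_N e^u}$ applies, and a tail region whose contribution decays super-exponentially in $\sigma$ by Gaussian concentration of $X^\sigma$ and is therefore absorbed into the $\mathcal{O}(\cdot)$ terms. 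Verifying that the bookkeeping of cross-moments genuinely reproduces the stated coefficients — in particular the factor multiplying $1/\sigma^4$ — is the most error-prone step and where I would spend the most care.
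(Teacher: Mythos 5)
Your proposal is correct in outline and shares the paper's overall skeleton---reduce via \cref{prop:linear-bound-epsilon-a} to the linear-regression residual in \eqref{eq:delta-a-decomp}, observe $\mathrm{Var}(X^\sigma)=q\overline{q}+\sigma^2$, and expand $\mathrm{Var}(\theta^\sigma(X^\sigma))$ and $\mathrm{Cov}(\theta^\sigma(X^\sigma),X^\sigma)$ in powers of $1/\sigma^2$---but your expansion machinery is genuinely different from the paper's. The paper never Taylor-expands $g(u)$ around $u=0$: it expands the two logarithms via the Maclaurin series of $\log(1+x)$ in powers of $\beta_x-1$ with $\beta_x=e^{(2x-1)/2\sigma^2}$, applies the binomial theorem, and then every conditional moment becomes an \emph{exact} finite alternating sum of exponentials through the Gaussian MGF identity $\mathbb{E}[e^{aZ}]=e^{a^2/2}$; the bookkeeping is organized by the law of total variance conditioned on $X$ (the four integrals $I_0,I_1,I_{02},I_{12}$), plus a separate symmetry identity $\mathbb{E}[Z\theta^\sigma(\sigma Z)]=\mathbb{E}[Z\theta^\sigma(1+\sigma Z)]$ that collapses the covariance cross term, and the $1/\sigma^{2m}$ coefficients are read off by expanding those exponentials and interchanging the double series. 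What your route buys is rigor exactly where the paper is formal: the $\log(1+x)$ series diverges in the tails of $X^\sigma$ (where $|\beta_x-1|$ is large), so the paper's term-by-term integration and series rearrangement is a formal manipulation, whereas your bulk/tail split at $|x|\lesssim\sigma^{1+\epsilon}$, exploiting global boundedness of $\theta^\sigma$, supplies precisely the missing justification; your centering trick $(X-\tfrac12)^2=\tfrac14$ is also a clean simplification the paper does not use. What the paper's route buys is mechanical coefficient extraction: each intermediate quantity is closed-form, so no Lagrange-remainder bookkeeping is needed for the displayed orders. One caution for your execution: since the leading affine part of $\theta^\sigma$ is annihilated by the regression residual (and $g$ is odd, so $g''(0)=0$), an exact direct computation will find the true residual to be of strictly smaller order than the explicit $\Theta(1/\sigma^2)$ terms in \eqref{eq:epsilon-a-bsc}, whose leading pieces only partially cancel as stated; so do not expect your Taylor bookkeeping to reproduce the stated constants term-for-term---matching them requires following the paper's specific series arrangement---and your plan of over-estimating the variance and under-estimating the subtracted $\mathrm{Cov}^2/\mathrm{Var}$ term (or simply two-sided remainder control, which the $\mathcal{O}(\cdot)$ terms absorb) is the right way to recover the stated, possibly loose, upper bound.
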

\begin{proof}[Proof sketch]\let\qed\relax 
Computing $\theta^\sigma$ in \eqref{eq:DefTheta} yields  
\begin{equation}
    \theta^\sigma(x)=\log\left(\frac{p}{\overline{p}}\right)+\log\left(\frac{p_N + \overline{p_N} e^{(2x-1)/2\sigma^2}}{\overline{p_N} + {p_N} e^{(2x-1)/2\sigma^2}} \right).
    \label{eq:bsc-theta-sig}
\end{equation}
We use a series expansion for the second term in \eqref{eq:bsc-theta-sig} in addition to the binomial theorem and the series expansion for the exponential function in order to derive approximations of the expectations in \eqref{eq:delta-a-decomp}. Proof details are in
\if \extended 0%
    \cite[Appendix~D]{ISITextenver}.
    \fi%
    \if \extended 1%
    Appendix \ref{appendix:prrof-bsc-example}.
    \fi 
\end{proof}

\begin{remark}
For large noise levels $\sigma$, the approximation error bound $\epsilon^\text{BSC}_A$ decays as $\mathcal{O}(1/\sigma^2)$. 
Given large enough training and validation sets, the convergence of $\mathrm{MMSE}(S \vert X^{\sigma})$ to either $\mathrm{MSE}_\textup{train}(\hat{h}_\mathcal{H})$ in \eqref{eq:2LNNMainResultRepresentation} or $\mathrm{MSE}_\textup{val}(\hat{h}_\mathcal{H})$ in \eqref{eq:mmse-lower-bound-val-error} is dominated by the noise term $\mathcal{O}(1/\sigma^2)$. This behavior has an intuitive explanation: as the noise level $\sigma$ increases, the class-conditional distributions increasingly resemble pure Gaussians rather than Gaussian mixtures, causing the optimal estimator $\theta^\sigma$ in \eqref{eq:bsc-theta-sig} to approach linearity and eventually become constant in the limit, as shown in  \cref{fig:bsc-noise-density-progression}. Consequently, a linear function provides an increasingly good approximation. Additionally, as the class-conditional distributions overlap more substantially, the true $\mathrm{MMSE}(S \vert X^{\sigma})$, the empirical training error $\mathrm{MSE}_\textup{train}(\hat{h}_\mathcal{H})$ and the empirical validation error $\mathrm{MSE}_\textup{val}(\hat{h}_\mathcal{H})$ grow with $\sigma$, reflecting diminishing ability to recover the sensitive attribute $S$ from the noised features $X^\sigma$. While larger $\sigma$ provides stronger adversarial evaluation guarantees, the choice of noise level should ultimately be dictated by the utility of the noised features $X^\sigma$. In this setting, our analysis shows that a linear model serves as an effective adversarial evaluation tool for moderate to large $\sigma$, providing reliable estimates of adversarial inference of sensitive features with minimal computational overhead. 
\end{remark}

\begin{figure}
    \centering
    \begin{subfigure}{0.33\textwidth}
        \centering
        \includegraphics[width=\textwidth,page=2]{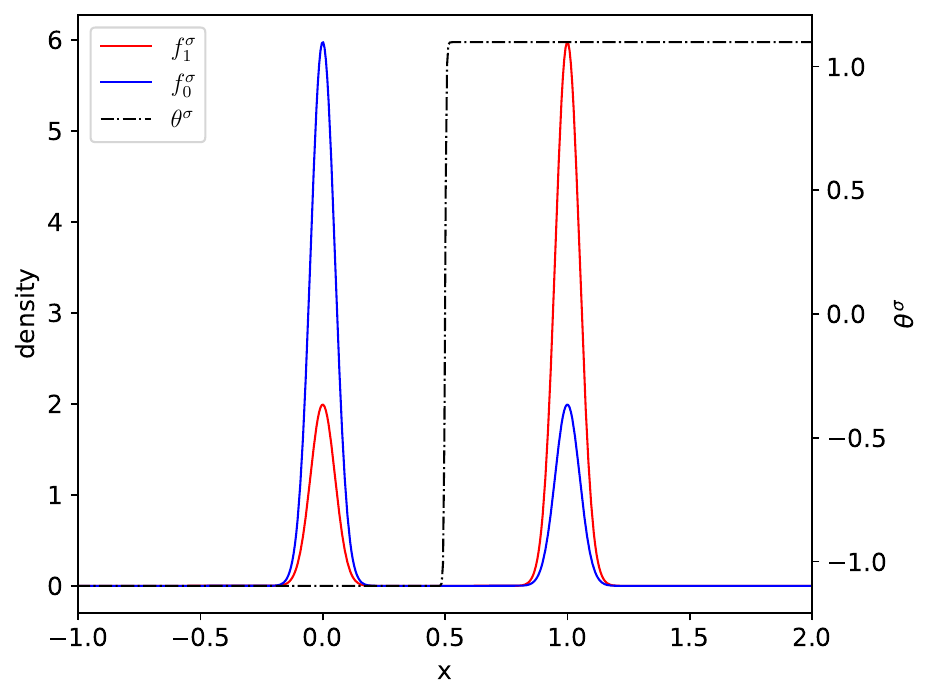}
        \caption{$\sigma = 0.25$}
    \end{subfigure}\hfill\hfill%
    \begin{subfigure}{0.33\textwidth}
        \centering
        \includegraphics[width=\textwidth,page=3]{figures/bsc-plots-journal.pdf}
        \caption{$\sigma = 0.5$}
    \end{subfigure}\hfill\hfill%
    \begin{subfigure}{0.33\textwidth}
        \centering
    \includegraphics[width=\textwidth,page=7]{figures/bsc-plots-journal.pdf}
        \caption{$\sigma = 2$}
    \end{subfigure}
    \caption{An illustration of how the class-conditional densities $f_i^\sigma$
  of $X^\sigma\vert S=i$, for $i\in\{0,1\}$, and the optimal estimator $\theta^\sigma$ evolve in the binary symmetric channel setting of Theorem~\ref{thm:delta-a-bsc-example}, as the noise level $\sigma$ increases. Here, $p=1/2$  and $p_N=1/4$. As $\sigma$ grows, the conditional densities progressively resemble pure Gaussians rather than Gaussian mixtures. This leads $\theta^\sigma$  to become increasingly linear and ultimately converge to a constant function.}
    \label{fig:bsc-noise-density-progression}
\end{figure}

Our next example examines the case where $X$ follows a class-conditional Gaussian distribution with distinct covariance matrices. 
For this setting, we can derive exact closed-form expressions for each term in the bound given in \eqref{eq:delta-a-decomp}, rather than relying on approximations.

\begin{theorem}[Class Conditional Vector Gaussian] Suppose $S \sim \text{Ber}(p)$ and $(X|S=s) \sim \mathcal{N}(\mu_s,\Sigma_s)$ for $s \in \{0,1\}$, where $\mu_s \in \mathbb R^d$ and $\Sigma_s \in \mathbb R^{d\times d}$ is a positive definite covariance matrix. Let $\Tilde{\Sigma}_s \coloneqq \Sigma_s + \sigma^2 I$ for $\sigma > 0$, $\mu_d \coloneqq \mu_1-\mu_0$,
\begin{align}
    A &= \frac{1}{2}\left(\Tilde{\Sigma}_0^{-1} - \Tilde{\Sigma}_1^{-1}\right), \label{eq:ccg-theta-sig-a-def} \\
    b &= \Tilde{\Sigma}_1^{-1}\mu_1 - \Tilde{\Sigma}_0^{-1}\mu_0, \label{eq:ccg-theta-sig-b-def} \\
    c & =\frac{1}{2}\mu_0^\mathrm{T}\Tilde{\Sigma}_0^{-1}\mu_0 - \frac{1}{2}\mu_1^\mathrm{T}\Tilde{\Sigma}_1^{-1}\mu_1+ \frac{1}{2}\log\left(\frac{|\Tilde{\Sigma}_0|}{|\Tilde{\Sigma}_1|} \right)+ \log\left( \frac{p}{\overline{p}}\right), \label{eq:ccg-theta-sig-c-def} \\ 
    M_s &\coloneqq \sum_{j=1}^d \lambda_j^{(s)}+b^\mathrm{T}\mu_s + \mu_s^\mathrm{T}A\mu_s + c, \label{eq:cond-exp-ccg} \\
    V_s &\coloneqq \sum_{j=1}^d 2(\lambda_j^{(s)})^2+(u_j^{(s)})^2,
    \label{eq:cond-var-ccg}
\end{align}
where $\lambda_j^{(s)}$, $j \in \{1,\dots,d\}$, are the eigenvalues of $\Tilde{\Sigma}_y^{1/2}A\Tilde{\Sigma}_y^{1/2}$ with corresponding eigenvectors as the columns of a matrix $Q_y$, i.e., $Q_s^\mathrm{T}\Tilde{\Sigma}_s^{1/2}A\Tilde{\Sigma}_s^{1/2}Q_s = \text{diag}(\lambda_1^{(s)},\dots,\lambda_d^{(s)})$, and
\begin{equation}
    u^{(s)} = (u_1^{(s)},\dots,u_d^{(s)})^\mathrm{T} = Q_s^\mathrm{T}\Tilde{\Sigma}_s^{1/2}(b+2A\mu_s).
\end{equation}
Then, 
\begin{equation}
    \epsilon_A \le \epsilon^\text{CCG}_A \coloneqq \frac{1}{4}\left(\textup{VAR}_1 - \textup{COV}*\textup{VAR}_2^{-1}*\textup{COV}^\mathrm{T}\right),
\end{equation}
where
\begin{align}
    &\textup{VAR}_1 = pV_1 + \overline{p}V_0 + p\overline{p}\left(M_1-M_0\right)^2, \label{eq:var-theta-sig-ccg}\\
    &\textup{COV}
    = 2p\mu_1^\mathrm{T}A\Sigma_1 + 2\overline{p}\mu_0^\mathrm{T}A\Sigma_0+ 2\sigma^2 \left[p\mu_1+\overline{p}\mu_0\right]^\mathrm{T} A  + p\overline{p}\left[\text{tr}(A(\Sigma_1 - \Sigma_0)) + \mu_1^\mathrm{T}A\mu_1 - \mu_0^\mathrm{T}A\mu_0 \right]\mu_d^\mathrm{T} \nonumber\\
    & \qquad \qquad + b^\mathrm{T}\left[p\Sigma_1 + \overline{p}\Sigma_0 +\sigma^2I+ p\overline{p}\mu_d\mu_d^\mathrm{T} \right]  \label{eq:cov-theta-x-ccg} \\
    &\textup{VAR}_2 = p\Sigma_1 + \overline{p}\Sigma_0 +p\overline{p}\mu_d\mu_d^\mathrm{T} + \sigma^2 I \label{eq:var-x-sig-ccg}.
\end{align}
\label{thm:CCG-vector-general}
\end{theorem}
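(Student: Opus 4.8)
The plan is to start from the closed-form bound in \cref{prop:linear-bound-epsilon-a}, which expresses $\epsilon_A$ in terms of three population moments of the mixture $X^\sigma$: the variance $\mathrm{Var}(\theta^\sigma(X^\sigma))$, the cross-covariance $\mathrm{Cov}(\theta^\sigma(X^\sigma), X^\sigma)$, and the covariance matrix $\mathrm{Var}(X^\sigma)$. These are exactly $\mathrm{VAR}_1$, $\mathrm{COV}$, and $\mathrm{VAR}_2$, so the entire proof reduces to computing these three quantities for the class-conditional Gaussian model. Since $(X^\sigma\vert S=s) \sim \mathcal{N}(\mu_s, \Tilde\Sigma_s)$, the first step is to evaluate $\theta^\sigma$ in \eqref{eq:DefTheta} by substituting the two Gaussian densities; after cancelling normalizing constants and expanding the quadratic exponents, one obtains the closed form $\theta^\sigma(x) = x^\mathrm{T}Ax + b^\mathrm{T}x + c$ with $A$, $b$, $c$ exactly as in \eqref{eq:ccg-theta-sig-a-def}--\eqref{eq:ccg-theta-sig-c-def}. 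This identifies $\theta^\sigma$ as a quadratic form in a Gaussian vector conditioned on each class.

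With $\theta^\sigma$ written as a quadratic form, I would next compute the conditional moments $M_s = \mathbb{E}[\theta^\sigma(X^\sigma)\vert S=s]$ and $V_s = \mathrm{Var}(\theta^\sigma(X^\sigma)\vert S=s)$. For the mean, I use $\mathbb{E}[Y^\mathrm{T}AY] = \mathrm{tr}(A\Tilde\Sigma_s) + \mu_s^\mathrm{T}A\mu_s$ for $Y \sim \mathcal{N}(\mu_s, \Tilde\Sigma_s)$, and rewrite $\mathrm{tr}(A\Tilde\Sigma_s) = \sum_j \lambda_j^{(s)}$ via the eigendecomposition of $\Tilde\Sigma_s^{1/2}A\Tilde\Sigma_s^{1/2}$, yielding \eqref{eq:cond-exp-ccg}. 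For the conditional variance, the key maneuver is the whitening change of variables $Y = \mu_s + \Tilde\Sigma_s^{1/2}Q_s\Tilde W$ with $\Tilde W \sim \mathcal{N}(0,I)$, under which $\theta^\sigma(Y)$ becomes $\sum_j \lambda_j^{(s)}\Tilde W_j^2 + \sum_j u_j^{(s)}\Tilde W_j + \mathrm{const}$, with $u^{(s)}$ as defined in the statement. Because the $\Tilde W_j$ are independent, the cross-terms between the quadratic and linear pieces vanish by oddness of the third Gaussian moment, and using $\mathrm{Var}(\Tilde W_j^2) = 2$ and $\mathrm{Var}(\Tilde W_j) = 1$ gives $V_s = \sum_j 2(\lambda_j^{(s)})^2 + (u_j^{(s)})^2$, matching \eqref{eq:cond-var-ccg}.

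The final step is to assemble the three unconditional moments by conditioning on $S$. For $\mathrm{VAR}_1$ and $\mathrm{VAR}_2$ I apply the law of total variance, using the two-point identity $\mathrm{Var}(g(S)) = p\overline{p}(g(1)-g(0))^2$ for the between-class contribution; this produces \eqref{eq:var-theta-sig-ccg} and \eqref{eq:var-x-sig-ccg} directly, where for $\mathrm{VAR}_2$ the within-class piece $p\Tilde\Sigma_1 + \overline{p}\Tilde\Sigma_0$ expands to $p\Sigma_1 + \overline{p}\Sigma_0 + \sigma^2 I$. The cross-covariance $\mathrm{COV}$ comes from the law of total covariance: the between-class piece is $p\overline{p}(M_1-M_0)\mu_d^\mathrm{T}$, while the within-class piece requires the conditional cross-covariance $\mathrm{Cov}(\theta^\sigma(X^\sigma), X^\sigma\vert S=s) = 2\mu_s^\mathrm{T}A\Tilde\Sigma_s + b^\mathrm{T}\Tilde\Sigma_s$, which follows from the quadratic-form identity $\mathrm{Cov}(Y^\mathrm{T}AY, Y) = 2\mu_s^\mathrm{T}A\Tilde\Sigma_s$ (the linear term in $Y$ contributing $b^\mathrm{T}\Tilde\Sigma_s$).

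I expect this last assembly to be the main bookkeeping obstacle. One must substitute $M_1 - M_0 = \mathrm{tr}(A(\Sigma_1-\Sigma_0)) + \mu_1^\mathrm{T}A\mu_1 - \mu_0^\mathrm{T}A\mu_0 + b^\mathrm{T}\mu_d$ into the between-class term, expand each $\Tilde\Sigma_s = \Sigma_s + \sigma^2 I$, and then regroup: the $b^\mathrm{T}$-linear contributions from both the within-class and between-class pieces must be folded into the single factor $b^\mathrm{T}[p\Sigma_1 + \overline{p}\Sigma_0 + \sigma^2 I + p\overline{p}\mu_d\mu_d^\mathrm{T}]$ appearing in \eqref{eq:cov-theta-x-ccg}, and the scalar $\mathrm{tr}(A(\Sigma_1-\Sigma_0)) + \mu_1^\mathrm{T}A\mu_1 - \mu_0^\mathrm{T}A\mu_0$ must be carried as a coefficient on the row vector $\mu_d^\mathrm{T}$. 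Once $\mathrm{VAR}_1$, $\mathrm{COV}$, and $\mathrm{VAR}_2$ are in hand, substituting them into the bound of \cref{prop:linear-bound-epsilon-a} immediately yields $\epsilon_A^{\mathrm{CCG}}$, completing the proof.
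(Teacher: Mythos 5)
Your proposal is correct and follows essentially the same route as the paper's proof: derive the quadratic form $\theta^\sigma(x)=x^\mathrm{T}Ax+b^\mathrm{T}x+c$ with $A,b,c$ as in \eqref{eq:ccg-theta-sig-a-def}--\eqref{eq:ccg-theta-sig-c-def}, obtain $M_s$, $V_s$, $\textup{VAR}_1$, $\textup{COV}$, and $\textup{VAR}_2$ by conditioning on $S$ via the laws of total variance and covariance, and substitute into the bound of \cref{prop:linear-bound-epsilon-a}. The only cosmetic differences are that you prove the Gaussian quadratic-form moment identities yourself by the whitening substitution $Y=\mu_s+\Tilde{\Sigma}_s^{1/2}Q_s\Tilde{W}$ where the paper cites \cite[Thms.~3.2b.2--3.2b.3]{mathai1992quadratic}, and that you compute $\textup{COV}$ by applying the law of total covariance directly to the pair $(\theta^\sigma(X^\sigma),X^\sigma)$ using $(X^\sigma\vert S=s)\sim\mathcal{N}(\mu_s,\Tilde{\Sigma}_s)$, whereas the paper first expands $X^\sigma=X+\sigma Z$ into separate covariance terms before conditioning---a slightly longer bookkeeping path that regroups to the same expression \eqref{eq:cov-theta-x-ccg}.
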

\begin{proof}[Proof sketch]\let\qed\relax
It is well-known that the optimal separator is quadratic \cite{hastie_elements-of-statistical-learning}, i.e.,
\begin{equation}
    \theta^\sigma(x) = x^\mathrm{T}Ax + b^\mathrm{T}x + c,
    \label{eq:optimal_ccg}
\end{equation}
where $A$, $b$, and $c$ are defined in \eqref{eq:ccg-theta-sig-a-def} and \eqref{eq:ccg-theta-sig-b-def}, \eqref{eq:ccg-theta-sig-c-def} respectively.
We then compute each term in \eqref{eq:delta-a-decomp} as in \eqref{eq:var-theta-sig-ccg}, \eqref{eq:cov-theta-x-ccg} and \eqref{eq:var-x-sig-ccg} by conditioning on $S$ and using moments of quadratic forms of Gaussian random vectors. A complete proof can be found in
\if \extended 0%
    \cite[Appendix~E]{ISITextenver}.
    \fi%
    \if \extended 1%
    Appendix \ref{appendix:proof-CCG-vector-general}.
    \fi 
\end{proof}

To gain more intuition about how the class-conditional distribution parameters $\mu_s$, $\Sigma_s$, $s \in \{0,1\}$ and noise parameter $\sigma$ affect $\epsilon_A$, we consider the special case when $\Sigma_s = \sigma_s^2 I$.

\begin{corollary}
    Suppose $S \sim \text{Ber}(p)$ and $(X|S=s) \sim \mathcal{N}(\mu_s,\sigma_s^2 I)$ for $s \in \{0,1\}$, where $\mu_s \in \mathbb R^d$ and $\sigma_s > 0$. Then for $\sigma > 0$,
    
\begin{equation}
    \epsilon_A \le \epsilon_A^\text{CCG} = \frac{(\sigma_1^2-\sigma_0^2)^2(q_1 + q_2\sigma^2+q_3\sigma^4+2d\sigma^6)}{16(r_1+r_2\sigma^2+r_3\sigma^4+r_4\sigma^6+r_5\sigma^8+\sigma^{10})},
    \label{eq:CCG-vector-diag-cov-bound}
\end{equation}
where 
\begin{subequations}
\begin{align}
    q_1 & = \lVert \mu_1 - \mu_0 \rVert_2^4\left(p^2\overline{p}\sigma_1^2 +p\overline{p}^2\sigma_0^2 \right) + 2d\sigma_0^6 - d^2p^3(\sigma_1^2-\sigma_0^2)^3 +2p\overline{p}(2+d)\sigma_0^2\sigma_1^2\lVert \mu_1 - \mu_0 \rVert_2^2 \nonumber \\
    & \quad + p^2\left(d(5d-2)\sigma_0^4\sigma_1^2   - 2d(2d+1)\sigma_0^2\sigma_1^4+d(2+d)\sigma_1^6 - 2d(d-1)\sigma_0^6\right) \nonumber \\
    & \quad +p\left(d(d-4)\sigma_0^6 - 2d(d-1)\sigma_0^4\sigma_1^2 + d(2+d) \sigma_0^2\sigma_1^4 \right), \\
    q_2 & = p\overline{p}\lVert \mu_1 - \mu_0 \rVert_2^2\left(\lVert \mu_1 - \mu_0 \rVert_2^2 + 2(2+d)(\sigma_0^2+\sigma_1^2) \right) - p^2d(d-4)(\sigma_1^2-\sigma_0^2)^2 \nonumber \\
    & \quad +p\left(d(d-10)\sigma_0^4 -2d(d-4)\sigma_0^2\sigma_1^2+d(2+d)\sigma_1^4 \right)+ 6d\sigma_0^4, \\
    q_3 & = 2p\overline{p}(2+d)\lVert \mu_1 - \mu_0 \rVert_2^2+6d(p\sigma_1^2+\overline{p}\sigma_0^2), \\
    r_1 & = \sigma_0^4\sigma_1^4\left(p\overline{p}\lVert \mu_1 - \mu_0 \rVert_2^2+p\sigma_1^2+\overline{p}\sigma_0^2 \right), \\
    r_2 & = \sigma_0^2\sigma_1^2 \left(2p\overline{p}(\sigma_1^2+\sigma_0^2)\lVert \mu_1 - \mu_0 \rVert_2^2 +2p\sigma_1^4 +2\overline{p}\sigma_0^4+ 3\sigma_0^2\sigma_1^2 \right), \\
    r_3 & = p\overline{p}(\sigma_0^4 + 4\sigma_0^2\sigma_1^2 + \sigma_1^4)\lVert \mu_1 - \mu_0 \rVert_2^2 + p\sigma_1^6 + 3(1+p)\sigma_0^2\sigma_1^4 + 3(2-p)\sigma_0^4\sigma_1^2+ \overline{p}\sigma_0^6, \\
    r_4 & = 2p\overline{p}(\sigma_1^2+\sigma_0^2)\lVert \mu_1 - \mu_0 \rVert_2^2 + (2p+1)\sigma_1^4 + (3-2p)\sigma_0^4 +6\sigma_0^2\sigma_1^2, \\
    r_5 & = p\overline{p}\lVert \mu_1 - \mu_0 \rVert_2^2 + (3-p)\sigma_0^2 + (2+p)\sigma_1^2.
\end{align}
\end{subequations}
Additionally,
\begin{equation}
    \epsilon_A^\text{CCG} = \frac{c_1d^2 + c_2 d + c_3}{c_4},
    \label{eq:eps-ccg-d}
\end{equation}
where
\begin{subequations}
\begin{align}
    c_1 &= p\bar{p}(\sigma_1^2-\sigma_0^2)^4(\sigma^2 + p\sigma_1^2 + \bar{p}\sigma_0^2), \\
    c_2 & = 2(\sigma_1^2-\sigma_0^2)^2\left((\sigma_0^2+\sigma^2)^3 +p(1-p)\|\mu_1-\mu_0\|^2(\sigma_0^2+\sigma^2)(\sigma_1^2+\sigma^2) + p^2(\sigma_1^2-\sigma_0^2)^2(\sigma_1^2+\sigma_0^2+2\sigma^2)\right. \nonumber \\
    & \qquad \left. + p(\sigma_0^2+\sigma^2)(\sigma_1^2-\sigma_0^2)(\sigma_1^2+2\sigma_0^2+3\sigma^2) \right) ,\\
    c_3 &= p\bar{p}\|\mu_1-\mu_0\|^2(\sigma_1^2-\sigma_0^2)^2\left(4(\sigma_0^2+\sigma^2)(\sigma_1^2+\sigma^2)+\|\mu_1-\mu_0\|^2(p\sigma_1^2+\bar{p}\sigma_0^2+\sigma^2) \right), \\
    c_4 & =  16(r_1+r_2\sigma^2+r_3\sigma^4+r_4\sigma^6+r_5\sigma^8+\sigma^{10}).
\end{align}
\label{eq:coeffs-eps-ccg-d}
\end{subequations}
    \label{cor:CCG-vector-diag-cov}
\end{corollary}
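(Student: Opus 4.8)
The plan is to specialize the general formulas of \cref{thm:CCG-vector-general} to the isotropic case $\Sigma_s = \sigma_s^2 I$ and then carry out the resulting (now scalar) linear algebra. Writing $\tau_s^2 \coloneqq \sigma_s^2 + \sigma^2$, the key simplification is that $\Tilde{\Sigma}_s = \tau_s^2 I$ is a scalar multiple of the identity. Hence the quadratic coefficient in \eqref{eq:ccg-theta-sig-a-def} becomes $A = \alpha I$ with $\alpha = \tfrac12(\tau_0^{-2}-\tau_1^{-2}) = (\sigma_1^2-\sigma_0^2)/(2\tau_0^2\tau_1^2)$, while \eqref{eq:ccg-theta-sig-b-def} gives $b = \tau_1^{-2}\mu_1 - \tau_0^{-2}\mu_0$. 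The constant $c$ in \eqref{eq:ccg-theta-sig-c-def} enters $M_s$ only as a common additive offset, so it cancels in the term $(M_1-M_0)^2$ of \eqref{eq:var-theta-sig-ccg} and never appears in $\textup{COV}$; thus it plays no role in $\epsilon_A$.

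Next I would trivialize the eigenstructure. Since $\Tilde{\Sigma}_s^{1/2}A\Tilde{\Sigma}_s^{1/2} = \alpha\tau_s^2 I$, every eigenvalue equals $\lambda_j^{(s)} = \alpha\tau_s^2$, so $\sum_j\lambda_j^{(s)} = d\alpha\tau_s^2$ and $\sum_j(\lambda_j^{(s)})^2 = d\alpha^2\tau_s^4$; and because $Q_s$ is orthogonal, $\lVert u^{(s)}\rVert^2 = \tau_s^2\lVert b+2\alpha\mu_s\rVert^2$. Substituting into \eqref{eq:cond-exp-ccg} and \eqref{eq:cond-var-ccg} yields the closed forms $M_s = d\alpha\tau_s^2 + b^\mathrm{T}\mu_s + \alpha\lVert\mu_s\rVert^2 + c$ and $V_s = 2d\alpha^2\tau_s^4 + \tau_s^2\lVert b + 2\alpha\mu_s\rVert^2$. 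Feeding these into \eqref{eq:var-theta-sig-ccg}, \eqref{eq:cov-theta-x-ccg}, and \eqref{eq:var-x-sig-ccg} together with $\Sigma_s = \sigma_s^2 I$, $A = \alpha I$ collapses $\textup{VAR}_1$ to a scalar, reduces $\textup{COV}$ to a linear combination of $\mu_0^\mathrm{T},\mu_1^\mathrm{T},\mu_d^\mathrm{T}$, and turns $\textup{VAR}_2$ into the rank-one perturbation $\textup{VAR}_2 = \beta^2 I + p\overline{p}\mu_d\mu_d^\mathrm{T}$, where $\beta^2 \coloneqq p\sigma_1^2 + \overline{p}\sigma_0^2 + \sigma^2$.

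The matrix inversion then goes through Sherman--Morrison: $\textup{VAR}_2^{-1} = \beta^{-2}I - \frac{p\overline{p}}{\beta^2(\beta^2 + p\overline{p}\lVert\mu_d\rVert^2)}\mu_d\mu_d^\mathrm{T}$. To reduce the number of free vector parameters I would invoke translation invariance: the bound $\epsilon_A \le \tfrac14\lVert\theta^\sigma - \theta_L^*\rVert_2^2$ from \eqref{eq:approx-error-bound} is unchanged under a common shift $\mu_s \mapsto \mu_s + v$, since affine functions are closed under translation of their argument and hence the best linear $L^2(X^\sigma)$-fit to $\theta^\sigma$ has the same residual norm. This lets me set $\mu_0 = 0$, $\mu_1 = \mu_d$, after which $\lVert\mu_0\rVert^2$ and $\lVert\mu_1\rVert^2$ disappear as independent quantities and the scalar $\textup{COV}\,\textup{VAR}_2^{-1}\,\textup{COV}^\mathrm{T}$ reduces to inner products that collapse to powers of $\lVert\mu_1-\mu_0\rVert_2^2$. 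Thus $\epsilon_A^\text{CCG} = \tfrac14(\textup{VAR}_1 - \textup{COV}\,\textup{VAR}_2^{-1}\,\textup{COV}^\mathrm{T})$ becomes a single rational function of $\sigma^2$.

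Finally I would clear denominators. Because $\alpha^2 \propto (\tau_0^2\tau_1^2)^{-2}$ and Sherman--Morrison contributes the factor $\beta^2(\beta^2 + p\overline{p}\lVert\mu_d\rVert^2)$, the natural common denominator is $16\,\tau_0^4\tau_1^4(\beta^2 + p\overline{p}\lVert\mu_d\rVert^2)$; expanding this product in $\sigma^2$ reproduces $16(r_1 + r_2\sigma^2 + r_3\sigma^4 + r_4\sigma^6 + r_5\sigma^8 + \sigma^{10})$, and expanding the numerator reproduces $(\sigma_1^2-\sigma_0^2)^2(q_1 + q_2\sigma^2 + q_3\sigma^4 + 2d\sigma^6)$, giving \eqref{eq:CCG-vector-diag-cov-bound}. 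A useful check along the way is that when $\sigma_0 = \sigma_1$ we have $\alpha = 0$, so $\theta^\sigma$ is linear, $\textup{COV} = b^\mathrm{T}\textup{VAR}_2$, and both $\textup{VAR}_1$ and $\textup{COV}\,\textup{VAR}_2^{-1}\,\textup{COV}^\mathrm{T}$ equal $b^\mathrm{T}\textup{VAR}_2\,b$; hence $\epsilon_A = 0$, consistent with \cref{prop:delta-a-linear-optimal-example} and explaining the overall $(\sigma_1^2-\sigma_0^2)^2$ prefactor. For the dimension-explicit form \eqref{eq:eps-ccg-d}, I would regroup the already-expanded numerator by powers of $d$—noting that $q_1,q_2,q_3$ themselves carry $d$ and $d^2$ terms in addition to the explicit $2d\sigma^6$—and collect them into $c_1 d^2 + c_2 d + c_3$ over the unchanged denominator $c_4$. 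The single genuine obstacle is not conceptual but the sheer volume of bookkeeping in fully expanding $\textup{COV}$ and the Sherman--Morrison quadratic form and matching coefficients exactly; I would control error accumulation by keeping everything factored in $\tau_0^2,\tau_1^2,\beta^2,\lVert\mu_d\rVert^2$ until the final expansion and by verifying the $\sigma\to 0$, $\sigma\to\infty$, and $\sigma_0=\sigma_1$ limits.
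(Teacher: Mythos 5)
Your proposal is correct and follows essentially the same route as the paper's proof: specialize \cref{thm:CCG-vector-general} to $\Tilde{\Sigma}_s=(\sigma_s^2+\sigma^2)I$ (which trivializes the eigenstructure so that $M_s$, $V_s$, and $u^{(s)}$ have the same closed forms the paper derives), invert $\textup{VAR}_2$ via Sherman--Morrison, and expand the resulting rational function of $\sigma^2$, regrouping by powers of $d$ for \eqref{eq:eps-ccg-d}. The only real difference is cosmetic: you use translation invariance of the affine-fit residual to set $\mu_0=0$ (a valid and mildly labor-saving reduction, consistent with the final bound depending on the means only through $\lVert\mu_1-\mu_0\rVert_2^2$), whereas the paper keeps both means and collects $\textup{COV}$ into the combination $\gamma_1\mu_1^\mathrm{T}+\gamma_0\mu_0^\mathrm{T}$ before the same cancellation of the factor $\beta^2$ that you note.
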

\begin{proof}[Proof sketch]\let\qed\relax
Substituting $\Sigma_s = \sigma^2_s I$, and consequently $\Tilde{\Sigma}_s = (\sigma^2_s+\sigma^2) I$, into \cref{thm:CCG-vector-general}, we then simplify and manipulate the resulting expressions to obtain the result in \eqref{eq:CCG-vector-diag-cov-bound}. 
Proof details can also be found in 
\if \extended 0%
    \cite[Appendix~F]{ISITextenver}.
    \fi%
    \if \extended 1%
    Appendix \ref{appendix:proof-CCG-vector-diag-cov}.
    \fi
\end{proof}
\begin{remark}
    For large values of $\sigma$, $\epsilon_A^{\text{CCG}}$ in \eqref{eq:CCG-vector-diag-cov-bound} decays as $\mathcal{O}(1/\sigma^{4})$. Similar to \cref{fig:bsc-noise-density-progression}, in \cref{fig:ccg-noise-density-progression} in Appendix \ref{sec:additional_plots}, we illustrate how the optimal $\theta^\sigma$ reduces in curvature from a quadratic to a linear function with increasing $\sigma$.  Note that, without loss of generality, assuming $\sigma_1 > \sigma_0$ results in $c_i > 0$ for all $i \in \{1,2,3,4\}$, as defined in \eqref{eq:coeffs-eps-ccg-d}. Consequently, $\epsilon_A^\textup{CCG}$ grows quadratically with the dimension $d$ of $X^\sigma$. Intuitively, this occurs because the difference between a quadratic and a linear surface becomes more pronounced in higher dimensions, making a linear hypothesis an increasingly poor approximation of the true $\theta^\sigma$. This observation motivates the consideration of richer hypothesis classes, such as neural networks.  
\end{remark}

\subsection{Limitations of Derived Bounds on $\epsilon_A$ for Linear Hypothesis Class}
While the derived bounds $\epsilon_A^\text{BSC}$ and $\epsilon_A^\text{CCG}$ on $\epsilon_A$ in \cref{thm:delta-a-bsc-example,thm:CCG-vector-general}, respectively, provide a way to express the approximation error in terms of the parameters of the corresponding distributions and reveal the behavior of $\epsilon_A$ for a sigmoid-composed-with-linear hypothesis class, they serve as loose approximations—particularly when $\sigma$ is small, as illustrated in \cref{fig:bound-tightness}. This results in trivial lower bounds on the true MMSE, rendering them uninformative. 

This limitation arises from the inability to effectively leverage the contraction property of the sigmoid function, highlighting the need for a more refined approach to bounding $\epsilon_A$. Consequently, in the next section, rather than relying on these loose upper bounds, we directly use the exact formulation:
$\epsilon_A = \|\eta^\sigma - h^*_\mathcal{H}\|^2_2$ to illustrate results.

\begin{figure}
    \centering
    \begin{subfigure}{0.48\textwidth}
        \centering
        \includegraphics[width=\textwidth]{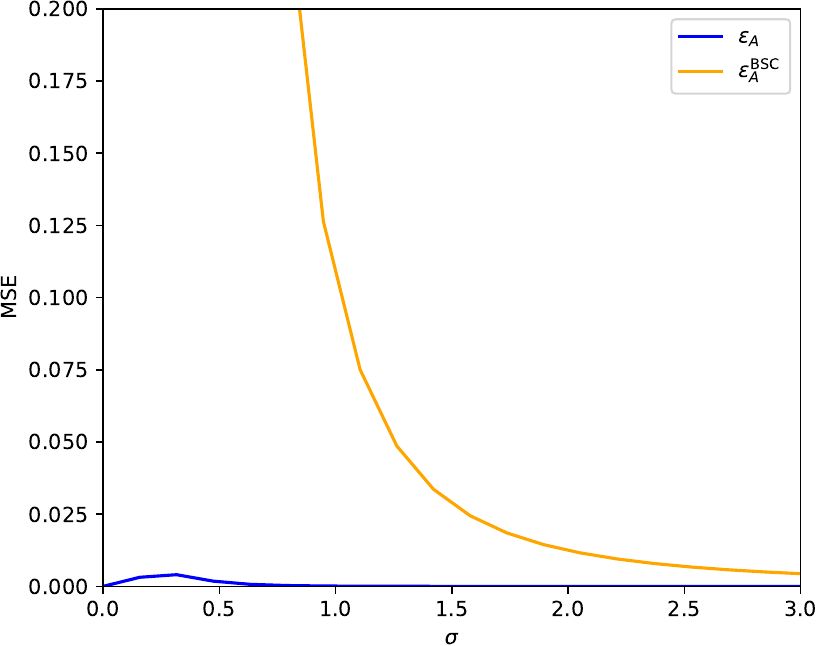}
        \caption{Binary Symmetric Channel}
    \end{subfigure}\hfill\hfill%
    \begin{subfigure}{0.48\textwidth}
        \centering
        \includegraphics[width=\textwidth]{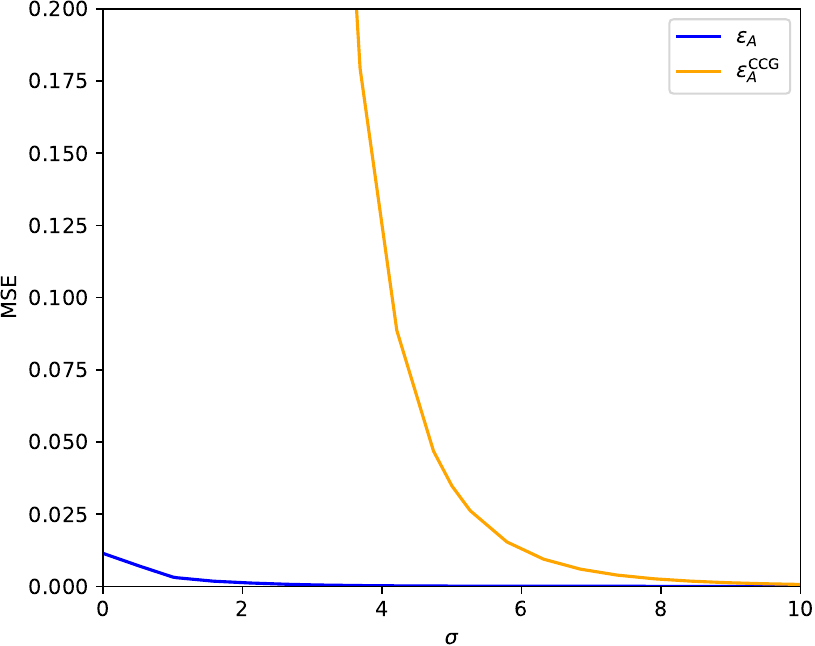}
        \caption{Class-conditional Gaussian}
    \end{subfigure}
    \caption{
    Comparison of the bounds $\epsilon_A^\text{BSC}$ (using the first four terms of the series in \eqref{eq:epsilon-a-bsc}) and $\epsilon_A^\text{CCG}$ (from \eqref{eq:eps-ccg-d}) against the true value of $\epsilon_A$ in the binary symmetric channel (BSC) and class-conditional Gaussian (CCG) settings, corresponding to \cref{thm:delta-a-bsc-example} and \cref{cor:CCG-vector-diag-cov}, respectively. Both bounds become tighter for large $\sigma$ but are loose for small $\sigma$. The true $\epsilon_A$ is computed using the closed-form expression for $\eta^\sigma$, with two independent sets of 1M samples of $(X^\sigma, S)$: one to estimate the statistics used in the closed-form expression of the optimal linear estimator $\theta^*_L$ defined in \eqref{eq:opt-theta-l-form} in order to obtain $h^*_\mathcal{H}$, and another to estimate the expectation. 
    In the BSC case, $\epsilon_A = 0$ when $\sigma = 0$, since $X$ is Bernoulli and $\eta^\sigma(x)$ therefore only needs to be learned at $x=0$ and $x=1$. A sigmoid-composed-with-linear model can exactly match these values, yielding zero approximation error. The parameters for the BSC setting are $p = 1/4$ and $p_N = 1/4$; for the CCG setting, they are $p = 1/4$, $\sigma_0^2 = 1$, $\sigma_1^2 = 3$, $\mu_0 = -1$, $\mu_1 = 1$, and $d = 1$.}
    \label{fig:bound-tightness}
\end{figure}


\section{Illustration of Results}
\label{sec:empirical}


We now illustrate our MMSE bounds for 
the following distributional relationship between $X$ and $S$, where $S \in \{0,1\}$:
(i) binary symmetric channel, 
(ii) class-conditional Gaussians with different covariances, and
(iii) class-conditional mixtures of Gaussians. The models in (i) and (ii) capture simple but clear relationships between $S$ and $X$ where mathematical analysis was possible as detailed in the previous section; in contrast, (iii) captures a more complex and more realistic per-class multi-modal distributions. 

For each $(X,S)$ relationship listed above, to empirically verify the lower bounds presented, we estimate several key terms through simulation. We begin by observing that the following quantities are population-based: $\eta^\sigma$, $h^*_\mathcal{H}$, $\epsilon_A$, and the true $\mathrm{MMSE}(S|X^\sigma)$.  To obtain a high-confidence empirical estimate of these statistical quantities, we sample 2M tuples $(X^\sigma, S)$, of which we use 1M to estimate $h^*_\mathcal{H}$, the optimal estimator for the restricted hypothesis class. We consider two possible hypothesis classes: (i) logistic model, i.e., sigmoid activation composed with a linear model, and (ii) single hidden layer neural network (SHL-NN) with ReLU and sigmoid activations for the hidden layer and output layer, respectively, for $d_w$ number of hidden neurons (width). For the logistic model, we use these 1M tuples to estimate the statistical quantities of the closed-form expression for the optimal linear model $\theta^*_L$ in \eqref{eq:opt-theta-l-form}. For the SHL-NN models, using these 1M tuples, we minimize the square loss via gradient descent (full hyperparameters are listed in \cref{appendix:experimental_setup}) to learn $h^*_\mathcal{H}$.
Using the remaining 1M samples, we estimate the statistical quantities $\mathrm{MMSE}(S|X^\sigma)$ and $\epsilon_A$; to this end, we also use the knowledge of the distributions to derive the closed-form expression for $\eta^\sigma$. The $h^*_\mathcal{H}$ learned from the first 1M samples is used in estimating $\epsilon_A$ using the equality in \eqref{eq:approx-error-bound}. We additionally use $h^*_\mathcal{H}$ to calculate the (Bernstein-based) bound on $\epsilon_C$ in \eqref{eq:eps-c-bound-bernstein} which results in the tightest bounds. Note that the sample variance term in \eqref{eq:eps-c-bound-bernstein} is computed over the train set of length $n$. For all bounds, we set $\delta = 0.05$ and average all results over 30 runs.

To learn the optimal finite sample restricted hypothesis estimator $\hat{h}_\mathcal{H}$, we follow the same training protocol as described above for $h^*_\mathcal{H}$ except for a much smaller  number of samples $n$ (e.g., $n=500$) with learning rate and training epochs viewed as hyperparameters (detailed in \cref{appendix:experimental_setup}). The resulting training error for a chosen number of samples $n$ 
yields $\mathrm{MSE}_\text{train}(\hat h_\mathcal{H})$. Finally, the code for generating our results can be found at \cite{code}.




\vspace{3pt}
\noindent \textbf{Binary Symmetric Channel:} We first illustrate in \cref{fig:BSC} the BSC setting described in \cref{thm:delta-a-bsc-example} for parameter choices $p=1/4$ and $p_N=1/4$. We explore the effect of $p_N$ in \cref{fig:bsc_pn} in \cref{sec:additional_plots}. We choose $n=500$ for which $\epsilon_C\approx0.05$ for a bound with probability 95\%.
The black solid curve is the empirical estimate of the true MMSE using 1M samples of $(X,S)$, while the orange dashed curve is the empirical MMSE, $\text{MSE}_\text{train}(\hat h_\mathcal{H})$, for a linear model with sigmoid activation trained with $n=500$ samples. The solid blue curve is the training bound provided in \cref{thm:mmse-bound-general} computed using the (million sample-based) empirical estimates of $\epsilon_A$ and $\mathrm{MMSE}(S\vert X^\sigma)$.
We note that the bound provided is quite tight and that the majority of the gap is due to $\epsilon_C$ and could be improved with a larger sample size. 


\begin{figure}
    \centering
    \begin{subfigure}{0.48\linewidth}
    \centering
        \includegraphics[width=\linewidth]{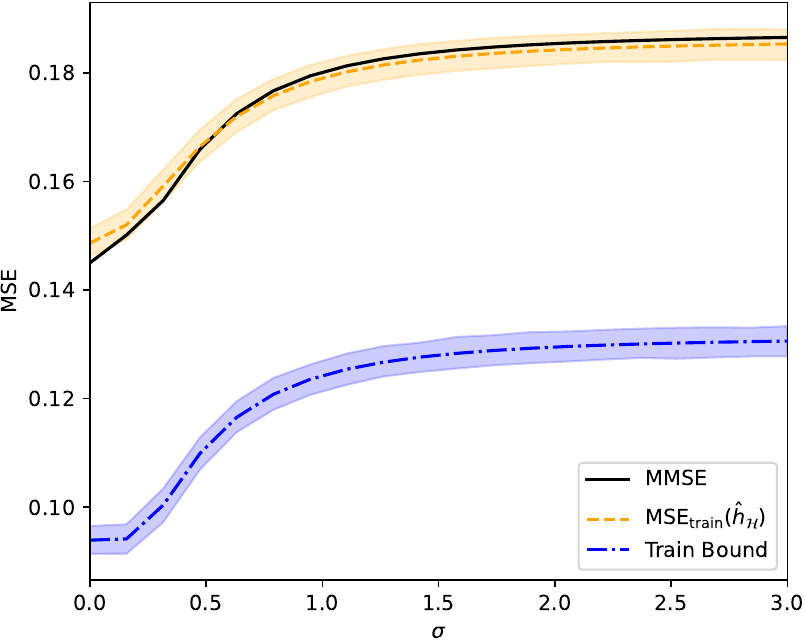}
        \caption{Binary Symmetric Channel for $(X,S)$}
        \label{fig:BSC}
    \end{subfigure}\hfill%
    \begin{subfigure}{0.48\linewidth}
    \centering
    \includegraphics[width=\linewidth]{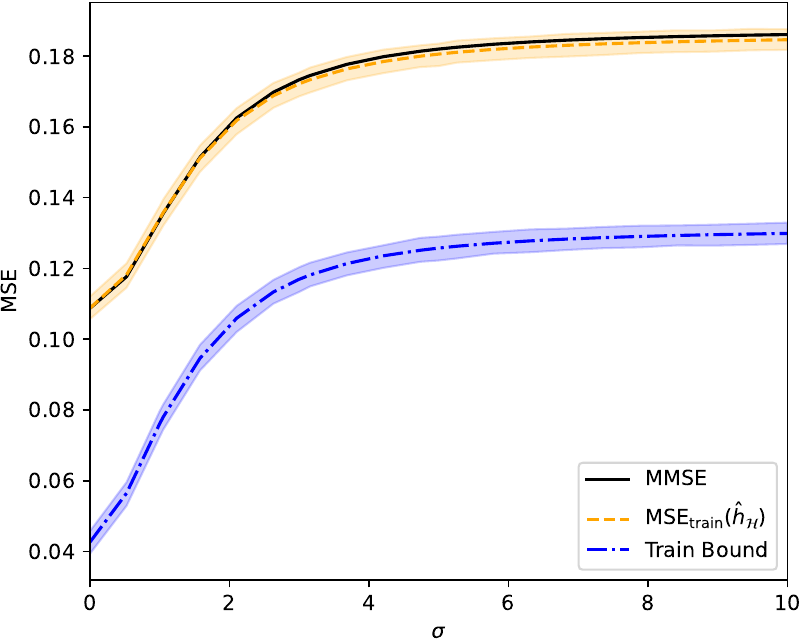}
    \caption{Class-conditional Univariate Gaussian ($d=1$)}
    \label{fig:ccg}
    \end{subfigure}
    \caption{Bound on the MMSE as a function of the noise deviation $\sigma$. For the BSC, the parameters are: $p=1/4$ and  $p_N=1/4$, while for the CCG, the parameters are $p=1/4, \sigma_0^2 = 1, \sigma_1^2 = 3, \mu_0 = -1, \mu_1 = 1$ and $d=1$. Here $\mathrm{MSE}_\text{train}(\hat h_\mathcal{H})$ is calculated with $n=500$ samples. Note that most of the gap between the true MMSE and the given bound comes from $\epsilon_C$ and a linear evaluation model is sufficient for bounding MMSE even for settings where a non-linear model would be optimal.}
\end{figure}
\vspace{3pt}
\noindent\textbf{Class-Conditional Gaussian with Different Covariances:} We next consider the one-dimensional class-conditional Guassian setting as described in \cref{cor:CCG-vector-diag-cov} with parameters $p=1/4, d=1, \sigma_0^2 = 1, \sigma_1^2 = 3, \mu_0 = -1, \mu_1 = 1$.
While a linear model is not optimal in this setting for small $\sigma$, we are still able to effectively bound the true MMSE for every noise level. Note that the loosest bound is at $\sigma=0$, where the approximation error is the largest, following the results of \cref{cor:CCG-vector-diag-cov}.
Despite this, we see in \cref{fig:ccg} that a linear adversarial evaluation model provides a meaningful bound for reasonable values of $\sigma$. Again, a majority of the gap is made up by $\epsilon_C$.

We explore the effect of the data dimensionality $d$ in this dataset for both linear and neural network model classes with hidden layer width $d_w=10$ in \cref{fig:ccg_d_nn_vs_linear}. {To ensure that the means scale appropriately in higher dimensions, we set them such that $\lVert \mu_1 - \mu_0 \rVert^2 = 4$.}
{When comparing against simple neural networks, we observe that as the data dimension $d$ increases, linear models provide better bounds and remain non-vacuous up to $d=20$. In contrast, neural networks begin to overfit very quickly, resulting in vacuous bounds beyond $d>7$. Increasing the number of training samples can help mitigate this overfitting, as illustrated in \cref{fig:ccg_d_n}. With only 500 samples, the training-based bounds for neural networks are very loose and quickly become vacuous. However, with 20,000 samples, the bounds remain tight across all values of $d$. It is also worth noting that even with 20k samples, the bounds degrade as the dimension $d$ increases, due to the approximation error $\epsilon_A$ growing with $d$, as indicated in the linear case by \eqref{eq:CCG-vector-diag-cov-bound} in \cref{cor:CCG-vector-diag-cov} and illustrated for neural networks in \cref{fig:epsilon_a_dimension} in \cref{sec:additional_plots}. Increasing the hidden layer width $d_w$ can help reduce $\epsilon_A$, but doing so requires a larger number of samples $n$ to avoid overfitting—particularly at higher data dimensions $d$, where larger models are more prone to overfitting, as shown in \cref{fig:linear-vs-nn-bounds-dimension} in \cref{sec:additional_plots}.
}

\begin{figure}
    \centering
    \begin{subfigure}[t]{0.48\linewidth}
    \centering
    \includegraphics[width=\linewidth]{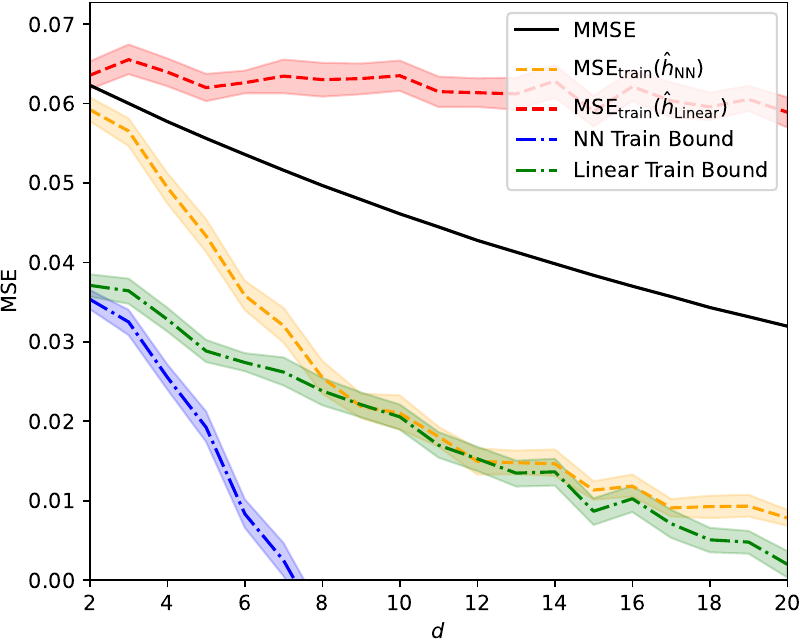}
    \caption{Linear models do not overfit to this simple Gaussian dataset and provide non-vacuous bounds for moderate dimensional datasets even with data limited to $n=1$k. On the other hand, neural networks overfit and thus bounds are vacuous for $d >7$.}
    \label{fig:ccg_d_nn_vs_linear}
    \end{subfigure}\hfill\hfill%
    \begin{subfigure}[t]{0.48\linewidth}
    \centering
    \includegraphics[width=\linewidth]{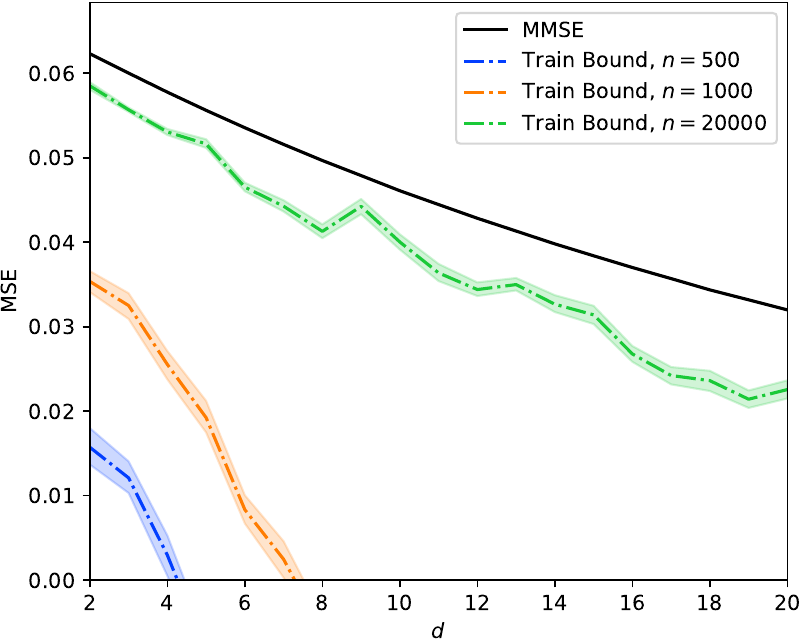}
    \caption{Simple neural networks with only $d_w = 10$ hidden units overfit to small samples, but provide tight bounds provided enough samples. Indeed for 20k samples the neural network provides very tight bounds even for relatively large $d$.}
    \label{fig:ccg_d_n}
    \end{subfigure}
    \caption{Comparison of linear models and single hidden layer neural networks with $d_w=10$ on the class-conditional Gaussian dataset as a function of dimension $d$ for different number of finite samples $n$. }
    \label{fig:ccg_d}
\end{figure}

\vspace{3pt}
\noindent\textbf{Class-Conditional Mixture of Gaussians:} 
While a linear model suffices for class-conditional Gaussian distributions, it is unclear whether it remains effective when the class distributions are multi-modal. To investigate this, we consider a variant of the classic Gaussian XOR dataset, where $X\vert S$ is modeled as a 2D Gaussian mixture; specifically,
\begin{equation}
    X\vert S \sim \sum_{k=1}^{n_m} \frac{1}{n_m} \mathcal{N}\left(\mu_k,\frac{1}{n_m^2}I\right),
    \label{eq:xor-dist}
\end{equation} 
where $\mu_k$ are the component means, evenly spaced on a circle of fixed radius (set to 2 in \cref{fig:xor-sample-3-modes,fig:xor-sample-4-modes}), and $n_m$ is the number of Gaussian components—or modes—per class. The modes are interleaved between classes to form a dataset that is intentionally challenging for linear predictors.
To ensure a consistent scale of difficulty across different values of $n_m$, we scale the variance of each component as $1/n_m^2$ and adjust the standard deviation of the additive noise as $\sigma/n_m$ for a fixed noise level $\sigma$. This ensures that the true MMSE remains on the same order across datasets with varying numbers of modes. In our experiments, we consider datasets with $n_m=3$ and $n_m=4$ modes per class and set $\sigma=2$ and $p=1/2$. Visualizations of $X$ and the corresponding noisy $X^\sigma$ are provided in \cref{fig:xor-sample-3-modes,fig:xor-sample-4-modes} for $n_m=3$ and $n_m=4$, respectively. Since $\eta^\sigma$ can infer $S$ from $X$ with high confidence, we choose a relatively large noise level ($\sigma = 2$) to ensure meaningful obfuscation of the sensitive variable. In \cref{sec:additional_plots}, we present visualizations (\cref{fig:xor-sample-3-modes-radius-1,fig:xor-sample-4-modes-radius-1}) and results (\cref{fig:xor_train_r_1,fig:xor_val_r_1}) for a setting with reduced separability, achieved by using a smaller radius (set to 1), which requires less noise ($\sigma = 0.5$) to attain a comparable level of obfuscation. The results in this less separable case remain qualitatively similar.


\begin{figure}
    \centering
    \begin{subfigure}{0.48\linewidth}
    \centering
    \includegraphics[width=\linewidth]{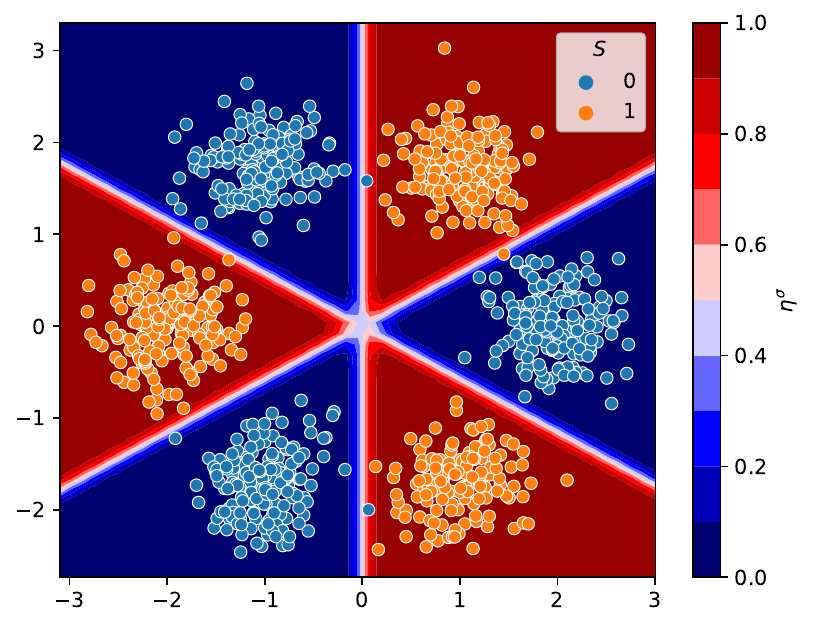}
    \caption{$X$}
    \label{fig:xor-sample-3-no-noise}
    \end{subfigure}\hfill\hfill%
    \begin{subfigure}{0.48\linewidth}
    \centering
    \includegraphics[width=\linewidth]{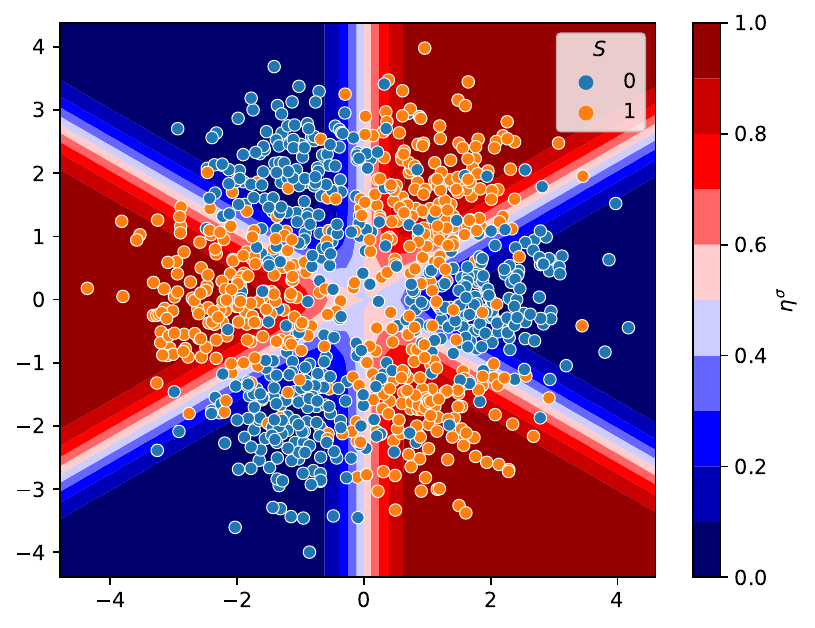}
    \caption{$X^\sigma$}
    \label{fig:xor-sample-3-noise}
    \end{subfigure}
    \caption{A sample of (a) $X$ and (b) $X^\sigma$ with $\sigma = 2$ from our class-conditional mixture dataset with 3 modes per class, where the modes are placed on a circle of radius 2.
    The heat maps (i.e., contours) of $\eta^\sigma$ (the color legend is shown to the right of the figure) are derived from the true statistics of the data.}
    \label{fig:xor-sample-3-modes}
\end{figure}

\begin{figure}
    \centering
    \begin{subfigure}{0.48\linewidth}
    \centering
    \includegraphics[width=\linewidth]{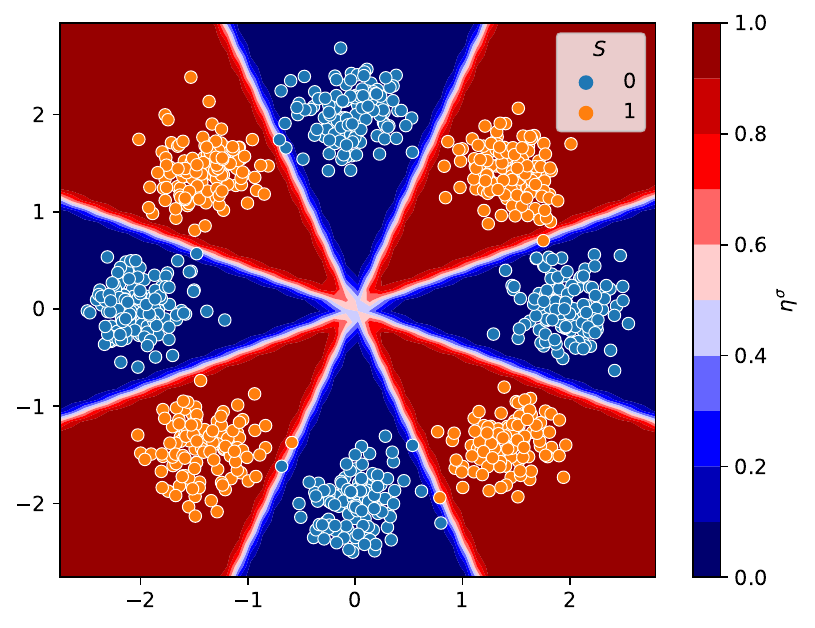}
    \caption{$X$}
    \label{fig:xor-sample-4-no-noise}
    \end{subfigure}\hfill\hfill%
    \begin{subfigure}{0.48\linewidth}
    \centering
    \includegraphics[width=\linewidth]{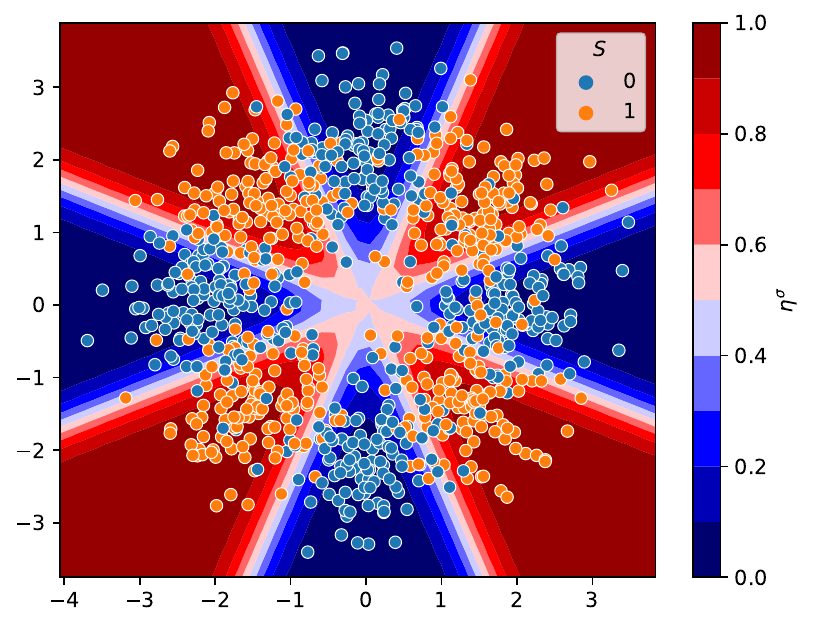}
    \caption{$X^\sigma$}
    \label{fig:xor-sample-4-noise}
    \end{subfigure}
    \caption{A sample of (a) $X$ and (b) $X^\sigma$ with $\sigma = 2$ from our class-conditional mixture dataset with 4 modes per class, where the modes are placed on a circle of radius 2.
    The heat maps (i.e., contours) of $\eta^\sigma$ (the color legend is shown to the right of the figure) are derived from the true statistics of the data.}
    \label{fig:xor-sample-4-modes}
\end{figure}

\begin{figure}
    \centering
    \begin{subfigure}[t]{0.48\linewidth}
    \includegraphics[width=\linewidth]{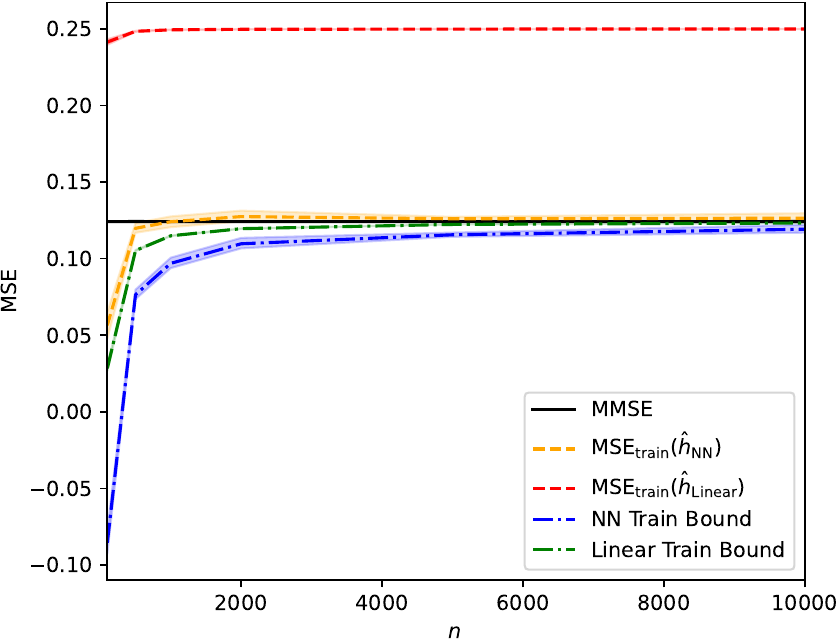}
    \caption{3 modes per class.}
    \label{fig:xor_train_3}
    \end{subfigure}\hfill\hfill%
    \begin{subfigure}[t]{0.48\linewidth}
    \includegraphics[width=\linewidth]{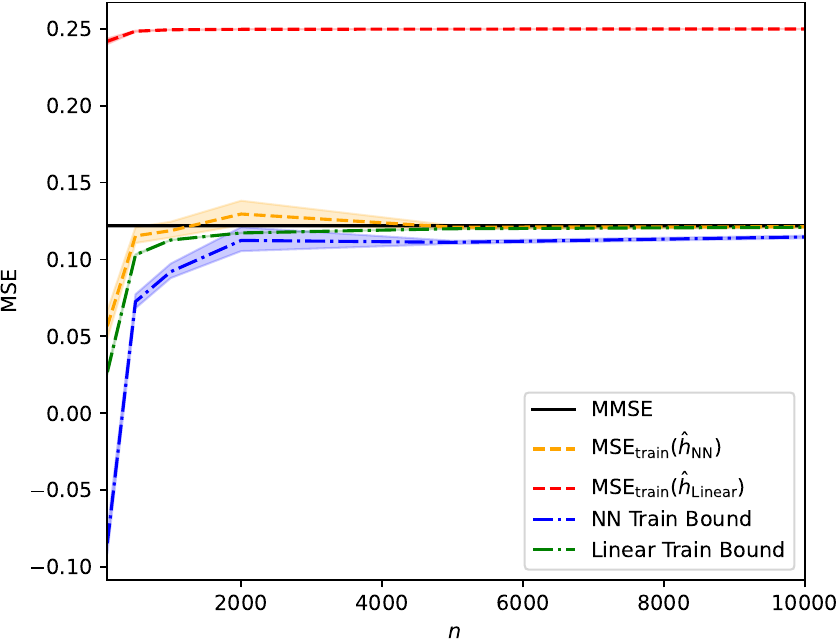}
    \caption{4 modes per class.}
    \label{fig:xor_train_4}
    \end{subfigure}
    \caption{Training MSE-based lower bounds for the class-conditional mixture of Gaussians dataset for radius 2, $\sigma=2$, and $d_w = 10$. For both settings ($n_m=3$ and $n_m=4$), linear models yield tighter bounds for small $n$, despite having a significantly larger approximation error $\epsilon_A$. Neural networks, while initially affected by overfitting, exhibit much smaller approximation error and achieve competitive bounds as $n$ increases.}
    \label{fig:xor_train}
\end{figure}
\begin{figure}
    \centering
    \begin{subfigure}[t]{0.48\linewidth}
    \includegraphics[width=\linewidth]{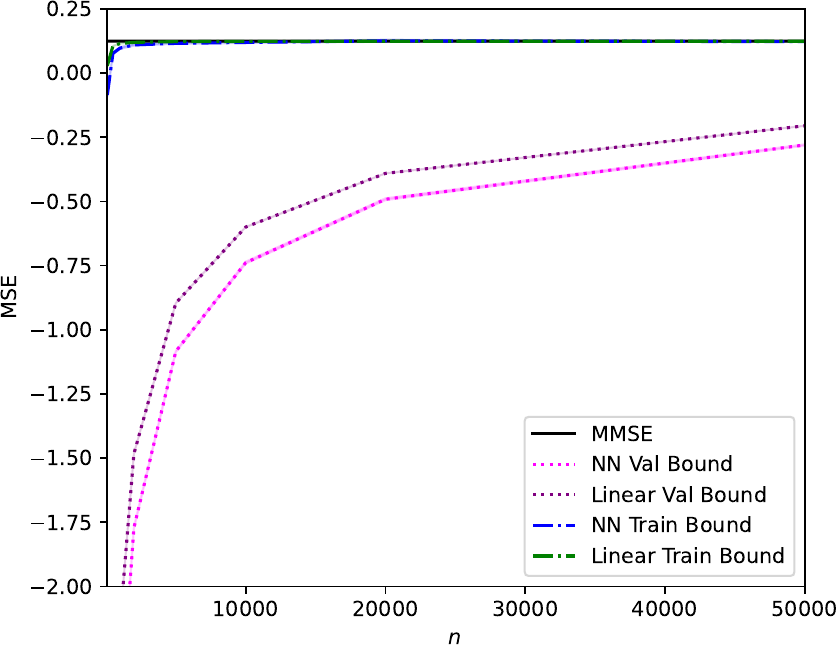}
    \caption{3 modes per class.}
    \label{fig:xor_val_3}
    \end{subfigure}\hfill\hfill%
    \begin{subfigure}[t]{0.48\linewidth}
    \includegraphics[width=\linewidth]{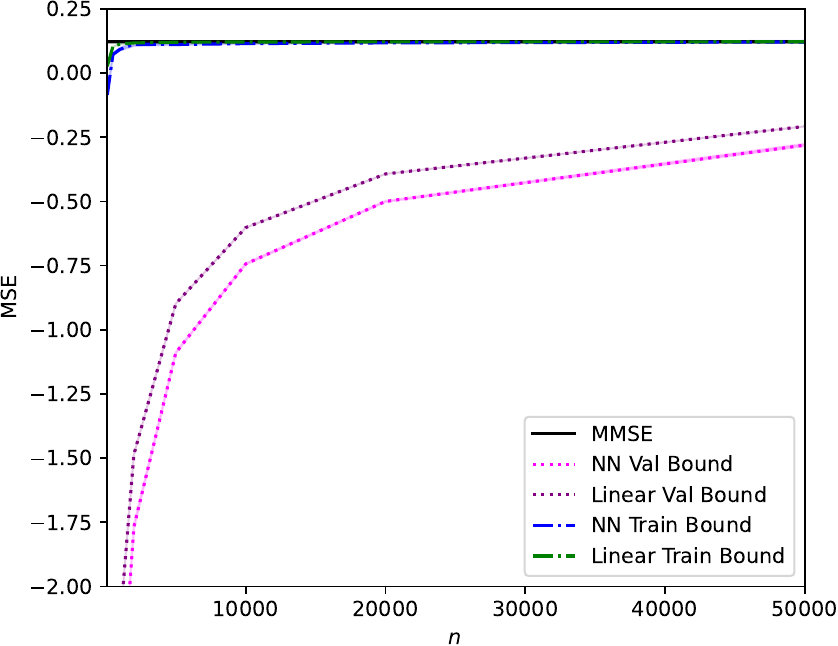}
    \caption{4 modes per class.}
    \label{fig:xor_val_4}
    \end{subfigure}
    \caption{Comparison of training MSE- and validation MSE-based lower bounds for the class-conditional mixture of Gaussians dataset with radius 2, $\sigma=2$ and the neural network hidden-layer width $d_w = 10$. Results are shown for both linear models and shallow neural networks. Despite using as many as $n=50$k training and $m=1$k validation samples,  the validation bounds remain vacuous due to the looseness of the generalization error bound on $\epsilon_G$ in \eqref{eq:eps-g-bound}, which accounts for the complexity of the hypothesis class.}
    \label{fig:xor_val}
\end{figure}

When data is limited, the issue of overfitting becomes evident for the neural network. As shown in \cref{fig:xor_train}, using only $n=500$ training samples leads to very loose lower bounds for the shallow neural network, primarily because the training MSE is so low. In contrast, the linear model does not overfit, so the gap between the true MMSE and the lower bound is primarily driven by the finite-sample error $\epsilon_C$ in \eqref{eq:eps-c}. As $n$ increases, the bounds for both models improve and begin to approach the true MMSE. Similar behavior is observed in the less separable, radius-1 setting, as shown in \cref{fig:xor_train_r_1}.

In this more complex dataset, however, the linear model exhibits a significantly larger approximation error $\epsilon_A$ of approximately 0.1258, compared to only 0.0006 for the neural network (for the 3-mode case). While the optimal linear model can be computed in closed form via \eqref{eq:opt-theta-l-form}, doing so requires access to $\theta^\sigma$, which is not available in practice. This highlights a \emph{key advantage of the neural network bounds}: although they may be looser than the linear model bounds, their approximation error is effectively negligible and can often be treated as zero. This makes the neural network bounds more applicable in practical settings where $\theta^\sigma$ is unknown.



\textit{Training vs. validation}: Next, we compare the efficacy of training and validation bounds (from \eqref{eq:2LNNMainResultRepresentation} and \eqref{eq:mmse-lower-bound-val-error}, respectively) in \cref{fig:xor_val} for various $n$ and $m=1$k. To compute the validation bounds, we use \eqref{eq:eps-g-bound} where $C(\hat h_\mathcal{H})$ is calculated by saving the model weights and compressing them in \texttt{pth.tar.gz} format. The average size for the neural networks is around 3KB, which results in $C(\hat h_\mathcal{H})\approx 24$k. Note that more sophisticated quantization methods such as those in \cite{lotfi2022pac} would yield a smaller constant, but current iterative approaches to distill model size may not achieve the smallest $\mathrm{MSE}_\text{train}$ in the hypothesis class. This would violate our assumptions and thus we are restricted to lossless compression of the model weights. However, this does suggest that other approaches to learn smaller models by restricting the hypothesis class directly could be promising to tighten these bounds.

We observe that the training-based bound is very tight, owing to the large number of samples, while the validation-based bound is vacuous due to the looseness of the compression-based generalization bound. Similar behavior is observed in the less separable, radius-1 setting, as shown in \cref{fig:xor_val_r_1}. This could potentially be improved by employing generalization bounds that more effectively exploit the structure of our loss function (square loss) and the dataset. Our empirical findings suggest that, in practice, training-based bounds—even when using shallow neural networks—may offer more reliable guarantees than validation-based bounds, given the current limitations of generalization bounds in the literature.


\textit{Computability of bounds}: Thus far, we have estimated the statistical quantity $\epsilon_A$ in the lower bound using a large number of samples (1M). However, it is still possible to obtain practical bounds in realistic settings. This is because $\epsilon_A$ can be made arbitrarily small by selecting a sufficiently expressive model class $\mathcal{H}$, reducing the bound in \cref{thm:mmse-bound-general} to depend only on $\mathrm{MSE}_\text{train}(\hat{h}_\mathcal{H})$ and $\epsilon_C$, both of which are computable from finite samples. The main drawback is that preventing overfitting with a large model class requires a correspondingly large number of training samples. Obtaining tight, computable bounds with only limited data remains challenging and is an important direction for future work.

\section{Concluding Remarks}
\label{sec:conclusion}
We have presented a framework for adversarially evaluating the inference of a sensitive feature from a noisy data release using MMSE as the inferential measure. The methodology we have introduced subsumes both regression and multi-class inference setting for bounded sensitive feature. 
We have shown that providing meaningful lower bounds on the MMSE depends on the number of samples $n$ available to perform the evaluation and the complex relationship between $X$ and $S$ as modeled by a chosen hypothesis class $\mathcal{H}$. In particular, these two quantities determine the gap between the true MMSE and the empirical training MSE lower bound with $n$ influencing the finite-sample estimation error and $\mathcal{H}$ determining the error in approximating the best MMSE estimator by the best in the class $\mathcal{H}$. For a given $n$, a complex hypothesis class (e.g., neural network) can significantly reduce the approximation error but may also drive the empirical training MSE to 0. To address this, we have additionally leveraged SOTA generalization results to provide a validation MSE based bounds. 

Our empirical results for synthetic datasets with increasingly complex relationships between $X$ and $S$ (i.e., predicting $S$ from $X$ requires more complex non-linear functions) suggest that a linear hypothesis class suffices in moderately complex data settings especially when obfuscation of $S$ is essential, i.e., large $\sigma$ (e.g., class-conditional Gaussian). Even for highly multi-modal data distributions (e.g., class-conditioned mixture of Gaussians), our results suggest that linear models provide tighter bounds but may not be computable in practice because of the the need for either distribution knowledge or large sample size to estimate the approximation error. In contrast, neural network-based bounds are nearly as tight and can be computed efficiently from finite samples and eliminate the need for computing the approximation error, if chosen appropriately. Interestingly, in both cases, the training MSE-based bound is better than the validation-based one owing to the looseness of even the SOTA generalization bounds.

More broadly, the lower bounds we have presented on the true MMSE are also relevant to communication systems ($S$ and $X$ are the transmitted bit and signal, respectively, over a Gaussian channel) where MMSE continues to be a key measure of goodness. In particular, as these systems employ complex non-linear encoders and decoders, lower bounds on the true MMSE can provide a simple means of evaluating the complex models for the finite blocklength setting.

\section*{Acknowledgment}
This work is supported in part by NSF grants CIF-1901243, CIF-2007688, CIF-2312666, and SCH-2205080. We lost one of our collaborators, Prof. Mario Diaz, halfway through this effort but he continues to be the inspiration for this work. We also wish to thank Prof. Vidya Muthukumar (Georgia Tech) for meaningful discussions on obtaining lower bounds using validation data and Prof. James Melbourne (CIMAT) for discussions on various bounds on the MMSE including Poincar\'e inequalities.

\bibliographystyle{IEEEtran}
\bibliography{Bibliography}

\appendix
\subsection{Proof of Theorem \ref{thm:mmse-bound-general}}
\label{appendix:thm1-proof}
For ease of notation, we define
\begin{equation}
\label{eq:Proof2LNNDelta}
    \Delta_\text{train} \coloneqq \mathrm{MSE}_\textup{train}(\hat{h}_\mathcal{H}) - \mathrm{MMSE}(S \vert X^\sigma).
\end{equation}
Note that we can rewrite $\Delta_\text{train}$ as follows:
\begin{equation}
\label{eq:delta_train_decomp}
    \Delta_\text{train} = \underset{\text{finite sample error}}{\underbrace{\mathrm{MSE}_\textup{train}(\hat{h}_\mathcal{H}) - \mathrm{MMSE}_\mathcal{H}(S \vert X^\sigma)}} + \underset{\text{approximation error}}{\underbrace{\mathrm{MMSE}_\mathcal{H}(S \vert X^\sigma) - \mathrm{MMSE}(S \vert X^\sigma)}},
\end{equation}
where $\mathrm{MMSE}_\mathcal{H}(S\vert X^\sigma)$ is the minimum mean squared-error achieved by a hypothesis $h \in \mathcal{H}$ as defined in \eqref{eq:mmse-restricted-h}.
Recall that $h^*_\mathcal{H}$ defined in \eqref{eq:opt-h} attains the $\mathrm{MMSE}_\mathcal{H}(S|X^\sigma)$.
Then, since $|\mathcal{D}_\text{train}| = n$,
\begin{align}       
\text{finite sample error} &= \frac{1}{n} \sum_{i=1}^{n} (S_{i} - \hat{h}_\mathcal{H}(X^\sigma_{i}))^{2} - \mathbb{E}[(S-h^*_\mathcal{H}(X^\sigma))^{2}] \\
& \le \frac{1}{n} \sum_{i=1}^{n} (S_{i} - h^*_\mathcal{H}(X^\sigma_{i}))^{2} - \mathbb{E}[(S-h^*_\mathcal{H}(X^\sigma))^{2}] \\
& = \mathrm{MSE}_\text{train}(h^*_\mathcal{H}) - \mathbb{E}[(S-h^*_\mathcal{H}(X^\sigma))^{2}] \\
& = \epsilon_C, \label{eq:finite-sample-bound-eps-c}
\end{align}
where $\epsilon_C$ is defined in \eqref{eq:eps-c}.


Recall that the $\mathrm{MMSE}(S \vert X^\sigma)$ is attained by the conditional expectation $\eta^\sigma$ in \eqref{eq:DefEta} and note that
\begin{align}
    \mathrm{MMSE}_\mathcal{H}(S \vert X^\sigma) &= \mathbb{E}[(S-h^*_\mathcal{H}(X))^{2}] \\
    & = \mathbb{E}[(S-h^*_\mathcal{H}(X^\sigma))^{2} - (S-\eta^\sigma(X^\sigma))^2 + (S-\eta^\sigma(X^\sigma))^2] \\
    & = \mathbb{E}[(\eta(X^\sigma) - h^*_\mathcal{H}(X^\sigma))^2 + 2(\eta^\sigma(X^\sigma)-h^*_\mathcal{H}(X^\sigma))(S-\eta^\sigma(X^\sigma)) + (S-\eta^\sigma(X^\sigma))^2] \\
    & = \mathbb{E}[(\eta(X^\sigma) - h^*_\mathcal{H}(X^\sigma))^2] + 2\mathbb{E}\left[(\eta^\sigma(X^\sigma)-h^*_\mathcal{H}(X^\sigma))\mathbb{E}[S-\eta^\sigma(X^\sigma)|X^\sigma]\right] + \mathbb{E}[(S-\eta^\sigma(X^\sigma))^2] \\
    & = \mathbb{E}[(\eta^\sigma(X^\sigma) - h^*_\mathcal{H}(X^\sigma))^2] + 2\mathbb{E}\left[(\eta^\sigma(X^\sigma)-h^*_\mathcal{H}(X^\sigma))(\mathbb{E}[S|X^\sigma]-\eta^\sigma(X^\sigma))\right] + \mathbb{E}[(S-\eta(X^\sigma))^2] \\
    & = \mathbb{E}[(\eta^\sigma(X^\sigma) - h^*_\mathcal{H}(X^\sigma))^2] + \mathbb{E}[(S-\eta^\sigma(X^\sigma))^2].
    \label{eq:mmse-h-decomp}
\end{align}
Observe that \eqref{eq:mmse-h-decomp} implies that $h^*_\mathcal{H} = \argmin_{h \in \mathcal{H}} \, \mathbb{E}[(\eta^\sigma(X^\sigma) - h(X^\sigma))^2]$. We then have that
\begin{align}
    \text{approximation error}  &= \mathbb{E}[(\eta^\sigma(X^\sigma) - h^*_\mathcal{H}(X^\sigma))^2] + \mathbb{E}[(S-\eta^\sigma(X^\sigma))^2] - \mathbb{E}[(S-\eta^\sigma(X^\sigma))^2] \\
    & = \mathbb{E}[(\eta^\sigma(X^\sigma) - h^*_\mathcal{H}(X^\sigma))^2] \\
    & = \epsilon_A,
    \label{eq:ProofGeneralApproximation}
\end{align}
where $\epsilon_A$ is the approximation error as defined in \eqref{eq:eps-a}.
Therefore, using \eqref{eq:finite-sample-bound-eps-c} and \eqref{eq:ProofGeneralApproximation}, we can upper bound \eqref{eq:Proof2LNNDelta} as
\begin{equation}
\label{inq:RepresentationApproximation}
    \Delta_\text{train} = \text{finite sample error} + \text{approximation error} \leq  \epsilon_C + \epsilon_A.
\end{equation}

\subsection{Proof of Theorem \ref{thm:mmse-bound-general-val}}
\label{appendix:mmse-bound-general-val-proof}
For ease of notation, we define
\begin{equation}
\label{eq:val-bound-diff}
    \Delta_\text{val} \coloneqq \mathrm{MSE}_\textup{val}(\hat{h}_\mathcal{H}) - \mathrm{MMSE}(S \vert X^\sigma).
\end{equation}
Note that we can rewrite $\Delta_\text{val}$ as follows:
\begin{equation}
\label{eq:val-bound-decomp}
     \Delta_\text{val} = \underset{\text{finite sample + generalization error}}{\underbrace{\mathrm{MSE}_\textup{val}(\hat{h}_\mathcal{H}) - \mathrm{MMSE}^\mathcal{H}(S \vert X^\sigma)}} + \underset{\text{approximation error}}{\underbrace{\mathrm{MMSE}^\mathcal{H}(S \vert X^\sigma) - \mathrm{MMSE}(S \vert X^\sigma)}},
\end{equation}
where $\mathrm{MMSE}_\mathcal{H}(S\vert X^\sigma)$ is the minimum mean squared-error achieved by a hypothesis $h \in \mathcal{H}$ as defined in \eqref{eq:mmse-restricted-h}. Recall that $h^*_\mathcal{H}$ defined in \eqref{eq:opt-h} attains the $\mathrm{MMSE}_\mathcal{H}(S|X^\sigma)$.
We can decompose the finite sample and generalization error term as follows:
\begin{align}
\label{eq:finite-sample-gen-decomp}
     \text{finite sample + generalization error} &= \mathrm{MSE}_\textup{val}(\hat{h}_\mathcal{H}) - \mathbb E[(S-\hat{h}_\mathcal{H}(X^\sigma))^2] + \mathbb E[(S-\hat{h}_\mathcal{H}(X^\sigma))^2] - \mathrm{MMSE}^\mathcal{H}(S \vert X^\sigma) \\
     & = \Tilde{\epsilon}_C + \underset{\textbf{I}}{\underbrace{\mathbb E[(S-\hat{h}_\mathcal{H}(X^\sigma))^2] - \mathrm{MMSE}^\mathcal{H}(S \vert X^\sigma)}},
\end{align}
where $\Tilde{\epsilon}_C$ is defined in \eqref{eq:eps-c-tilde}. Note that $\Tilde{\epsilon}_C$ can be bounded using large deviation inequalities since
the $|\mathcal{D}_\text{val}|=m$ samples used to compute $\mathrm{MSE}_\textup{val}(\hat{h}_\mathcal{H})$ are independent of the $|\mathcal{D}_\text{train}|=n$ samples used to train $\hat{h}_\mathcal{H}$.

Further decomposing $\textbf{I}$, we obtain
\begin{align}
    \textbf{I} &= \mathbb E[(S-\hat{h}_\mathcal{H}(X^\sigma))^2] - \mathrm{MSE}_\textup{train}(\hat{h}_\mathcal{H}) + \underset{\le 0}{\underbrace{\mathrm{MSE}_\textup{train}(\hat{h}_\mathcal{H}) - \mathrm{MSE}_\textup{train}({h}^*_\mathcal{H})}} + \mathrm{MSE}_\textup{train}({h}^*_\mathcal{H}) - \mathrm{MMSE}^\mathcal{H}(S \vert X^\sigma) \nonumber \\
    & \le \mathbb E[(S-\hat{h}_\mathcal{H}(X^\sigma))^2] - \mathrm{MSE}_\textup{train}(\hat{h}_\mathcal{H}) + \mathrm{MSE}_\textup{train}({h}^*_\mathcal{H}) - \mathrm{MMSE}^\mathcal{H}(S \vert X^\sigma) \nonumber \\
    & = \epsilon_G + \epsilon_C,
\end{align}
where $\epsilon_G$ and $\epsilon_C$ are defined in \eqref{eq:eps-g} and \eqref{eq:eps-c}, respectively.
We then have that
\begin{align}
    \text{finite sample + generalization error} &\le \Tilde{\epsilon}_C + \epsilon_G + \epsilon_C.
    \label{eq:gen-error-val}
\end{align}
As established in the proof of Theorem \ref{thm:mmse-bound-general}, the approximation error is given by
\begin{equation}
    \text{approximation error} = \epsilon_A,
    \label{eq:approx-error-val}
\end{equation}
where $\epsilon_A$ is defined in \eqref{eq:eps-a}.
By combining the bound from \eqref{eq:gen-error-val} with \eqref{eq:approx-error-val}, we obtain
\begin{equation}
     \Delta_\text{val} = \text{finite sample} + \text{generalization error} + \text{approximation error} \le \Tilde{\epsilon}_C + \epsilon_G +\epsilon_C +  \epsilon_A.
\end{equation}



\subsection{Proof of Proposition \ref{prop:delta-a-linear-optimal-example}}
\label{appendix:proof-delta-a-linear-optimal}
Consider setting (ii) first, where $(X\vert S=s) \sim N(\mu_s,\Sigma)$ for mean $\mu_s \in \mathbb{R}^d$ and a symmetric, positive definite covariance matrix $\Sigma \in \mathbb{R}^{d\times d}$. Let $\Tilde{\Sigma} \coloneqq \Sigma+\sigma^2I$. We have that for $i \in \{0,1\}$,
\begin{equation}
    f_{i}^{\sigma}(x) = \frac{1}{(2\pi)^{d/2}|\Tilde{\Sigma}|^{1/2}} e^{-\frac{1}{2}(x-\mu_i)^\mathrm{T}\Tilde{\Sigma}^{-1}(x-\mu_i)}.
\end{equation}
By \eqref{eq:DefTheta}, for all $x\in\mathbb{R}^d$,
\begin{equation}
    \theta^\sigma(x) = (\mu_1-\mu_0)^\mathrm{T}\Tilde{\Sigma}^{-1}x + \frac{1}{2}\left(\mu_0^\mathrm{T}\Tilde{\Sigma}^{-1}\mu_0-\mu_1^\mathrm{T}\Tilde{\Sigma}^{-1}\mu_1\right) + \log\left(\frac{p}{\bar{p}}\right).
\end{equation}
Computing each term in \eqref{eq:delta-a-decomp}, we get
\begin{align}
    \text{Var}(\theta^\sigma(X^\sigma)) = (\mu_1-\mu_0)^\mathrm{T}\Tilde{\Sigma}^{-1}(\text{Var}(X) + \sigma^2 I)\Tilde{\Sigma}^{-1}(\mu_1-\mu_0),
\end{align}
\begin{align}
    \text{Cov}(\theta^\sigma(X^\sigma),X^\sigma) = \text{Cov}(X^\sigma,\theta^\sigma(X^\sigma))^\mathrm{T} =(\mu_1-\mu_0)^\mathrm{T}\Tilde{\Sigma}^{-1}(\text{Var}(X) + \sigma^2 I),
\end{align}
\begin{align}
    \text{Var}(X^\sigma) = \text{Var}(X)+ {\sigma^2}I,
\end{align}
which yields
\begin{align}
    \epsilon_A &\le (\mu_1-\mu_0)^\mathrm{T}\Tilde{\Sigma}^{-1}(\text{Var}(X) + \sigma^2 I)\Tilde{\Sigma}^{-1}(\mu_1-\mu_0) \nonumber \\
    & \qquad \qquad- (\mu_1-\mu_0)^\mathrm{T}\Tilde{\Sigma}^{-1}(\text{Var}(X) + \sigma^2 I)(\text{Var}(X)+ {\sigma^2}I)^{-1}(\text{Var}(X) + \sigma^2 I)\Tilde{\Sigma}^{-1}(\mu_1-\mu_0) = 0.
\end{align}
This implies $\epsilon_A = 0$.

Note that once noise is added to $X$, setting (i), where $X = aS + b$ for $a,b \in \mathbb R^d$, becomes a special case of setting (ii) with $\mu_1=a+b$, $\mu_0=b$, and $\Tilde{\Sigma}=\sigma^2 I$. Therefore, the same result follows.

\subsection{Proof of Theorem \ref{thm:delta-a-bsc-example}}
\label{appendix:prrof-bsc-example}
Since $X = S \oplus N$, where $S\sim\text{Ber}(p)$ and $N\sim\text{Ber}(p_N)$, $X$ is also a Bernoulli random variable with the following PMF:
\begin{align}
    P_X(x) = \begin{cases}
        q \coloneqq p_N \overline{p} + \overline{p_N} p, & x = 1 \\
        \overline{q} \coloneqq 1-q, & x = 0.
    \end{cases}
\end{align}
The conditional densities of $X^\sigma$ given $S=i$ for $i \in \{0,1\}$ are given by
\begin{equation}
    f_{1}^{\sigma}(x) = \frac{1}{\sqrt{2\pi\sigma^{2}}} \left[p_N e^{-x^2/2\sigma^2}+ \overline{p_N} e^{-(x-1)^2/2\sigma^2} \right]
\end{equation}
and
\begin{equation}
    f_{0}^{\sigma}(x) = \frac{1}{\sqrt{2\pi\sigma^{2}}} \left[\overline{p_N} e^{-x^2/2\sigma^2}+ {p_N} e^{-(x-1)^2/2\sigma^2} \right].
\end{equation}
By \eqref{eq:DefTheta}, for all $x\in\mathbb{R}$,
\begin{align}
    \theta^\sigma(x) = \log\left(\frac{p\left[p_N e^{-x^2/2\sigma^2}+ \overline{p_N} e^{-(x-1)^2/2\sigma^2} \right]}{\overline{p}\left[\overline{p_N}e^{-x^2/2\sigma^2} + {p_N} e^{-(x-1)^2/2\sigma^2} \right]} \right)= \log\left(\frac{p\left[p_N + \overline{p_N} e^{(2x-1)/2\sigma^2} \right]}{\overline{p}\left[\overline{p_N} + {p_N} e^{(2x-1)/2\sigma^2} \right]} \right).
    \label{eq:theta-sig-bsc}
\end{align}
Rewriting \eqref{eq:theta-sig-bsc} as 
\begin{align}
    \theta^\sigma(x) = \log\left( \frac{p}{\overline{p}} \right) + \log\left(1+\overline{p_N}(\beta_x-1)\right) - \log\left(1+{p_N}(\beta_x-1)\right),
\end{align}
where $\beta_x = e^{(2x-1)/2\sigma^2}$, and using the Maclaurin series expansion of $\log(1+x)$ yields
\begin{align}
    \theta^\sigma(x) = \log\left( \frac{p}{\overline{p}} \right)+ \sum_{n=1}^\infty \frac{(-1)^n}{n} (\beta_x-1)^n [p_N^n - (1-p_N)^n].
    \label{eq:theta-sig-series-expansion}
\end{align}

Using the law of total variance, we have that
\begin{equation}
    \text{Var}(\theta^\sigma(X^\sigma)) =  \text{Var}(\mathbb E\left[ \theta^\sigma(X^\sigma)|X\right]) + \mathbb E\left[\text{Var}\left(\theta^\sigma(X^\sigma)|X\right)\right].
    \label{eq:law-of-total-variance}
\end{equation}
We can write the first term in \eqref{eq:law-of-total-variance} as
\begin{align}
    \text{Var}(\mathbb E\left[ \theta^\sigma(X^\sigma)|X\right]) = q \overline{q} (I_1 - I_0)^2,
\end{align}
where
\begin{align}
    I_0 &\coloneqq \mathbb E[\theta^\sigma(X^\sigma)|X=0] =  \int_{-\infty}^\infty \theta^\sigma(\sigma z) f_Z(z) dz, \label{eq:I0} \\
    I_1 &\coloneqq \mathbb E[\theta^\sigma(X^\sigma)|X=1] =  \int_{-\infty}^\infty \theta^\sigma(1+\sigma z) f_Z(z) dz, \label{eq:I1}
\end{align}
and $f_Z$ is the density of the standard normal random variable $Z$.

To better understand the behavior of \eqref{eq:I0} and \eqref{eq:I1} as functions of the noise parameter $\sigma$, we use the series expansion of $\theta^\sigma$ in \eqref{eq:theta-sig-series-expansion} and express $\Tilde{I_0} \coloneqq I_0 - \log( {p}/{\overline{p}} )$ as follows:
\begin{align*}
    \Tilde{I_0}  &= \int_{-\infty}^\infty\sum_{n=1}^\infty \frac{(-1)^n}{n} \left(e^{(2\sigma z-1)/2\sigma^2}-1\right)^n [p_N^n - (1-p_N)^n] \, \frac{e^{-z^2/2}}{\sqrt{2\pi}} \;dz \\
    & =  \sum_{n=1}^\infty \frac{(-1)^n}{n} [p_N^n - (1-p_N)^n] \int_{-\infty}^\infty\left(e^{(2\sigma z-1)/2\sigma^2}-1\right)^n  \, \frac{e^{-z^2/2}}{\sqrt{2\pi}} \;dz \\
    & \overset{(a)}{=} \sum_{n=1}^\infty \frac{(-1)^n}{n} [p_N^n - (1-p_N)^n] \int_{-\infty}^\infty \sum_{k=0}^n \binom{n}{k} e^{(n-k)(2\sigma z-1)/2\sigma^2}(-1)^k  \, \frac{e^{-z^2/2}}{\sqrt{2\pi}} \;dz \\
    & {=}  \sum_{n=1}^\infty \frac{(-1)^n}{n} [p_N^n - (1-p_N)^n]\sum_{k=0}^n \binom{n}{k} (-1)^k \int_{-\infty}^\infty \exp{\left(\frac{-(z\sigma-(n-k))^2+(n-k)^2-(n-k)}{2\sigma^2}\right)}\frac{1}{\sqrt{2\pi}} \;dz \\
    & {=} \sum_{n=1}^\infty \frac{(-1)^n}{n} [p_N^n - (1-p_N)^n]\sum_{k=0}^n \binom{n}{k} (-1)^k e^{(k-n)(1+k-n)/{2\sigma^2}} \\
    & \overset{(b)}{=} \sum_{n=1}^\infty \frac{(-1)^n}{n} [p_N^n - (1-p_N)^n]\sum_{k=0}^n \binom{n}{k} (-1)^k \sum_{m=0}^\infty \frac{(k-n)^m(1+k-n)^m}{m!2^m \sigma^{2m}} \\
     &{=} \sum_{m=0}^\infty\sum_{n=1}^\infty \frac{(-1)^n}{n} [p_N^n - (1-p_N)^n]\sum_{k=0}^n \binom{n}{k} (-1)^k  \frac{(k-n)^m(1+k-n)^m}{m!2^m \sigma^{2m}},
\end{align*}
where $(a)$ follows from the Binomial theorem and $(b)$ follows from the Maclaurin series expansion of $e^x$. Considering the first few values of $m$, we obtain
\begin{align}
    m=0:& \quad \sum_{n=1}^\infty \frac{(-1)^n}{n} [p_N^n - (1-p_N)^n]\sum_{k=0}^n \binom{n}{k} (-1)^k  = 0 \\
    m=1:& \quad \sum_{n=1}^\infty \frac{(-1)^n}{n} [p_N^n - (1-p_N)^n]\underset{=\begin{cases}
        \dfrac{1}{\sigma^2} & n=2, \\
        0 & \text{o.w.}
    \end{cases}}{\underbrace{\sum_{k=0}^n \binom{n}{k} (-1)^k  \frac{(k-n)(1+k-n)}{2 \sigma^{2}}}} = \frac{2p_N-1}{2\sigma^2} \\
    m=2:& \quad \sum_{n=1}^\infty \frac{(-1)^n}{n} [p_N^n - (1-p_N)^n]\underset{=\begin{cases}
        \dfrac{1}{2\sigma^4} & n=2, \\[1ex]
        \dfrac{3}{\sigma^4} & n=3,4, \\
        0 & \text{o.w.}
    \end{cases}}{\underbrace{\sum_{k=0}^n \binom{n}{k} (-1)^k  \frac{(k-n)^2(1+k-n)^2}{8 \sigma^{4}}}} = \frac{p_N(1-3p_N+2p_N^2)}{2\sigma^4}.
\end{align}
Therefore,
\begin{equation}
    I_0 \approx \log\left( \frac{p}{\overline{p}} \right)+\frac{2p_N-1}{2\sigma^2} + \frac{p_N(1-3p_N+2p_N^2)}{2\sigma^4}.
\end{equation}
Similarly, we can write $I_1$ as
\begin{align}
    I_1 {=} \log\left( \frac{p}{\overline{p}} \right)+ \sum_{m=0}^\infty\sum_{n=1}^\infty \frac{(-1)^n}{n} [p_N^n - (1-p_N)^n]\sum_{k=0}^n \binom{n}{k} (-1)^k  \frac{(n-k)^m(1+n-k)^m}{m!2^m \sigma^{2m}}.
\end{align}
Again considering only the first few values of $m$, we get
\begin{align}
    m=0:& \quad \sum_{n=1}^\infty \frac{(-1)^n}{n} [p_N^n - (1-p_N)^n]\sum_{k=0}^n \binom{n}{k} (-1)^k  = 0 \\
    m=1:& \quad \sum_{n=1}^\infty \frac{(-1)^n}{n} [p_N^n - (1-p_N)^n]\underset{=\begin{cases}
        \dfrac{1}{\sigma^2} & n=1,2, \\
        0 & \text{o.w.}
    \end{cases}}{\underbrace{\sum_{k=0}^n \binom{n}{k} (-1)^k  \frac{(n-k)(1+n-k)}{2 \sigma^{2}}}} = \frac{1-2p_N}{2\sigma^2} \\
    m=2:& \quad \sum_{n=1}^\infty \frac{(-1)^n}{n} [p_N^n - (1-p_N)^n]\underset{=\begin{cases}
        \dfrac{1}{2\sigma^4} & n=1, \\[1ex]
        \dfrac{7}{2\sigma^4} & n=2, \\[1ex]
        \dfrac{6}{\sigma^4} & n=3, \\[1ex]
        \dfrac{3}{\sigma^4} & n=4, \\
        0 & \text{o.w.}
    \end{cases}}{\underbrace{\sum_{k=0}^n \binom{n}{k} (-1)^k  \frac{(n-k)^2(1+n-k)^2}{8 \sigma^{4}}}} = \frac{-p_N(1-3p_N+2p_N^2)}{2\sigma^4}.
\end{align}
Therefore,
\begin{align}
    I_1 \approx \log\left( \frac{p}{\overline{p}} \right)+\frac{1-2p_N}{2\sigma^2} - \frac{p_N(1-3p_N+2p_N^2)}{2\sigma^4}.
\end{align}
Hence,
\begin{align}
    \text{Var}(\mathbb E\left[ \theta^\sigma(X^\sigma)|X\right]) &= q \overline{q} (I_1 - I_0)^2 \nonumber \\
    &\approx q \overline{q}\left(\frac{1-2p_N}{\sigma^2} - \frac{p_N(1-3p_N+2p_N^2)}{\sigma^4} \right)^2 \nonumber \\
    & = q \overline{q}\left(\frac{(1-2p_N)^2}{\sigma^4} - \frac{2p_N(1-2p_N)(1-3p_N+2p_N^2)}{\sigma^6} + \frac{p_N^2(1-3p_N+2p_N^2)^2}{\sigma^8} \right).
    \label{eq:VE-approx-bsc}
\end{align}
We can write the second term in \eqref{eq:law-of-total-variance} as
\begin{align}
    \mathbb E \left[\text{Var}(\theta^\sigma(X^\sigma)|X) \right] = \overline{q}I_{02} + q I_{12} - \overline{q} I_0^2 - q I_1^2 = q(I_{12}-I_1^2) + \overline{q}(I_{02}-I_0^2),
\end{align}
where
\begin{align}
    I_{02} &\coloneqq \mathbb E[\theta^\sigma(X^\sigma)^2|X=0] =  \int_{-\infty}^\infty \theta^\sigma(\sigma z)^2 f_Z(z) dz, \\
    I_{12} &\coloneqq \mathbb E[\theta^\sigma(X^\sigma)^2|X=1] =  \int_{-\infty}^\infty \theta^\sigma(1+\sigma z)^2 f_Z(z) dz,
\end{align}
and $f_Z$ is again the density of the standard Gaussian random variable $Z$.

Again using the series expansion of $\theta^\sigma$ in \eqref{eq:theta-sig-series-expansion}, we can write $I_{02}$ as
\begin{align}
    I_{02} &= \log^2\left(\frac{p}{\overline{p}} \right) + 2\log\left(\frac{p}{\overline{p}} \right)\left(I_0 - \log\left(\frac{p}{\overline{p}} \right)\right) 
\nonumber\\
    & \quad + \int_{-\infty}^\infty \left(\sum_{n=1}^\infty \frac{(-1)^n}{n} \left(e^{(2\sigma z-1)/2\sigma^2}-1\right)^n [p_N^n - (1-p_N)^n]\right)^2 \, \frac{e^{-z^2/2}}{\sqrt{2\pi}} \;dz \nonumber\\
    & = \log^2\left(\frac{p}{\overline{p}} \right) + 2\log\left(\frac{p}{\overline{p}} \right)\left(I_0 - \log\left(\frac{p}{\overline{p}} \right)\right) \nonumber\\
    & \quad + \sum_{n=1}^\infty \frac{[p_N^n - (1-p_N)^n]^2}{n^2} \int_{-\infty}^\infty \left(e^{(2\sigma z-1)/2\sigma^2}-1\right)^{2n} \, \frac{e^{-z^2/2}}{\sqrt{2\pi}} \;dz \nonumber\\
    & \quad + 2\sum_{j < n} \frac{(-1)^{n+j}[p_N^n - (1-p_N)^n][p_N^j - (1-p_N)^j]}{nj} \int_{-\infty}^\infty \left(e^{(2\sigma z-1)/2\sigma^2}-1\right)^{n+j} \, \frac{e^{-z^2/2}}{\sqrt{2\pi}} \;dz \nonumber\\
       & \overset{(a)}{=} \log^2\left(\frac{p}{\overline{p}} \right) + 2\log\left(\frac{p}{\overline{p}} \right)\left(I_0 - \log\left(\frac{p}{\overline{p}} \right)\right) \nonumber\\
    & \quad + \sum_{m=0}^\infty\sum_{n=1}^\infty \Bigg[ \frac{[p_N^n - (1-p_N)^n]^2}{n^2} \sum_{k=0}^{2n} \binom{2n}{k}(-1)^k \frac{(k-2n)^m(1+k-2n)^m}{m!2^m\sigma^{2m}} \nonumber\\
    & \quad + 2\sum_{j < n} \frac{(-1)^{n+j}[p_N^n - (1-p_N)^n][p_N^j - (1-p_N)^j]}{nj} \sum_{k=0}^{n+j} \binom{n+j}{k}(-1)^k \frac{(k-n-j)^m(1+k-n-j)^m}{m!2^m\sigma^{2m}} \Bigg],
    \label{eq:I02-expansion}
\end{align}
where $(a)$ follows from similar analysis to that used for $I_0$.
Considering just the third term (first summation term) in \eqref{eq:I02-expansion} and analyzing it for the first few values of $m$ yields
\begin{align}
    m=0:& \quad \sum_{n=1}^\infty \frac{[p_N^n - (1-p_N)^n]^2}{n^2} \sum_{k=0}^{2n} \binom{2n}{k} (-1)^k  = 0 \\
    m=1:& \quad \sum_{n=1}^\infty \frac{[p_N^n - (1-p_N)^n]^2}{n^2}{{\sum_{k=0}^{2n} \binom{2n}{k} (-1)^k  \frac{(k-2n)(1+k-2n)}{2 \sigma^{2}}}} = \frac{(2p_N-1)^2}{\sigma^2} \\
    m=2:& \quad \sum_{n=1}^\infty \frac{[p_N^n - (1-p_N)^n]^2}{n^2} {{\sum_{k=0}^{2n} \binom{2n}{k} (-1)^k  \frac{(k-2n)^2(1+k-2n)^2}{8 \sigma^{4}}}} = \frac{5(1-2p_N)^2}{4\sigma^4}.
\end{align}
Doing the same for the fourth term (second summation term) in \eqref{eq:I02-expansion} yields
\begin{align}
    m=0:& \quad 2\sum_{n=1}^\infty \sum_{j<n}\frac{(-1)^{n+j}}{nj} [p_N^n - (1-p_N)^n][p_N^j - (1-p_N)^j]\sum_{k=0}^{n+j} \binom{n+j}{k} (-1)^k  = 0 \\
    m=1:& \quad 2\sum_{n=1}^\infty \sum_{j<n}\frac{(-1)^{n+j}}{nj} [p_N^n - (1-p_N)^n][p_N^j - (1-p_N)^j]\sum_{k=0}^{n+j} \binom{n+j}{k} (-1)^k{{  \frac{(k-n-j)(1+k-n-j)}{2 \sigma^{2}}}} \nonumber \\
    & \qquad \qquad = \frac{2(2p_N-1)^2}{\sigma^2} \\
    m=2:& \quad 2\sum_{n=1}^\infty \sum_{j<n}\frac{(-1)^{n+j}}{nj} [p_N^n - (1-p_N)^n][p_N^j - (1-p_N)^j]\sum_{k=0}^{n+j} \binom{n+j}{k} (-1)^k{{  \frac{(k-n-j)^2(1+k-n-j)^2}{8 \sigma^{4}}}} \nonumber \\
    & \qquad \qquad= \frac{(1-2p_N)^2(3-4p_N+4p_N^2)}{2\sigma^4}.
\end{align}
Therefore,
\begin{align}
    I_{02} &\approx \log^2\left(\frac{p}{\overline{p}} \right) + 2\log\left(\frac{p}{\overline{p}} \right)\left(\frac{2p_N-1}{2\sigma^2}  + \frac{p_N(1-3p_N+2p_N^2)}{2\sigma^4}\right) \nonumber \\
    & \qquad + \frac{3(2p_N-1)^2}{\sigma^2} + \frac{(1-2p_N)^2(11-8p_N+8p_N^2)}{4\sigma^4}.
    \label{eq:I02-approximation}
\end{align}
Similarly, we can write $I_{12}$ as
\begin{align}
    I_{12} 
       & = \log^2\left(\frac{p}{\overline{p}} \right) + 2\log\left(\frac{p}{\overline{p}} \right)\left(I_1 - \log\left(\frac{p}{\overline{p}} \right)\right) \nonumber\\
    & \quad + \sum_{m=0}^\infty\sum_{n=1}^\infty \Bigg[ \frac{[p_N^n - (1-p_N)^n]^2}{n^2} \sum_{k=0}^{2n} \binom{2n}{k}(-1)^k \frac{(k-2n)^m(k-2n-1)^m}{m!2^m\sigma^{2m}} \nonumber\\
    & \quad + 2\sum_{j < n} \frac{(-1)^{n+j}[p_N^n - (1-p_N)^n][p_N^j - (1-p_N)^j]}{nj} \sum_{k=0}^{n+j} \binom{n+j}{k}(-1)^k \frac{(k-n-j)^m(k-n-j-1)^m}{m!2^m\sigma^{2m}} \Bigg],
    \label{eq:I12-expansion}
\end{align}
Considering just the third term (first summation term) in \eqref{eq:I12-expansion} and analyzing it for the first few values of $m$ yields
\begin{align}
    m=0:& \quad \sum_{n=1}^\infty \frac{[p_N^n - (1-p_N)^n]^2}{n^2} \sum_{k=0}^{2n} \binom{2n}{k} (-1)^k  = 0 \\
    m=1:& \quad \sum_{n=1}^\infty \frac{[p_N^n - (1-p_N)^n]^2}{n^2}{{\sum_{k=0}^{2n} \binom{2n}{k} (-1)^k  \frac{(k-2n)(k-2n-1)}{2 \sigma^{2}}}} = \frac{(2p_N-1)^2}{\sigma^2} \\
    m=2:& \quad \sum_{n=1}^\infty \frac{[p_N^n - (1-p_N)^n]^2}{n^2} {{\sum_{k=0}^{2n} \binom{2n}{k} (-1)^k  \frac{(k-2n)^2(k-2n-1)^2}{8 \sigma^{4}}}} = \frac{17(1-2p_N)^2}{4\sigma^4}.
\end{align}
Doing the same for the fourth term (second summation term) in \eqref{eq:I12-expansion} yields
\begin{align}
    m=0:& \quad 2\sum_{n=1}^\infty \sum_{j<n}\frac{(-1)^{n+j}}{nj} [p_N^n - (1-p_N)^n][p_N^j - (1-p_N)^j]\sum_{k=0}^{n+j} \binom{n+j}{k} (-1)^k  = 0 \\
    m=1:& \quad 2\sum_{n=1}^\infty \sum_{j<n}\frac{(-1)^{n+j}}{nj} [p_N^n - (1-p_N)^n][p_N^j - (1-p_N)^j]\sum_{k=0}^{n+j} \binom{n+j}{k} (-1)^k{{  \frac{(k-n-j)(k-n-j-1)}{2 \sigma^{2}}}} \nonumber\\
    & \qquad \qquad = \frac{2(2p_N-1)^2}{\sigma^2} \\
    m=2:& \quad 2\sum_{n=1}^\infty \sum_{j<n}\frac{(-1)^{n+j}}{nj} [p_N^n - (1-p_N)^n][p_N^j - (1-p_N)^j]\sum_{k=0}^{n+j} \binom{n+j}{k} (-1)^k{{  \frac{(k-n-j)^2(k-n-j-1)^2}{8 \sigma^{4}}}} \nonumber \\
    & \qquad \qquad= \frac{(1-2p_N)^2(9-4p_N+4p_N^2)}{2\sigma^4}.
\end{align}
Therefore,
\begin{align}
        I_{12} &\approx \log^2\left(\frac{p}{\overline{p}} \right) + 2\log\left(\frac{p}{\overline{p}} \right)\left(\frac{1-2p_N}{2\sigma^2} - \frac{p_N(1-3p_N+2p_N^2)}{2\sigma^4}\right) \nonumber\\
    & \qquad + \frac{3(2p_N-1)^2}{\sigma^2}   + \frac{(1-2p_N)^2(35-8p_N+8p_N^2)}{4\sigma^4}.
    \label{eq:I12-approximation}
\end{align}
Hence,
\begin{align}
    \mathbb E \left[\text{Var}(\theta^\sigma(X^\sigma)|X) \right] &= \overline{q}(I_{02} - I_0^2) + q (I_{12} - I_1^2) \nonumber\\
    & \approx  \frac{3(1-2p_N)^2}{\sigma^2} + \frac{(1-2p_N)^2(5-4p_N+4p_N^2+12q)}{2\sigma^4} + \frac{p_N\overline{p_N}(1-2p_N)^2}{2\sigma^6} - \frac{p_N^2(1-3p_N+2p_N^2)^2}{4\sigma^8}.
    \label{eq:EV-approximation-bsc}
\end{align}
Finally, we have that
\begin{align}
    \text{Cov}(\theta^\sigma(X^\sigma),X^\sigma) &= \text{Cov}(\theta^\sigma(X^\sigma),X) + \sigma \text{Cov}(\theta^\sigma(X^\sigma),Z) \\
    & = q \overline{q}(I_1-I_0) + \sigma\left( \overline{q} \mathbb E[Z\theta^\sigma(\sigma Z)]+ q \mathbb E[Z\theta^\sigma(1+\sigma Z)] \right).
\end{align}
Observe that
\begin{align}
    E[Z\theta^\sigma(\sigma Z)] & = \int_{-\infty}^\infty z \theta^\sigma(\sigma z) f_Z(z) dz \\
    & = \int_{0}^\infty z \theta^\sigma(\sigma z) f_Z(z) dz - \int_{0}^\infty z \theta^\sigma(-\sigma z) f_Z(-z) dz \\
    & = \int_{0}^\infty z (\theta^\sigma(\sigma z)-\theta^\sigma(-\sigma z)) f_Z(z) dz \\
    & = \int_{0}^\infty z \left(\log\left(\frac{\left[p_N+\overline{p_N}e^{(2\sigma z - 1)/2\sigma^2}\right]\left[\overline{p_N}+{p_N}e^{(-2\sigma z - 1)/2\sigma^2}\right]}{\left[p_N+\overline{p_N}e^{(-2\sigma z - 1)/2\sigma^2}\right]\left[\overline{p_N}+{p_N}e^{(2\sigma z - 1)/2\sigma^2}\right]} \right)\right) f_Z(z) dz \\
     & = \int_{0}^\infty z \left(\log\left(\frac{p_N\overline{p_N}(1+e^{-1/\sigma^2})+ p_N^2 e^{(-2\sigma z - 1)/2\sigma^2} +\overline{p_N}^2e^{(2\sigma z - 1)/2\sigma^2}}{p_N\overline{p_N}(1+e^{-1/\sigma^2})+ p_N^2 e^{(2\sigma z - 1)/2\sigma^2} +\overline{p_N}^2e^{(-2\sigma z - 1)/2\sigma^2}} \right)\right) f_Z(z) dz \\
     & = \int_{0}^\infty z \left(\log\left(\frac{p_N\overline{p_N}(1+e^{1/\sigma^2})+ p_N^2 e^{(-2\sigma z + 1)/2\sigma^2} +\overline{p_N}^2e^{(2\sigma z + 1)/2\sigma^2}}{p_N\overline{p_N}(1+e^{1/\sigma^2})+ p_N^2 e^{(2\sigma z + 1)/2\sigma^2} +\overline{p_N}^2e^{(-2\sigma z + 1)/2\sigma^2}} \right)\right) f_Z(z) dz \\
     & = \int_{0}^\infty z \left(\log\left(\frac{\left[p_N+\overline{p_N}e^{(2\sigma z + 1)/2\sigma^2}\right]\left[\overline{p_N}+{p_N}e^{(-2\sigma z + 1)/2\sigma^2}\right]}{\left[p_N+\overline{p_N}e^{(-2\sigma z + 1)/2\sigma^2}\right]\left[\overline{p_N}+{p_N}e^{(2\sigma z + 1)/2\sigma^2}\right]} \right)\right) f_Z(z) dz \\
     & = \int_{0}^\infty z (\theta^\sigma(\sigma z+1)-\theta^\sigma(-\sigma z+1))  f_Z(z) dz \\
     & = \int_{-\infty}^\infty z \theta^\sigma(\sigma z+1) f_Z(z) dz \\
     & = E[Z\theta^\sigma(\sigma Z+1)].
\end{align}
Therefore,
\begin{align}
    \text{Cov}(\theta^\sigma(X^\sigma),X^\sigma)
    & = q \overline{q}(I_1-I_0) + \sigma\left( \overline{q} \mathbb E[Z\theta^\sigma(\sigma Z)]+ q \mathbb E[Z\theta^\sigma(1+\sigma Z)] \right) \\
    & = q \overline{q}(I_1-I_0) + \sigma \mathbb E[Z\theta^\sigma(\sigma Z)].
\end{align}
Using similar analysis to that above and the series expansion in \eqref{eq:theta-sig-series-expansion}, we can write $\mathbb E[Z\theta^\sigma(\sigma Z)]$ as
\begin{align*}
    \mathbb E[Z\theta^\sigma(\sigma Z)] & =  \int_{-\infty}^\infty z \sum_{n=1}^\infty \frac{(-1)^n}{n} \left(e^{(2\sigma z-1)/2\sigma^2}-1\right)^n [p_N^n - (1-p_N)^n] \, \frac{e^{-z^2/2}}{\sqrt{2\pi}} \;dz \\
    & =  \sum_{n=1}^\infty \frac{(-1)^n}{n} [p_N^n - (1-p_N)^n] \, \sum_{k=0}^n \binom{n}{k}(-1)^k \int_{-\infty}^\infty \exp{\left(\frac{-(z\sigma-(n-k))^2+(n-k)^2-(n-k)}{2\sigma^2} \right)}  \frac{z}{\sqrt{2\pi}} \;dz \\
    & = \sum_{n=1}^\infty \frac{(-1)^n}{n} [p_N^n - (1-p_N)^n] \, \sum_{k=0}^n \binom{n}{k}(-1)^k \frac{n-k}{\sigma}e^{(k-n)(1+k-n)/2\sigma^2} \\
    & = \sum_{m=0}^\infty\sum_{n=1}^\infty \frac{(-1)^n}{n} [p_N^n - (1-p_N)^n] \, \sum_{k=0}^n \binom{n}{k}(-1)^k \frac{n-k}{\sigma}\frac{(k-n)^m(1+k-n)^m}{m!2^m\sigma^{2m}}.
\end{align*}
Considering only the first few values of $m$, we get
\begin{align}
    m=0:& \quad \sum_{n=1}^\infty \frac{(-1)^n}{n} [p_N^n - (1-p_N)^n]\underset{=\begin{cases}
        \dfrac{1}{\sigma} & n=1, \\
        0 & \text{o.w.}
    \end{cases}}{\underbrace{\sum_{k=0}^n \binom{n}{k} (-1)^k \frac{(n-k)}{\sigma}}} = \frac{1-2p_N}{\sigma} \\
    m=1:& \quad \sum_{n=1}^\infty \frac{(-1)^n}{n} [p_N^n - (1-p_N)^n]\underset{=\begin{cases}
        \dfrac{2}{\sigma^3} & n=2, \\[1ex]
        \dfrac{3}{\sigma^3} & n=3, \\
        0 & \text{o.w.}
    \end{cases}}{\underbrace{\sum_{k=0}^n \binom{n}{k} (-1)^k \frac{(n-k)}{\sigma} \frac{(k-n)(1+k-n)}{2 \sigma^{2}}}} = \frac{p_N(-1+3p_N-2p_N^2)}{\sigma^3} \\
    m=2:& \quad \sum_{n=1}^\infty \frac{(-1)^n}{n} [p_N^n - (1-p_N)^n]\underset{=\begin{cases}
        \dfrac{1}{\sigma^5} & n=2, \\[1ex]
        \dfrac{21}{2\sigma^5} & n=3, \\[1ex]
        \dfrac{24}{\sigma^5} & n=4, \\[1ex]
        \dfrac{15}{\sigma^5} & n=5, \\
        0 & \text{o.w.}
    \end{cases}}{\underbrace{\sum_{k=0}^n \binom{n}{k} (-1)^k \frac{(n-k)}{\sigma^2}  \frac{(k-n)^2(1+k-n)^2}{8 \sigma^{4}}}} = \frac{-p_N(1-9p_N+26p_N^2-30p_N^3+12p_N^4)}{2\sigma^5}.
\end{align}
Therefore,
\begin{equation}
    \mathbb E[Z\theta^\sigma(\sigma Z)] \approx \frac{1-2p_N}{\sigma} + \frac{p_N(-1+3p_N-2p_N^2)}{\sigma^3} - \frac{p_N(1-9p_N+26p_N^2-30p_N^3+12p_N^4)}{2\sigma^5}.
\end{equation}
Hence,
\begin{align}
    \text{Cov}(\theta^\sigma(X^\sigma),X^\sigma)
    & = q \overline{q}(I_1-I_0) + \sigma \mathbb E[Z\theta^\sigma(\sigma Z)] \nonumber \\
    & \approx  1-2p_N + \frac{p_N(-1+3p_N-2p_N^2)+q \overline{q}(1-2p_N)}{\sigma^2} \nonumber \\
    & \qquad - \frac{p_N(1-9p_N+26p_N^2-30p_N^3+12p_N^4)+2q \overline{q}p_N(1-3p_N+2p_N^2)}{2\sigma^4}.
    \label{eq:cov-approximation}
\end{align}
Note that
\begin{equation}
    \text{Var}(X^\sigma) = \text{Var}(X) +\sigma^2 = q \overline{q} + \sigma^2.
    \label{eq:var-x-sig-bsc}
\end{equation}
By combining \eqref{eq:VE-approx-bsc}, \eqref{eq:EV-approximation-bsc}, \eqref{eq:cov-approximation}, and \eqref{eq:var-x-sig-bsc}, and incorporating additional terms in each summation for larger values of $m$, we obtain the desired result.

\subsection{Proof of Theorem \ref{thm:CCG-vector-general}}
\label{appendix:proof-CCG-vector-general}
   For $s \in \{0,1\}$, let $\Tilde{\Sigma}_s \coloneqq \Sigma_s +\sigma^2 I$. Then
\begin{equation}
    f_{s}^{\sigma}(x) = \frac{1}{(2\pi)^{d/2}|\Tilde{\Sigma}_s|^{1/2}} e^{-\frac{1}{2}(x-\mu_s)^\mathrm{T}\Tilde{\Sigma}_s^{-1}(x-\mu_s)}.
\end{equation}
By \eqref{eq:DefTheta}, for all $x\in\mathbb{R}^d$,
\begin{align}
    \theta^\sigma(x) = x^\mathrm{T}Ax + b^\mathrm{T}x+c,
    \label{eq:optimal-sep-ccg-diff-cov}
\end{align}
where
\begin{align*}
    A &= \frac{1}{2}\left(\Tilde{\Sigma}_0^{-1} - \Tilde{\Sigma}_1^{-1}\right), \\
    b &= \Tilde{\Sigma}_1^{-1}\mu_1 - \Tilde{\Sigma}_0^{-1}\mu_0, \\
    c & = \frac{1}{2}\mu_0^\mathrm{T}\Tilde{\Sigma}_0^{-1}\mu_0 - \frac{1}{2}\mu_1^\mathrm{T}\Tilde{\Sigma}_1^{-1}\mu_1+ \frac{1}{2}\log\left(\frac{|\Tilde{\Sigma}_0|}{|\Tilde{\Sigma}_1|} \right)+ \log\left( \frac{p}{1-p}\right).
\end{align*}
By the law of total variance,
\begin{align}
    \text{Var}(\theta^\sigma(X^\sigma)) = \mathbb E[\text{Var}(\theta^\sigma(X^\sigma)|S)] + \text{Var}(\mathbb E[\theta^\sigma(X^\sigma)|S]).
    \label{eq:var-decomp}
\end{align}
From \cite[Thm. 3.2b.3]{mathai1992quadratic}, we have that for $s \in \{0,1\}$,
\begin{align}
    M_s &\coloneqq \mathbb E[\theta^\sigma(X^\sigma)|S=s] = \sum_{j=1}^d (\lambda_j^{(s)})+(c+b^\mathrm{T}\mu_s + \mu_s^\mathrm{T}A\mu_s), \\
    V_s &\coloneqq \text{Var}(\theta^\sigma(X^\sigma)|S=s)  = \sum_{j=1}^d 2(\lambda_j^{(s)})^2+ (u_j^{(s)})^2,
\end{align}
where $\lambda_j^{(s)}$, $j \in \{1,\dots,d\}$, are the eigenvalues of $\Tilde{\Sigma}_s^{1/2}A\Tilde{\Sigma}_s^{1/2}$ with corresponding eigenvectors as the columns of a matrix $Q_s$, i.e., $Q_s^\mathrm{T}\Tilde{\Sigma}_s^{1/2}A\Tilde{\Sigma}_s^{1/2}Q_s = \text{diag}(\lambda_1^{(s)},\dots,\lambda_d^{(s)})$, and
\begin{equation}
    u^{(s)} = (u_1^{(s)},\dots,u_d^{(s)})^\mathrm{T} = Q_s^\mathrm{T}(\Tilde{\Sigma}_s^{1/2}b+2\Tilde{\Sigma}_s^{1/2}A\mu_s).
\end{equation}
We can therefore compute the first term in \eqref{eq:var-decomp} as 
\begin{align}
    \mathbb E[\text{Var}(\theta^\sigma(X^\sigma)|S)] &= \mathbb E\left[SV_1+(1-S)V_0\right] = pV_1 + (1-p)V_0
\end{align}
and the second term as
\begin{align}
    \text{Var}(\mathbb E[\theta^\sigma(X^\sigma)|S]) &= \text{Var}(S)\left(M_1-M_0\right)^2 = p(1-p)\left(M_1-M_0\right)^2.
\end{align}
Therefore,
\begin{equation}
    \text{Var}(\theta^\sigma(X^\sigma)) = pV_1 + (1-p)V_0 + p(1-p)\left(M_1-M_0\right)^2.
    \label{eq:var-theta-sigma-vector-gaussian}
\end{equation}
Next, we compute $\text{Cov}(\theta^\sigma(X^\sigma),X^\sigma)$ as
\begin{align}
    \text{Cov}(\theta^\sigma(X^\sigma),X^\sigma) &= \text{Cov}(X^\mathrm{T}AX + 2\sigma Z^\mathrm{T}AX+\sigma^2 Z^\mathrm{T}AZ + b^\mathrm{T}X+\sigma b^\mathrm{T}Z ,X+\sigma Z) \\
    & = \text{Cov}(X^\mathrm{T}AX,X) + b^\mathrm{T}\text{Var}(X) + 2\sigma^2 \text{Cov}(Z^\mathrm{T}AX,Z) + \sigma^2 b^\mathrm{T}\text{Var}(Z) \\
    & = \text{Cov}(X^\mathrm{T}AX,X) + b^\mathrm{T}\text{Var}(X) + 2\sigma^2 \text{Cov}(Z^\mathrm{T}AX,Z) + \sigma^2 b^\mathrm{T}.
\end{align}
Note that
\begin{align}
    \text{Cov}(Z^\mathrm{T}AX,Z)  = \mathbb E[Z^\mathrm{T}AXZ^\mathrm{T}] = \mathbb E[X^\mathrm{T}AZZ^\mathrm{T}] = \mathbb E[X]^\mathrm{T} A \mathbb E[ZZ^\mathrm{T}] = [p\mu_1+(1-p)\mu_0]^\mathrm{T} A
\end{align}
and
\begin{align}
    \text{Var}(X) &= \mathbb E[\text{Var}(X|S)] + \text{Var}(\mathbb E[X|S]) \\
    & = \mathbb E[S\Sigma_1 + (1-S)\Sigma_0] + \text{Var}\left(S\mu_1+(1-S)\mu_0\right) \\
    & = p\Sigma_1 + (1-p)\Sigma_0 + p(1-p)(\mu_1-\mu_0)(\mu_1-\mu_0)^\mathrm{T}.
\end{align}
By the law of total covariance, we also have that
\begin{equation}
    \text{Cov}(X^\mathrm{T}AX,X) = \mathbb E[\text{Cov}(X^\mathrm{T}AX,X|S)] + \text{Cov}(\mathbb E[X^\mathrm{T}AX|S],\mathbb E[X|S]).
    \label{eq:total-cov}
\end{equation}
Focusing on the first term, let $\Tilde{X}_s \coloneqq (X|S=s) - \mu_s$ for $s \in \{0,1\}$. Then $\Tilde{X}_s \sim \mathcal{N}(0,\Sigma_s)$, and hence
\begin{align}
    \text{Cov}(X^\mathrm{T}AX,X|S=s) &= \text{Cov}\left((\Tilde{X}_s+\mu_s)^\mathrm{T}A(\Tilde{X}_s+\mu_s),\Tilde{X}_s+\mu_s\right) \\
    & = \text{Cov}\left(\Tilde{X}_s^\mathrm{T}A\Tilde{X}_s, \Tilde{X}_s\right) + 2\mu_s^\mathrm{T}A\text{Var}\left(\Tilde{X}_s\right) \\
    & = 2\mu_s^\mathrm{T}A\Sigma_s.
\end{align}
Therefore,
\begin{equation}
    \mathbb E[\text{Cov}(X^\mathrm{T}AX,X|S)] = 2p\mu_1^\mathrm{T}A\Sigma_1 + 2(1-p)\mu_0^\mathrm{T}A\Sigma_0.
\end{equation}
Now, focusing on the second term in \eqref{eq:total-cov}, from \cite[Thm. 3.2b.2]{mathai1992quadratic}, we have that
\begin{equation}
    \mathbb E[X^\mathrm{T}A X|S=s] = \text{tr}(A\Sigma_s) + \mu_s^\mathrm{T}A\mu_s.
\end{equation}
Therefore,
\begin{equation}
    \mathbb E[X^\mathrm{T}AX|S] = S\left(\text{tr}(A(\Sigma_1 - \Sigma_0)) + \mu_1^\mathrm{T}A\mu_1 - \mu_0^\mathrm{T}A\mu_0 \right) + \text{tr}(A\Sigma_0) + \mu_0^\mathrm{T}A\mu_0,
\end{equation}
and hence
\begin{align}
    \text{Cov}(\mathbb E[X^\mathrm{T}AX|S],\mathbb E[X|S]) & = \text{Cov}\left(S\left(\text{tr}(A(\Sigma_1 - \Sigma_0)) + \mu_1^\mathrm{T}A\mu_1 - \mu_0^\mathrm{T}A\mu_0 \right), S(\mu_1-\mu_0) \right) \\
    & = \left(\text{tr}(A(\Sigma_1 - \Sigma_0)) + \mu_1^\mathrm{T}A\mu_1 - \mu_0^\mathrm{T}A\mu_0 \right)p(1-p)(\mu_1\mu_0)^\mathrm{T}.
\end{align}
Thus, 
\begin{align}
    \text{Cov}(\theta^\sigma(X^\sigma),X^\sigma) & = 2p\mu_1^\mathrm{T}A\Sigma_1 + 2(1-p)\mu_0^\mathrm{T}A\Sigma_0 \nonumber \\
    & \quad + \left(\text{tr}(A(\Sigma_1 - \Sigma_0)) + \mu_1^\mathrm{T}A\mu_1 - \mu_0^\mathrm{T}A\mu_0 \right)p(1-p)(\mu_1-\mu_0)^\mathrm{T} \nonumber\\
    & \quad + b^\mathrm{T}\left[p\Sigma_1 + (1-p)\Sigma_0 + p(1-p)(\mu_1-\mu_0)(\mu_1-\mu_0)^\mathrm{T} \right] \nonumber\\
    & \quad + 2\sigma^2 \left[p\mu_1+(1-p)\mu_0\right]^\mathrm{T} A +\sigma^2b^\mathrm{T}.
    \label{eq:cov-term-vector-gaussian}
\end{align}
Note that $\text{Cov}(X^\sigma,\theta^\sigma(X^\sigma)) = \text{Cov}(\theta^\sigma(X^\sigma),X^\sigma)^\mathrm{T}$.
Finally, we have that
\begin{equation}
    \text{Var}(X^\sigma) = \text{Var}(X) + \sigma^2\text{Var}(Z) = p\Sigma_1 + (1-p)\Sigma_0 +p(1-p)(\mu_1-\mu_0)(\mu_1-\mu_0)^\mathrm{T} + \sigma^2 I.
    \label{eq:var-x-sigma-vector-gaussian}
\end{equation}
Combining \eqref{eq:var-theta-sigma-vector-gaussian}, \eqref{eq:cov-term-vector-gaussian} and \eqref{eq:var-x-sigma-vector-gaussian}, we obtain the result.

\subsection{Proof of Corollary \ref{cor:CCG-vector-diag-cov}}
\label{appendix:proof-CCG-vector-diag-cov}
When $\Sigma_s = \sigma_s^2 I$ for $s \in \{0,1\}$, \eqref{eq:optimal-sep-ccg-diff-cov} simplifies to 
\begin{align}
    \theta^\sigma(x) = x^\mathrm{T}Ax + b^\mathrm{T}x+c,
\end{align}
where
\begin{align*}
    A &= \frac{1}{2}\left((\sigma_0^2+\sigma^2)^{-1} - (\sigma_1^2 + \sigma^2)^{-1}\right)I = \frac{\sigma_1^2-\sigma_0^2}{2(\sigma_1^2+\sigma^2)(\sigma_0^2+\sigma^2)} I \eqqcolon aI, \\
    b &= (\sigma_1^2 + \sigma^2)^{-1}\mu_1 - (\sigma_0^2 + \sigma^2)^{-1}\mu_0 = \frac{(\sigma_0^2+\sigma^2)\mu_1-(\sigma_1^2+\sigma^2)\mu_0}{(\sigma_1^2+\sigma^2)(\sigma_0^2+\sigma^2)}, \\
    c & = \frac{1}{2}(\sigma_0^2 + \sigma^2)^{-1}\|\mu_0\|^2_2 - \frac{1}{2}(\sigma_1^2 + \sigma^2)^{-1}\|\mu_1\|^2_2+ \frac{1}{2}d\log\left(\frac{\sigma_0^2+\sigma^2}{\sigma_1^2+\sigma^2} \right)+ \log\left( \frac{p}{1-p}\right).
\end{align*}
Since $\Tilde{\Sigma}_s = (\sigma_s^2+\sigma^2)I$, $\Tilde{\Sigma}_s^{1/2} = \sqrt{\sigma_s^2+\sigma^2}I$, and hence \eqref{eq:cond-exp-ccg} simplifies to
\begin{align}
    M_0 &= \frac{d}{2}\left( \frac{\sigma_1^2-\sigma_0^2}{\sigma_1^2+\sigma^2}\right) + c + b^\mathrm{T}\mu_0 + \mu_0^\mathrm{T}A\mu_0 = \frac{d}{2}\left( \frac{\sigma_1^2-\sigma_0^2}{\sigma_1^2+\sigma^2}\right) + c + \frac{2(\sigma_0^2+\sigma^2)\mu_1^\mathrm{T}\mu_0-(\sigma_1^2+\sigma_0^2+2\sigma^2)\|\mu_0\|^2_2}{2(\sigma_1^2+\sigma^2)(\sigma_0^2+\sigma^2)}, \label{eq:cond-mean-class-0} \\
    M_1 & = \frac{d}{2}\left( \frac{\sigma_1^2-\sigma_0^2}{\sigma_0^2+\sigma^2}\right) + c + b^\mathrm{T}\mu_1 + \mu_1^\mathrm{T}A\mu_1 = \frac{d}{2}\left( \frac{\sigma_1^2-\sigma_0^2}{\sigma_0^2+\sigma^2}\right) + c + \frac{(\sigma_1^2+\sigma_0^2+2\sigma^2)\|\mu_1\|^2_2 - 2(\sigma_1^2+\sigma^2)\mu_1^\mathrm{T}\mu_0}{2(\sigma_1^2+\sigma^2)(\sigma_0^2+\sigma^2)},
    \label{eq:cond-mean-class-1}
\end{align}
for $S=0$ and $S=1$, respectively. We also have that
\begin{align}
    u^{(0)} &= \Tilde{\Sigma}_0^{1/2}(b + 2 A\mu_0) \nonumber \\
    & = \frac{\sqrt{\sigma_0^2+\sigma^2}}{\sigma_1^2+\sigma^2}(\mu_1-\mu_0), \\
    u^{(1)} &= \Tilde{\Sigma}_1^{1/2}(b + 2 A\mu_1) \nonumber \\
    & = \frac{\sqrt{\sigma_1^2+\sigma^2}}{\sigma_0^2+\sigma^2}(\mu_1-\mu_0).
\end{align}
Therefore, \eqref{eq:cond-var-ccg} simplifies to
\begin{align}
    V_0 &= \frac{d}{2} \left(\frac{\sigma_1^2-\sigma_0^2}{\sigma^2_1 + \sigma^2} \right)^2 + \frac{\sigma^2_0 + \sigma^2}{(\sigma^2_1 + \sigma^2)^2}\lVert \mu_1-\mu_0 \rVert_2^2 = \frac{d(\sigma_1^2-\sigma_0^2)^2+2(\sigma_0^2+\sigma^2)\lVert \mu_1-\mu_0 \rVert_2^2}{2(\sigma_1^2+\sigma^2)^2}, \label{eq:cond-var-class-0} \\
    V_1 &= \frac{d}{2} \left(\frac{\sigma_1^2-\sigma_0^2}{\sigma^2_0 + \sigma^2} \right)^2 + \frac{\sigma^2_1 + \sigma^2}{(\sigma^2_0 + \sigma^2)^2}\lVert \mu_1-\mu_0 \rVert_2^2 = \frac{d(\sigma_1^2-\sigma_0^2)^2+2(\sigma_1^2+\sigma^2)\lVert \mu_1-\mu_0 \rVert_2^2}{2(\sigma_0^2+\sigma^2)^2},
    \label{eq:cond-var-class-1}
\end{align}
for $S= 0$ and $S=1$, respectively.
Substituting \eqref{eq:cond-mean-class-0}, \eqref{eq:cond-mean-class-1}, \eqref{eq:cond-var-class-0}, and \eqref{eq:cond-var-class-1} into \eqref{eq:var-theta-sig-ccg} yields
\begin{align}
    \text{Var}(\theta^\sigma(X^\sigma)) &= p\left(\frac{d(\sigma_1^2-\sigma_0^2)^2+2(\sigma_1^2+\sigma^2)\lVert \mu_1-\mu_0 \rVert_2^2}{2(\sigma_0^2+\sigma^2)^2} \right) + (1-p)\left(\frac{d(\sigma_1^2-\sigma_0^2)^2+2(\sigma_0^2+\sigma^2)\lVert \mu_1-\mu_0 \rVert_2^2}{2(\sigma_1^2+\sigma^2)^2} \right) \\
    & \qquad + p(1-p)\left(\frac{d(\sigma_1^2-\sigma_0^2)^2 + (\sigma_1^2+\sigma_0^2+2\sigma^2)\lVert \mu_1-\mu_0 \rVert_2^2}{2(\sigma_1^2+\sigma^2)(\sigma_0^2+\sigma^2)} \right)^2.
\end{align}
Making appropriate substitutions, \eqref{eq:cov-theta-x-ccg} simplifies to
\begin{align}
    \text{Cov}(\theta^\sigma(X^\sigma),X^\sigma) & = a\left[ 2p(\sigma_1^2+\sigma^2)\mu_1^\mathrm{T}+ 2(1-p)(\sigma_0^2+\sigma^2)\mu_0^\mathrm{T} +p(1-p)(d(\sigma_1^2-\sigma_0^2)+\lVert\mu_1\lVert_2^2 - \lVert \mu_0 \lVert_2^2)(\mu_1-\mu_0)^\mathrm{T}\right] \nonumber \\
    & \quad + b^\mathrm{T}\left[\alpha I +p(1-p)(\mu_1-\mu_0)(\mu_1-\mu_0)^\mathrm{T} \right] \\
    & = \gamma_1 \mu_1^\mathrm{T} + \gamma_0 \mu_0^\mathrm{T}.
\end{align}
where 
\begin{equation}
    \alpha \coloneqq p\sigma_1^2 + (1-p)\sigma_0^2 +\sigma^2
    \label{eq:alpha-def}
\end{equation} and
\begin{align}
    \gamma_1 &\coloneqq a\left[ 2p(\sigma_1^2+\sigma^2) +p(1-p)(d(\sigma_1^2-\sigma_0^2)+\lVert\mu_1\lVert_2^2 - \lVert \mu_0 \lVert_2^2)\right]+\frac{\alpha}{\sigma_1^2+\sigma^2} \nonumber\\
    & \qquad +p(1-p)\frac{(\sigma_0^2+\sigma^2)\lVert\mu_1\lVert_2^2 - (\sigma_1^2+\sigma_0^2+2\sigma^2)\mu_1^\mathrm{T}\mu_0 + (\sigma_1^2+\sigma^2)\lVert\mu_0\lVert_2^2}{(\sigma_1^2+\sigma^2)(\sigma_0^2+\sigma^2)}, \\
    \gamma_0 &\coloneqq a\left[ 2(1-p)(\sigma_0^2+\sigma^2) -p(1-p)(d(\sigma_1^2-\sigma_0^2)+\lVert\mu_1\lVert_2^2 - \lVert \mu_0 \lVert_2^2)\right]-\frac{\alpha}{\sigma_0^2+\sigma^2} \nonumber \\
    & \qquad -p(1-p)\frac{(\sigma_0^2+\sigma^2)\lVert\mu_1\lVert_2^2 - (\sigma_1^2+\sigma_0^2+2\sigma^2)\mu_1^\mathrm{T}\mu_0 + (\sigma_1^2+\sigma^2)\lVert\mu_0\lVert_2^2}{(\sigma_1^2+\sigma^2)(\sigma_0^2+\sigma^2)}.
\end{align}
Again making the appropriate substitutions, \eqref{eq:var-x-sig-ccg} reduces to
\begin{align}
    \text{Var}(X^\sigma) = \alpha I +p(1-p)(\mu_1-\mu_0)(\mu_1-\mu_0)^\mathrm{T},
\end{align}
for $\alpha$ defined in \eqref{eq:alpha-def}. Using the Sherman-Morrison formula, we can compute $\text{Var}^{-1}(X^\sigma)$ as follows:
\begin{align}
   \text{Var}^{-1}(X^\sigma) &= \alpha^{-1}I - \frac{\alpha^{-1}p(1-p)(\mu_1-\mu_0)(\mu_1-\mu_0)^\mathrm{T}}{\alpha + p(1-p)\lVert \mu_1-\mu_0\lVert_2^2}.
\end{align}
Therefore,
\begin{align}
    \text{Cov}(\theta^\sigma(X^\sigma),X^\sigma)\text{Var}^{-1}(X^\sigma)\text{Cov}(X^\sigma,\theta^\sigma(X^\sigma)) & = \alpha^{-1}(\gamma_1^2 \lVert \mu_1 \lVert_2^2 + 2\gamma_1\gamma_0\mu_1^\mathrm{T}\mu_0 + \gamma_0^2\lVert \mu_0\lVert_2^2) \nonumber \\
    & \quad - \frac{\alpha^{-1}p(1-p)(\gamma_1 \lVert \mu_1 \lVert_2^2 + (\gamma_0-\gamma_1)\mu_1^\mathrm{T}\mu_0 - \gamma_0\lVert \mu_0\lVert_2^2)^2}{\alpha+p(1-p)\lVert \mu_1-\mu_0\lVert_2^2},
\end{align}
and hence
\begin{align}
    \text{Var}(\theta^\sigma(X^\sigma))-\text{Cov}(\theta^\sigma(X^\sigma),X^\sigma)\text{Var}^{-1}(X^\sigma)\text{Cov}(X^\sigma,\theta^\sigma(X^\sigma)) = \frac{(\sigma_1^2-\sigma_0^2)^2(q_1 + q_2\sigma^2+q_3\sigma^4+2d\sigma^6)}{4(r_1+r_2\sigma^2+r_3\sigma^4+r_4\sigma^6+r_5\sigma^8+\sigma^{10})},
\end{align}
where
\begin{align*}
    q_1 & = \lVert \mu_1 - \mu_0 \rVert_2^4\left(p^2(1-p)\sigma_1^2 +p(1-p)^2\sigma_0^2 \right) - d^2p^3(\sigma_1^2-\sigma_0^2)^3+2p(1-p)(2+d)\sigma_0^2\sigma_1^2\lVert \mu_1 - \mu_0 \rVert_2^2 \nonumber \\
    & \quad + p^2\left(d(5d-2)\sigma_0^4\sigma_1^2 - 2d(2d+1)\sigma_0^2\sigma_1^4+d(2+d)\sigma_1^6 - 2d(d-1)\sigma_0^6\right) \nonumber \\
    & \quad +p\left(d(d-4)\sigma_0^6 - 2d(d-1)\sigma_0^4\sigma_1^2 + d(2+d) \sigma_0^2\sigma_1^4 \right) + 2d\sigma_0^6, \\
    q_2 & = p(1-p)\lVert \mu_1 - \mu_0 \rVert_2^2\left(\lVert \mu_1 - \mu_0 \rVert_2^2 + 2(2+d)(\sigma_0^2+\sigma_1^2) \right) - p^2d(d-4)(\sigma_1^2-\sigma_0^2)^2 \nonumber \\
    & \quad +p\left(d(d-10)\sigma_0^4-2d(d-4)\sigma_0^2\sigma_1^2+d(2+d)\sigma_1^4 \right) + 6d\sigma_0^4, \\
    q_3 & = 2p(1-p)(2+d)\lVert \mu_1 - \mu_0 \rVert_2^2+6d(p\sigma_1^2+(1-p)\sigma_0^2), \\
    r_1 & = \sigma_0^4\sigma_1^4\left(p(1-p)\lVert \mu_1 - \mu_0 \rVert_2^2+p\sigma_1^2+(1-p)\sigma_0^2 \right), \\
    r_2 & = \sigma_0^2\sigma_1^2 \left(2p(1-p)(\sigma_1^2+\sigma_0^2)\lVert \mu_1 - \mu_0 \rVert_2^2 +2p\sigma_1^4 +2(1-p)\sigma_0^4 + 3\sigma_0^2\sigma_1^2 \right), \\
    r_3 & = p(1-p)(\sigma_0^4 + 4\sigma_0^2\sigma_1^2 + \sigma_1^4)\lVert \mu_1 - \mu_0 \rVert_2^2 + p\sigma_1^6 + 3(1+p)\sigma_0^2\sigma_1^4 + 3(2-p)\sigma_0^4\sigma_1^2+ (1-p)\sigma_0^6, \\
    r_4 & = 2p(1-p)(\sigma_1^2+\sigma_0^2)\lVert \mu_1 - \mu_0 \rVert_2^2 + (2p+1)\sigma_1^4 + (3-2p)\sigma_0^4+6\sigma_0^2\sigma_1^2, \\
    r_5 & = p(1-p)\lVert \mu_1 - \mu_0 \rVert_2^2 + (3-p)\sigma_0^2 + (2+p)\sigma_1^2.
\end{align*}

\subsection{Additional Plots}
\label{sec:additional_plots}
We include several additional plots in this section to illustrate further observations.
In \cref{fig:ccg-noise-density-progression}, we illustrate how the optimal $\theta^\sigma$ for the class-conditional Gaussian setting of \cref{cor:CCG-vector-diag-cov} reduces in curvature from a quadratic to a linear function with increasing $\sigma$. \cref{fig:bsc_pn} shows a plot of the training MSE-based bounds for a linear model as a function of the crossover probability $p_n$ in the BSC setting described in \cref{thm:delta-a-bsc-example}. In \cref{fig:epsilon_a_dimension}, we illustrate how the approximation error for both linear and neural network hypothesis classes varies with the data dimension $d$ in the class-conditional Gaussian setting. Finally, \cref{fig:linear-vs-nn-bounds-dimension} shows training-based bounds for linear and neural network hypothesis classes as a function of the data dimension $d$, evaluated across different hidden layer widths $d_w$ for the neural network. In \cref{fig:xor-sample-3-modes-radius-1,fig:xor-sample-4-modes-radius-1}, we show visualizations of $X$ and the corresponding noisy $X^\sigma$ from our class-conditional mixture dataset for $n_m=3$ and $n_m=4$, respectively, using a smaller radius of 1. The corresponding results for the training MSE– and validation MSE–based lower bounds are presented in \cref{fig:xor_train_r_1,fig:xor_val_r_1}, respectively. Despite the reduced separability in this setting, the results remain qualitatively similar to those observed with radius 2.

\begin{figure}[h]
    \centering
    \begin{subfigure}{0.33\textwidth}
        \centering
        \includegraphics[width=\textwidth,page=2]{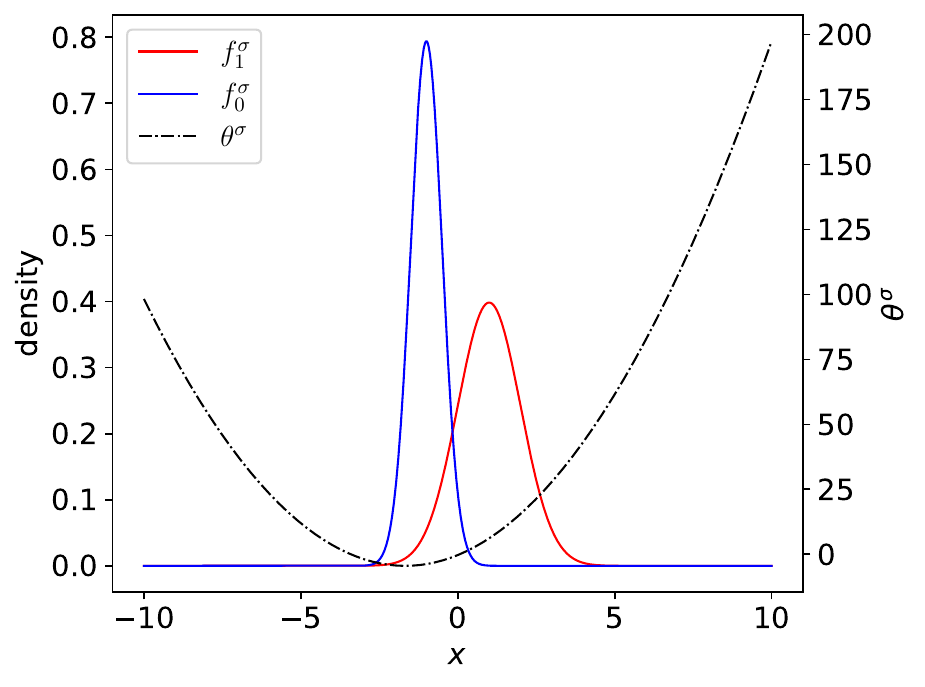}
        \caption{$\sigma = 0.5$}
    \end{subfigure}\hfill\hfill%
    \begin{subfigure}{0.33\textwidth}
        \centering
        \includegraphics[width=\textwidth,page=3]{figures/ccg-plots-journal.pdf}
        \caption{$\sigma = 1$}
    \end{subfigure}\hfill\hfill%
    \begin{subfigure}{0.33\textwidth}
        \centering
    \includegraphics[width=\textwidth,page=6]{figures/ccg-plots-journal.pdf}
        \caption{$\sigma = 4$}
    \end{subfigure}
    \caption{An illustration of how the class-conditional densities $f_i^\sigma$
  of $X^\sigma\vert S=i$, for $i\in\{0,1\}$, and the optimal estimator $\theta^\sigma$ evolve in the class-conditional Gaussian setting of \cref{cor:CCG-vector-diag-cov}, as the noise level $\sigma$ increases. Here, $p=1/4$, $\mu_0=-1$, $\mu_1=1$, $\sigma_0=0.5$, $\sigma_1=1$, and $d=1$. As $\sigma$ increases, the variances of the conditional distributions become more alike, leading $\theta^\sigma$
  to exhibit increasingly linear behavior and ultimately converge to a linear function in the limit.}
    \label{fig:ccg-noise-density-progression}
\end{figure}

\begin{figure}[h]
    \centering
    \includegraphics[width=0.5\linewidth]{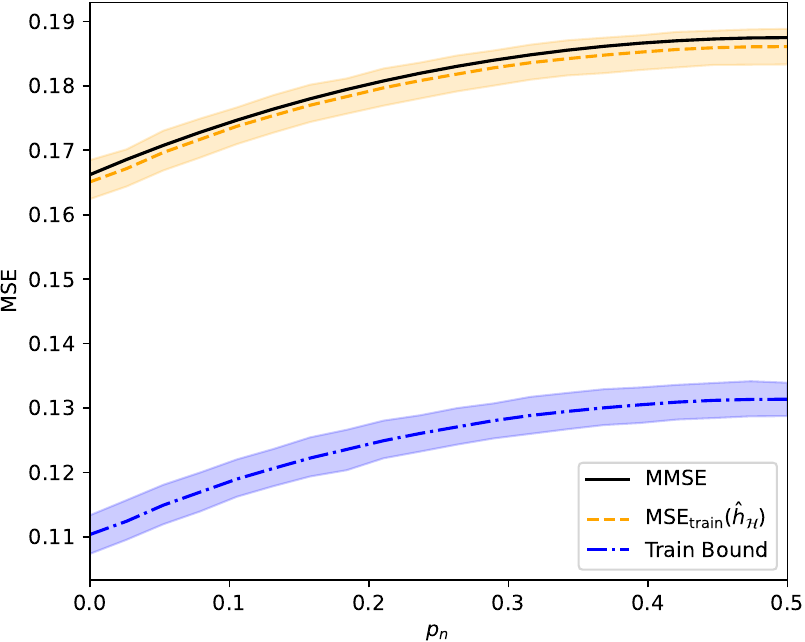}
    \caption{Training MSE-based bounds for a linear adversarial evaluation model as a function of the crossover probability $p_n$
  in the binary symmetric channel setting of \cref{thm:delta-a-bsc-example}. The bounds remain relatively tight across the full range of 
$p_n$ values.}
    \label{fig:bsc_pn}
\end{figure}

\begin{figure}[h]
    \centering
    \begin{subfigure}{0.48\linewidth}
        
        \includegraphics[width=\linewidth]{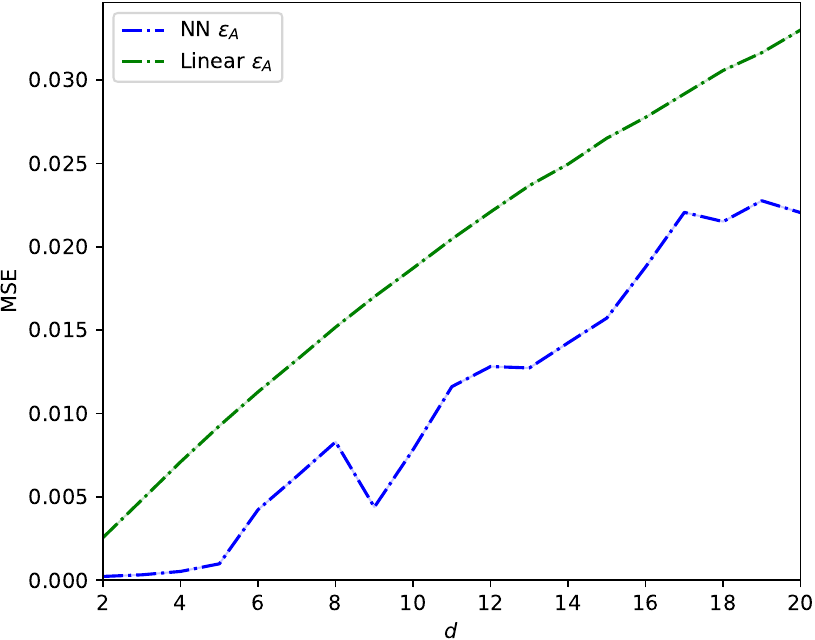}
        \caption{$d_w=10$}
    \end{subfigure}\hfill%
    \begin{subfigure}{0.48\linewidth}
        
        \includegraphics[width=\linewidth]{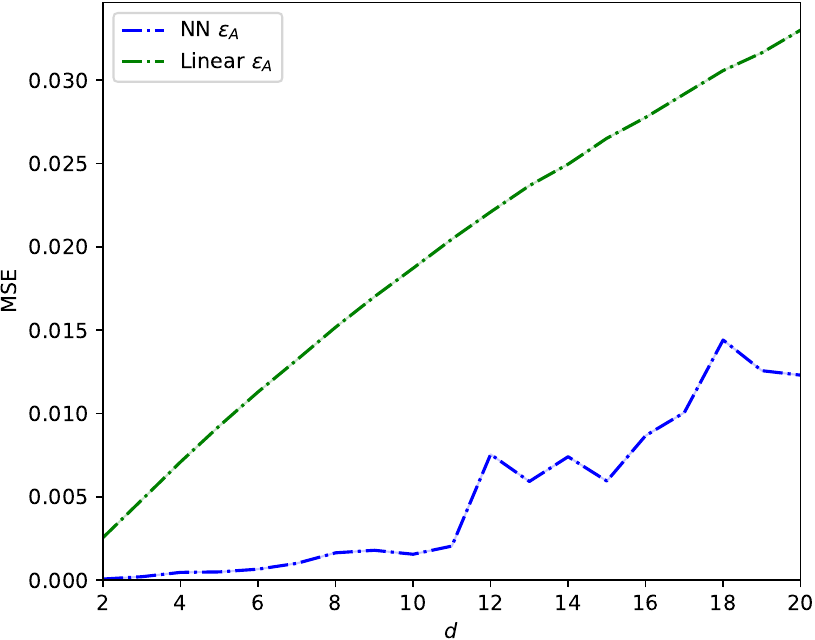}
        \caption{$d_w=20$}
    \end{subfigure}
    \caption{Empirical $\epsilon_A$ for the class-conditional Gaussian dataset for linear models and neural networks. For both hypothesis classes, the approximation error grows with the dimensionality of the dataset but grows more slowly for neural networks.}
    \label{fig:epsilon_a_dimension}
\end{figure}
\begin{figure}[h]
    \centering
    \begin{subfigure}{0.3\linewidth}
    \includegraphics[width=\linewidth]{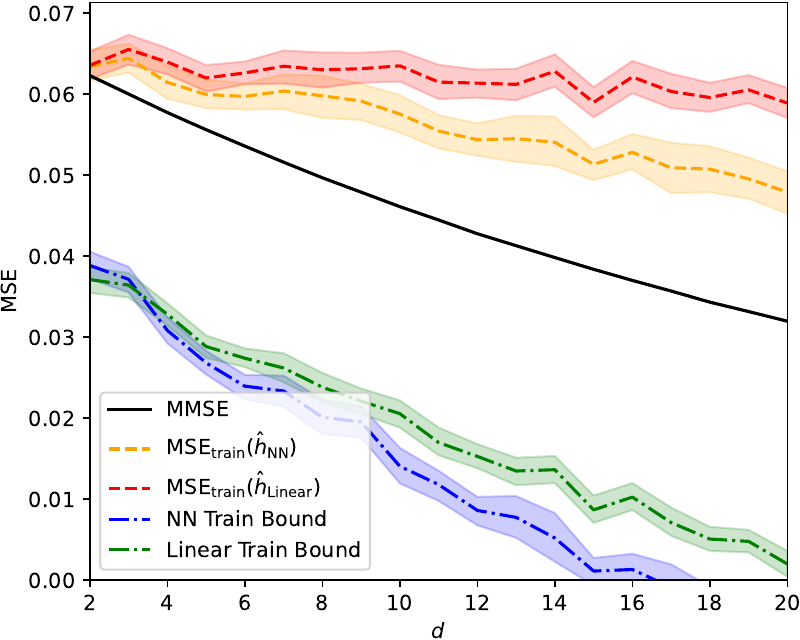}
    \caption{$d_w=2$}
    \label{fig:ccg_h2}
    \end{subfigure}\hfill%
    \begin{subfigure}{0.3\linewidth}
    \includegraphics[width=\linewidth]{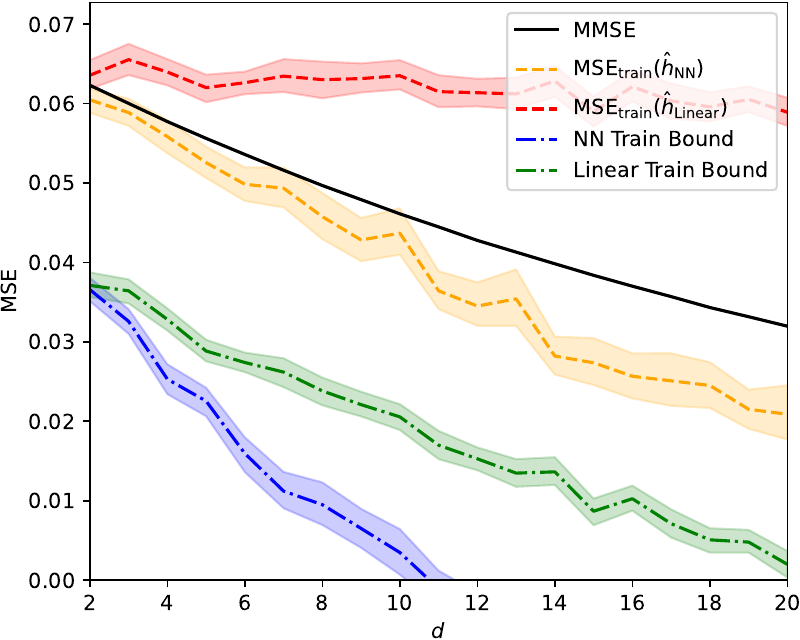}
    \caption{$d_w=5$}
    \label{fig:ccg_h5}
    \end{subfigure}\hfill%
    \begin{subfigure}{0.3\linewidth}
    \includegraphics[width=\linewidth]{figures/ccg_d_nn_vs_linear_h_10.pdf}
    \caption{$d_w=10$}
    \label{fig:ccg_h10}
    \end{subfigure}
    \caption{Training-based bounds for linear and neural network hypothesis classes as a function of the data dimension $d$, shown for various hidden layer widths $d_w$ of the neural network for the class-conditional Gaussian setting of \cref{cor:CCG-vector-diag-cov}. The bounds for the neural network hypothesis class degrade as the hidden layer width $d_w$ increases, primarily due to overfitting at higher data dimensions $d$. {The bounds are evaluated using $n=1$k samples of $(X^\sigma,S)$ and the parameters $p=1/4, \sigma_0^2 = 1, \sigma_1^2 = 3, \sigma=1$, and 
    $\lVert \mu_1 - \mu_0 \rVert^2 = 4$.} Although increasing the hidden layer width $d_w$ reduces the approximation error $\epsilon_A$, as shown in \cref{fig:epsilon_a_dimension}, the resulting overfitting leads to a much lower training MSE and ultimately looser lower bounds.}
    \label{fig:linear-vs-nn-bounds-dimension}
\end{figure}

\begin{figure}
    \centering
    \begin{subfigure}{0.48\linewidth}
    \centering
    \includegraphics[width=\linewidth]{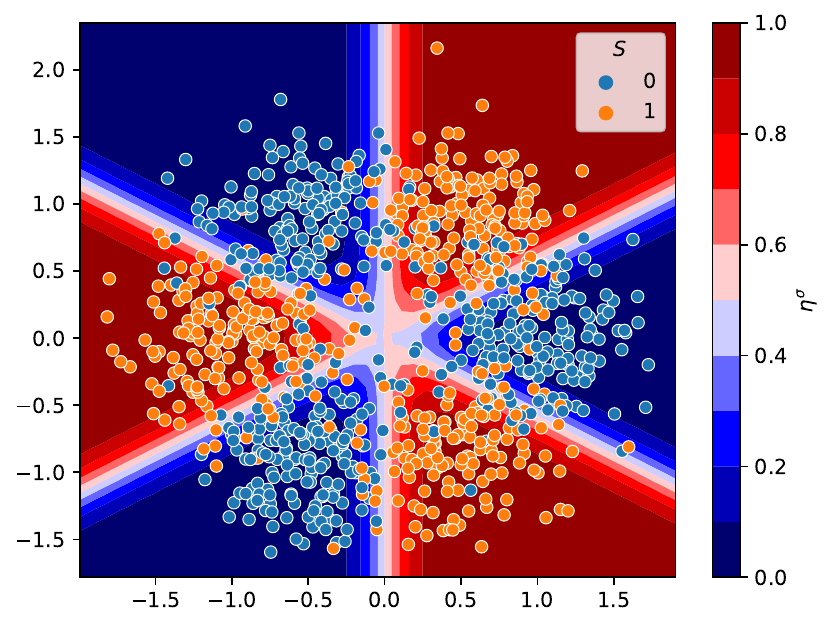}
    \caption{$X$}
    \label{fig:xor-sample-3-no-noise-radius-1}
    \end{subfigure}\hfill\hfill%
    \begin{subfigure}{0.48\linewidth}
    \centering
    \includegraphics[width=\linewidth]{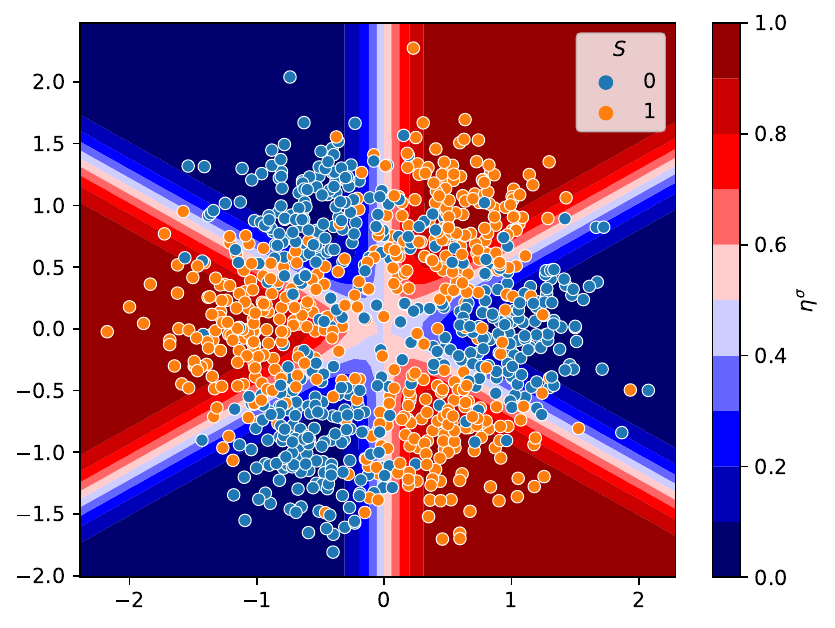}
    \caption{$X^\sigma$}
    \label{fig:xor-sample-3-noise-radius-1}
    \end{subfigure}
    \caption{A sample of (a) $X$ and (b) $X^\sigma$ with $\sigma = 0.5$ from our class-conditional mixture dataset with 3 modes per class, where the modes are placed on a circle of radius 1.
    The heat maps (i.e., contours) of $\eta^\sigma$ (the color legend is shown to the right of the figure) are derived from the true statistics of the data.}
    \label{fig:xor-sample-3-modes-radius-1}
\end{figure}

\begin{figure}
    \centering
    \begin{subfigure}{0.48\linewidth}
    \centering
    \includegraphics[width=\linewidth]{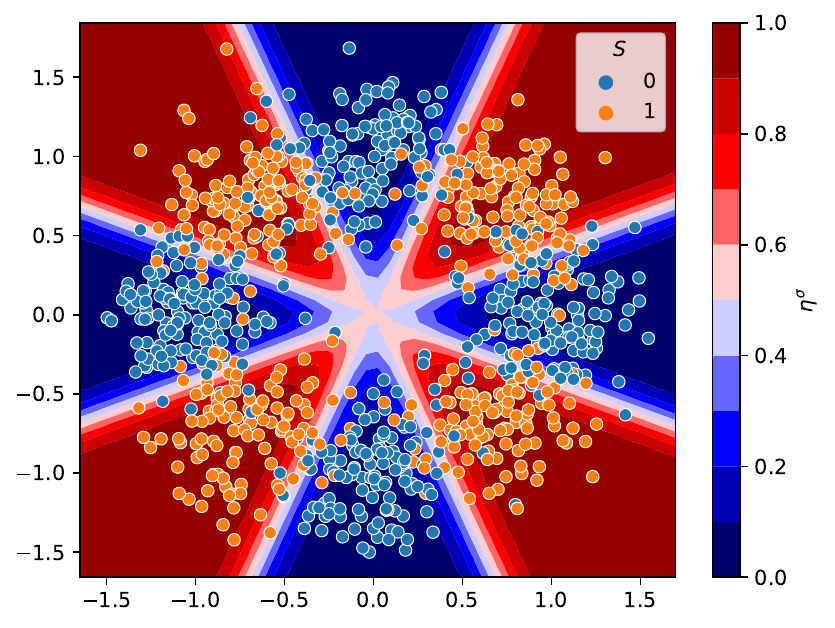}
    \caption{$X$}
    \label{fig:xor-sample-4-no-noise-radius-1}
    \end{subfigure}\hfill\hfill%
    \begin{subfigure}{0.48\linewidth}
    \centering
    \includegraphics[width=\linewidth]{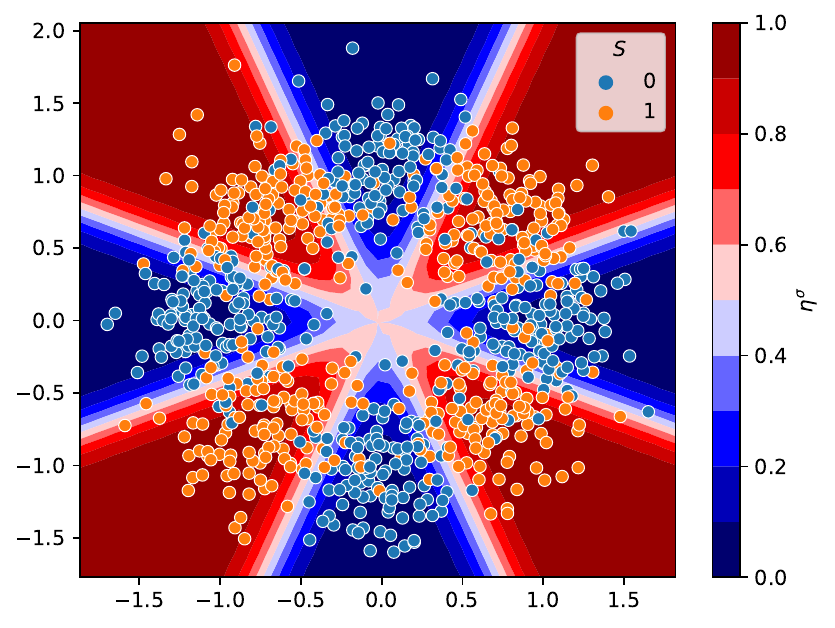}
    \caption{$X^\sigma$}
    \label{fig:xor-sample-4-noise-radius-1}
    \end{subfigure}
    \caption{A sample of (a) $X$ and (b) $X^\sigma$ with $\sigma = 0.5$ from our class-conditional mixture dataset with 4 modes per class, where the modes are placed on a circle of radius 1.
    The heat maps (i.e., contours) of $\eta^\sigma$ (the color legend is shown to the right of the figure) are derived from the true statistics of the data.}
    \label{fig:xor-sample-4-modes-radius-1}
\end{figure}
\begin{figure}[h]
    \centering
    \begin{subfigure}[t]{0.48\linewidth}
    \includegraphics[width=\linewidth]{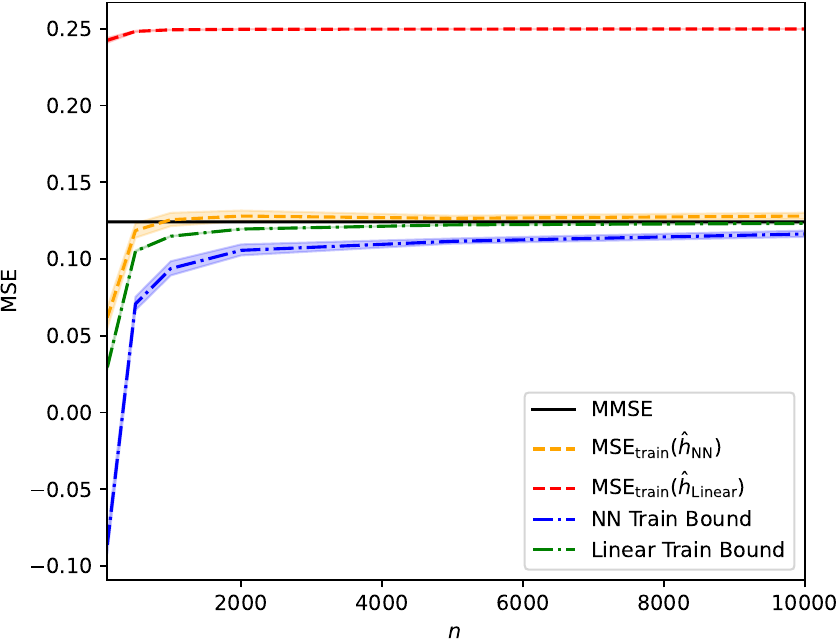}
    \caption{3 modes per class.}
    \label{fig:xor_train_3_r_1}
    \end{subfigure}\hfill\hfill%
    \begin{subfigure}[t]{0.48\linewidth}
    \includegraphics[width=\linewidth]{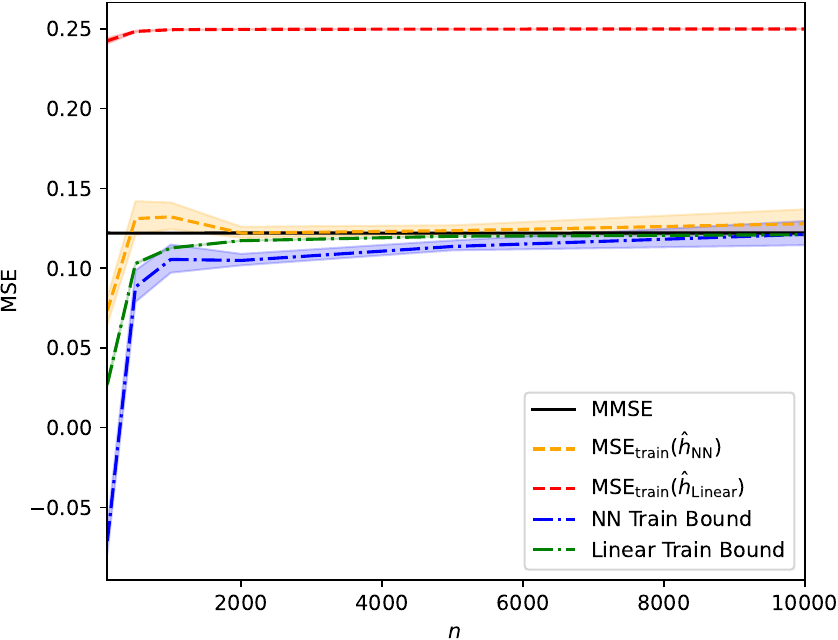}
    \caption{4 modes per class.}
    \label{fig:xor_train_4_r_1}
    \end{subfigure}
    \caption{Training MSE-based lower bounds for the class-conditional mixture of Gaussians dataset for radius 1, $\sigma=0.5$, and $d_w = 10$. For both settings ($n_m=3$ and $n_m=4$), linear models yield tighter bounds for small $n$, despite having a significantly larger approximation error $\epsilon_A$. Neural networks, while initially affected by overfitting, exhibit much smaller approximation error and achieve competitive bounds as $n$ increases.}
    \label{fig:xor_train_r_1}
\end{figure}
\begin{figure}[h]
    \centering
    \begin{subfigure}[t]{0.48\linewidth}
    \includegraphics[width=\linewidth]{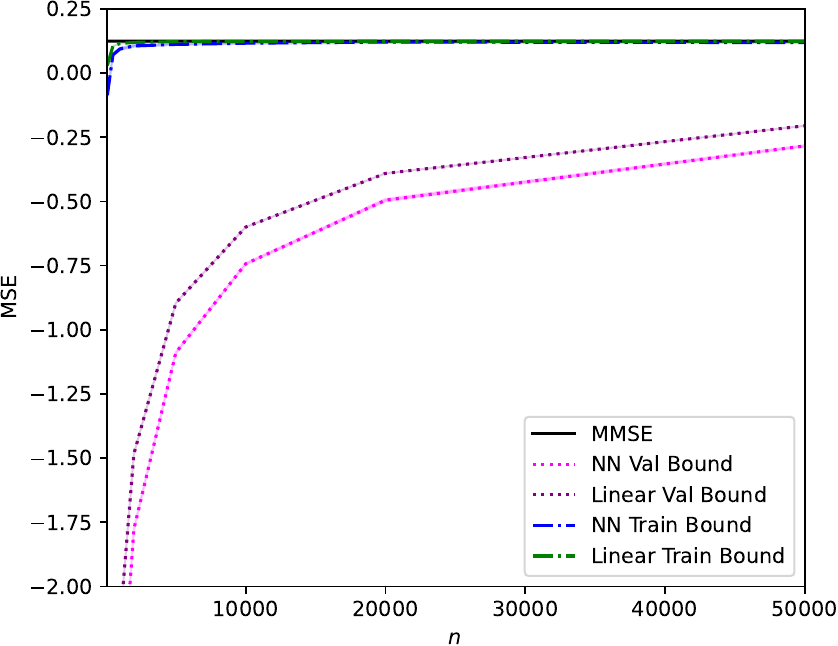}
    \caption{3 modes per class.}
    \label{fig:xor_val_3_r_1}
    \end{subfigure}\hfill\hfill%
    \begin{subfigure}[t]{0.48\linewidth}
    \includegraphics[width=\linewidth]{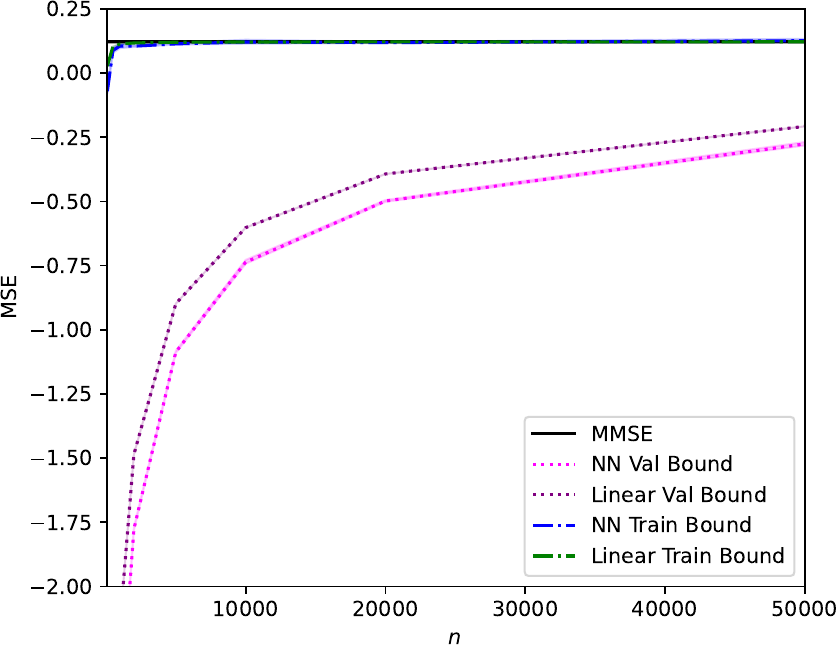}
    \caption{4 modes per class.}
    \label{fig:xor_val_4_r_1}
    \end{subfigure}
    \caption{Comparison of training MSE- and validation MSE-based lower bounds for the class-conditional mixture of Gaussians dataset with radius 1, $\sigma=0.5$ and the neural network hidden-layer width $d_w = 10$. Results are shown for both linear models and shallow neural networks. Despite using as many as $n=50$k training and $m=1$k validation samples,  the validation bounds remain vacuous due to the looseness of the generalization error bound on $\epsilon_G$ in \eqref{eq:eps-g-bound}, which accounts for the complexity of the hypothesis class.}
    \label{fig:xor_val_r_1}
\end{figure}

\subsection{Experimental Details}
\label{appendix:experimental_setup}
See \cref{tab:nn,tab:logistic} for hyperparameter details for the training of each model type.

\begin{table}[h]
\parbox{.45\linewidth}{
\centering
    \begin{tabular}{c|c}
        \hline
        Parameter & Value \\
        \hline
        Output Activation & Sigmoid \\ 
        \hline
        Optimizer & AdamW \\ 
        $\beta_1$ & 0.9 \\
        $\beta_2$ & 0.999 \\
        Weight Decay & 0.01 \\
        Learning Rate & 0.1 \\ 
        LR Scheduler & cosine \\ 
        Epochs & 5000 \\ 
        \hline
    \end{tabular}
    \caption{Hyperparameters for logistic model training.}
    \label{tab:logistic}
}
\hfill
\parbox{.45\linewidth}{
\centering
    \begin{tabular}{c|c}
        \hline
        Parameter & Value \\
        \hline
        Hidden Activation & ReLU \\ 
        Output Activation & Sigmoid \\ 
        $d_w$ & \{ 2, 5, 10, 20 \} \\
        \hline
        Optimizer & AdamW \\ 
        $\beta_1$ & 0.9 \\
        $\beta_2$ & 0.999 \\
        Weight Decay & 0.01 \\
        Learning Rate & 0.01 \\ 
        LR Scheduler & cosine \\ 
        CCG Epochs & 5000 \\ 
        XOR Epochs & 10000 \\ 
        \hline
    \end{tabular}
    \caption{Hyperparameters for neural network training.}
    \label{tab:nn}
}
\end{table}

\end{document}